\documentclass[10pt]{article}

\usepackage{amsmath,amsthm,verbatim,amssymb,amsfonts,amscd, graphicx, enumitem, array, longtable, booktabs, bm, wrapfig, natbib}
\usepackage{mathrsfs}
\usepackage{graphics}
\usepackage{centernot}
\usepackage{authblk}
\usepackage[T1]{fontenc}
\usepackage[hidelinks]{hyperref}
\newcommand{\R}{\mathbb R}
\newcommand{\E}{\mathbb E}
\newcommand{\Normal}{\mathcal N}
\newcommand{\Tr}{\operatorname{Tr}}
\newcommand{\Unif}{\operatorname{Unif}}
\newcommand{\Var}{\operatorname{Var}}

\newcommand{\dd}{\,\mathrm d}
\newcommand{\1}{\mathbf 1}

\newtheorem{theorem}{Theorem}
\newtheorem{proposition}{Proposition}
\newtheorem{lemma}{Lemma}
\newtheorem{Result}{Result}

\newtheorem{assumption}{Assumption}
\theoremstyle{definition}

\theoremstyle{remark}
\newtheorem*{remark}{Remark}

\title{In-context Learning of Single-index Targets:\\ Comparing Kernel and Feature Learners}
\author[1,3]{Haotian Gu\thanks{These authors contributed equally to this work.}}
\author[1,2]{Yizhou Xu$^\star$}
\author[1]{Lenka Zdeborová}
\affil[1]{\small Statistical Physics of Computation Laboratory, \'Ecole Polytechnique F\'ed\'erale de Lausanne (EPFL)}
\affil[2]{\small Information, Learning and Physics Laboratory, \'Ecole Polytechnique F\'ed\'erale de Lausanne (EPFL)}
\affil[3]{\small University of Chinese Academy of Sciences}
\date{}

\begin{document}
\maketitle
\begin{abstract}
In-context learning (ICL) enables a pretrained model to infer a task from demonstrations without updating its parameters. While much of the existing theory focuses on linear target functions, in this paper we study nonlinear cases by comparing two one-layer attention architectures on the same family of single-index tasks. A kernel learner first maps inputs through a fixed nonlinear feature map and then applies linear attention, whereas a feature learner applies attention to the original input, followed by a learned nonlinear readout. We derive predictions for their memorization and generalization errors using the replica method, retaining the effects of pretraining size, task-pool diversity, and training and inference context lengths. The resulting predictions closely match numerical experiments across a broad range of regimes.  Our analysis yields phase diagrams that characterize when each architecture is advantageous as the amount of pretraining data, task diversity, and context lengths vary. We further identify qualitatively different context-length scalings for the two learners. Together, these results clarify how architectural choices interact with the dataset and govern nonlinear in-context learning.
\end{abstract}

\section{Introduction}
In-context learning (ICL) is the ability of a pretrained model to infer a task from examples provided in its input context and make predictions without updating its parameters. This behavior is one of the most distinctive capabilities of modern Transformers \citep{brown2020language,dong2024survey}, and has motivated a growing theoretical literature that studies ICL in controlled problems. Early work showed that Transformers trained on families of synthetic tasks can learn new functions from demonstrations \citep{garg2022can}, and connected it to familiar learning algorithms such as gradient descent, least squares, and ridge regression \citep{akyurek2022learning,von2023transformers,bai2023transformers,mahankali2024one}. These results established that attention-based models can implement nontrivial learning procedures within their forward pass.

A more recent line of work asks how such in-context algorithms are acquired during pretraining. For linear attention, whether a model memorizes the pretraining tasks or generalizes to unseen tasks depends jointly on the number of pretraining prompts, task diversity, and the number of demonstrations available at inference time \citep{raventos2023pretraining,nguyen2025differential,goddard2025can}. Recent high-dimensional analyses make these dependencies quantitative and characterize how the shared task structure affects in-context generalization \citep{lu2025asymptotic,wu2024many,takanami2025learning,kwon2026effect,letey2026pretrain}. Other works also study the learning dynamics of ICL \citep{singh2023transient,zhang2025training} and the influence of various factors including data structures, distribution shift, hyperparameters and so on \citep{dragutinovic2025softmax,takanami2025learning,kwon2026effect,letey2026pretrain,bordelon2026theory}.

Most of this theory, however, focuses on linear targets. Nonlinear tasks introduce an additional question: beyond aggregating demonstrations in context, the learner must determine how the nonlinearity on the input should be learned. Single-index models provide a natural setting for this question, and create a classical separation between fixed-kernel methods and feature-learning methods \citep{ghorbani2020neural,bietti2022learning,ba2022high,dandi2024two}.
ICL introduces a further distinction: the relevant direction changes between tasks and must be extracted from the demonstrations without updating the parameters. Recent work has shown that pretrained Transformers can exploit low-dimensional structure to learn single-index targets efficiently in context \citep{oko2024pretrained}, and that nonlinear Transformers can perform inference-time feature learning on single-index tasks \citep{nishikawa2025nonlinear}. 

Our goal is thus to compare fixed-kernel methods and feature-learning methods across various amounts of pretraining data, task diversities, and context lengths.
We consider two learners that use the same one-layer attention but place the nonlinear transformation on opposite sides of attention. The \emph{kernel learner} first maps every input through a fixed nonlinear feature map and then performs linear attention in the resulting feature space. In contrast, the \emph{feature learner} performs attention directly on the original inputs and applies a learned nonlinear readout after attention. 

We study the two architectures on the same learning problem. Pretraining prompts are generated from a finite pool of tasks, allowing us to separately measure performance on tasks that appeared during pretraining and on fresh tasks drawn from the underlying task distribution. We refer to these quantities as the \emph{memorization error} (on seen tasks) and the \emph{generalization error} (on unseen tasks), respectively. Varying the task-pool size therefore lets us study how architectural choices interact with the memorization--generalization tradeoff emphasized in previous work \citep{raventos2023pretraining,lu2025asymptotic,goddard2025can,takanami2025learning,farne2026rules}.

For analytical tractability, we instantiate the kernels using spherical harmonics and analyze both learners using the non-rigorous yet well-established replica method \citep{mezard1987spin,zdeborova2016statistical}. The comparison is defined for common values of the input dimension \(d\), feature dimension \(D\), pretraining size \(N\), task-pool size \(N_0\), and context lengths \(L,\widetilde L\). We validate these predictions numerically over a broad range of parameters.

Our main contributions are threefold.

\begin{itemize}[leftmargin=1.5em]
\vspace{-2mm}
    \item \textbf{Comparison of kernel and feature learners on the same dataset.}
    We formulate the two learners on a common learning problem, with the same task pool, pretraining prompts, and inference prompts, and characterize their memorization and generalization performance as functions of \(d,D,N_0,N,L,\widetilde L\). For each learner, we derive replica-based predictions and the resulting formulas accurately predict the performance of both learners over a broad range of regimes.

    \item \textbf{Phase diagrams for when each architecture is advantageous.}
    We map out how the relative performance of the two learners depends jointly on problem parameters. The feature learner benefits from its lower-dimensional representation and typically reaches low error with fewer pretraining prompts, while the kernel learner can achieve lower memorization error when the task pool is small and pretraining is sufficient. We further identify regimes in which context-length mismatch changes the relative advantage of the two architectures.

    \item \textbf{A theoretical characterization of context-length scaling.}
    In the large-pretraining and large-task-diversity limit, we prove explicit context-length scaling. For a kernel learner containing degree-\(k\) spherical harmonics, recovering that component requires a context length of order \(d^k\), which produces a stage-wise learning curve. By contrast, the feature learner exhibits a single transition at \(L=\Theta(d)\) and can exploit all Hermite components of the teacher at the same time. We also rigorously characterize the behavior when the training context length and the test contest length are different.
\end{itemize}

\section{Settings}
\label{sec:settings}
\subsection{Task}
We consider an ICL problem in which the training dataset consists of $N$ prompts\\ $\mathcal D=\left\{(x_0^\mu,y_0^\mu,x_1^\mu,y_1^\mu,\ldots,x_L^\mu,y_L^\mu)\right\}_{\mu=1}^N$. Each prompt contains $L$ demonstrations $\{(x_\ell^\mu,y_\ell^\mu)\}_{\ell=1}^L$ together with a query-label pair $(x_0^\mu,y_0^\mu)$. At inference time, the learner observes the demonstrations and the query $x_0$ and aims to predict the corresponding label $y_0$.

The inputs are sampled independently and uniformly on the sphere of radius $\sqrt d$: $x_\ell^\mu\overset{\mathrm{i.i.d.}}{\sim}\Unif\!\left(\mathbb S^{d-1}(\sqrt d)\right)$ for $\ell=0,1,\ldots,L$ and $\mu=1,2,\ldots,N$. Each prompt is associated with a latent task vector drawn from a finite task pool. Specifically, we first generate $\mathcal A_0=\{a^k\}_{k=1}^{N_0}$ with $a^k\overset{\mathrm{i.i.d.}}{\sim}
\Unif\!\left(\mathbb S^{d-1}(\sqrt d)\right)$ and then independently sample a task vector $a^\mu:=a^{\tau^\mu}$ with $\tau^\mu\overset{\mathrm{i.i.d.}}{\sim}\Unif(\{1,\ldots,N_0\})$ for each training prompt $\mu$. Conditional on $a^\mu$, the labels are generated by the noisy single-index model
\begin{equation}
y_\ell^\mu=\sigma_\star\!\left(\frac{(a^\mu)^\top x_\ell^\mu}{\sqrt d}\right)+\epsilon_\ell^\mu,
\qquad
\epsilon_\ell^\mu\overset{\mathrm{i.i.d.}}{\sim}\mathcal N(0,\Delta),
\label{eq:teacher-model}
\end{equation}
for $\ell=0,1,\ldots,L$.
Throughout the paper, $\sigma_\star:\mathbb R\to\mathbb R$ is a fixed nonlinear teacher function satisfying $\mathbb E[\sigma_\star(G)]=0$, $\mathbb E[\sigma_\star(G)^2]=1$ and $\mu_1:=\mathbb E[\sigma_\star(G)G]
\neq 0$, where the expectation is over $G\sim\mathcal N(0,1)$.
While the first two assumptions are imposed for simplicity, $\mu_1\neq0$ is essential. It guarantees a nonvanishing linear component of the teacher and places the problem in the $\mathcal O(d^2)$ learning regime \citep{nishikawa2025nonlinear}.

We evaluate the trained learner on an inference prompt containing $\widetilde L$ demonstrations: $\{(x_\ell,y_\ell)\}_{\ell=1}^{\widetilde L}$ and a query $x_0$ associated with a task vector $a^\star$. To separate the learner performance from irreducible label noise, we use noiseless labels at inference time: $y_\ell=\sigma_\star\!\left(\frac{(a^\star)^\top x_\ell}{\sqrt d}\right)$ for $\ell=0,1,\ldots,\widetilde L$, where $x_\ell\overset{\mathrm{i.i.d.}}{\sim}\Unif\!\left(\mathbb S^{d-1}(\sqrt d)\right)$.
Given the demonstrations and the query, let $\widehat y$ denote the learner prediction for $y_0$.

We distinguish two notions of test performance according to whether the inference task has appeared in the training task pool. We define the memorization and the generalization error by
\begin{equation}
\mathcal E_M:=\mathbb E_{a^\star\sim\Unif(\mathcal A_0)}
\mathbb E_{x_{0:\widetilde L}}
\left[(y_0-\widehat y)^2\right],\ \mathcal E_G:=\mathbb E_{a^\star\sim\Unif(\mathbb S^{d-1}(\sqrt d))}\mathbb E_{x_{0:\widetilde L}}
\left[(y_0-\widehat y)^2\right].
\label{eq:memorization-generalization-error}
\end{equation}
For memorization, the test task is sampled uniformly from the same pool $\mathcal A_0$,
whereas for generalization, a fresh task is sampled uniformly from the sphere.
Here $x_{0:\widetilde L}:=(x_0,x_1,\ldots,x_{\widetilde L})$, and the expectation is taken over a fresh inference prompt, conditional on the trained learner and, for $\mathcal E_M$, on the task pool $\mathcal A_0$.
Related works have also studied in-distribution and out-of-distribution task generalization by restricting the task vectors to lower-dimensional subspaces \citep{takanami2025learning,lu2025asymptotic}. We do not consider the distribution shift here because this is not our main focus.

The above task model is closely related to existing models of ICL \citep{takanami2025learning,lu2025asymptotic,oko2024pretrained,nishikawa2025nonlinear}. In particular, \cite{oko2024pretrained,nishikawa2025nonlinear} characterize the sample complexity required to learn such task families. Our goal is instead to characterize and compare the memorization and generalization errors of the two classes of learners introduced below. A schematic comparison is shown in Figure~\ref{fig:kernel-vs-feature}.

\begin{figure}[t]
\vspace{-10mm}
\centering
\includegraphics[width=\textwidth]{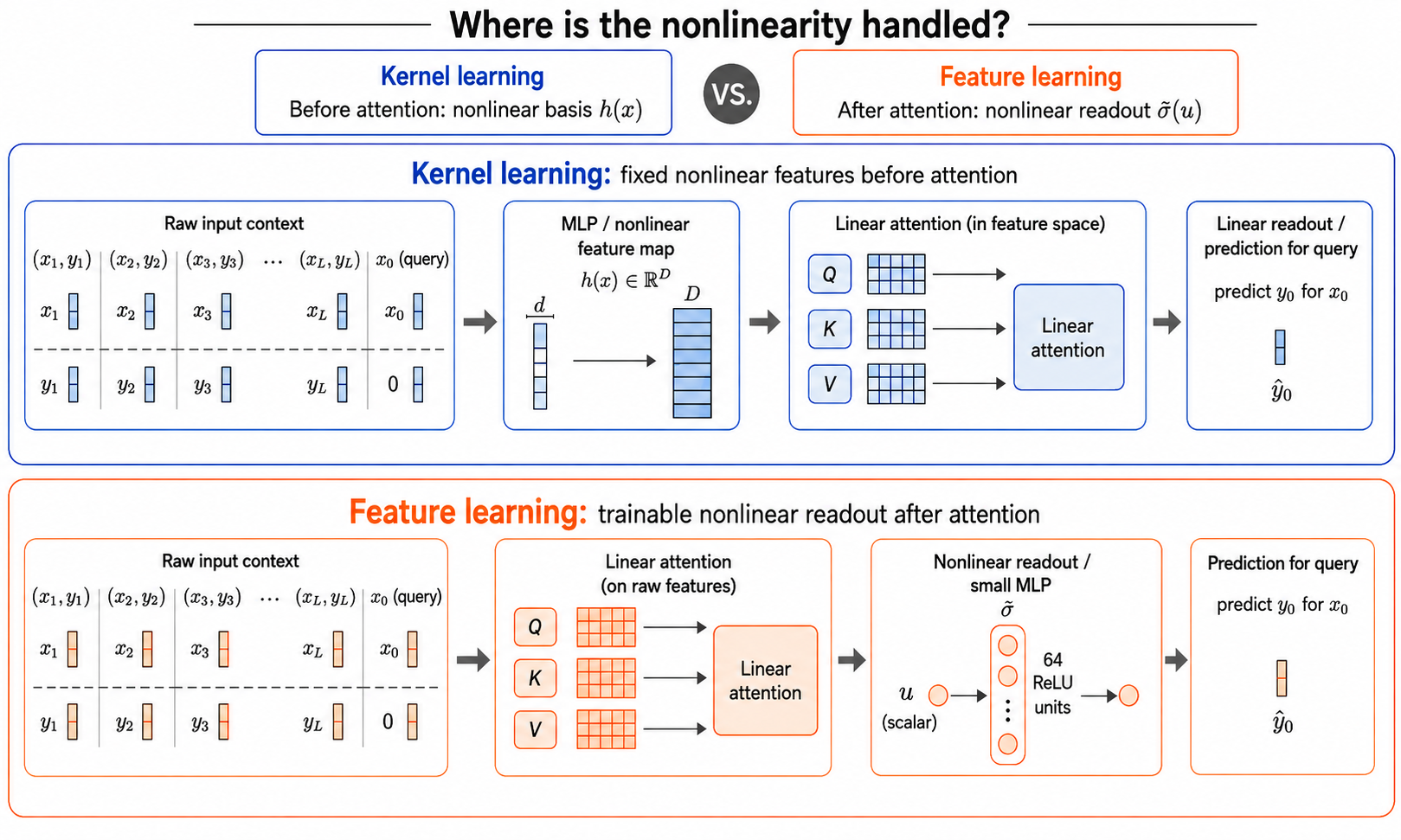}
\vspace{-6mm}
\caption{The kernel and feature learners. The kernel learner applies a fixed map $h:\mathbb R^d\to\mathbb R^D$ before self-attention, whereas the feature learner applies a trainable readout $\widehat\sigma$ after self-attention.}
\label{fig:kernel-vs-feature}
\vspace{-3mm}
\end{figure}

\subsection{Kernel learner}
We first consider the \emph{kernel learner}, which applies a fixed nonlinear feature map to each input before a single self-attention layer. Let $h:\mathbb R^d\to\mathbb R^D$
be a fixed feature map. For the $\mu$-th training prompt, we form the embedded prompt
\begin{equation}
E^\mu=
\begin{bmatrix}
h(x_1^\mu) & \cdots & h(x_L^\mu) & h(x_0^\mu)\\
y_1^\mu & \cdots & y_L^\mu & 0
\end{bmatrix}
\in\mathbb R^{(D+1)\times(L+1)},
\label{eq:kernel-embedded-prompt}
\end{equation}
where the final zero corresponds to the masked query label.
We apply a single linear self-attention layer,
\begin{equation}
\mathrm{Attn}(E^\mu)=E^\mu+W^V E^\mu
\frac{(W^K E^\mu)^\top W^Q E^\mu}{L},
\label{eq:linear-self-attention}
\end{equation}
and take the bottom-right entry as the prediction: $\widehat y^\mu=\mathrm{Attn}(E^\mu)_{D+1,L+1}$.
Following the standard parameterization used in theoretical analyses of self-attention \citep{takanami2025learning,lu2025asymptotic}, we restrict the value and key-query matrices to satisfy
\begin{equation}
W^V=\begin{bmatrix}
* & *\\
0 & 1
\end{bmatrix},
\qquad
(W^K)^\top W^Q=\begin{bmatrix}
W^\top & *\\
0 & *
\end{bmatrix},
\label{eq:choice-attention}
\end{equation}
where $W\in\mathbb R^{D\times D}$ is the trainable attention matrix and the blocks denoted by $*$ do not affect the predicted query label. Under this parameterization, the learner output reduces to
\begin{equation}
\widehat y^\mu=\Tr\!\left(W(H^{\mathrm{K}})^\top\right),\ \text{where}\ H^{\mathrm{K}}:=\frac{1}{L}h(x_0^\mu)\left(\sum_{\ell=1}^Ly_\ell^\mu h(x_\ell^\mu)\right)^\top\in\mathbb R^{D\times D}
\label{eq:H^K}
\end{equation}
Thus, after the fixed embedding $h$, training the attention layer is equivalent to a linear regression problem in the $D^2$ entries of $H_\mu^{\mathrm{K}}$.
We train $W$ by empirical risk minimization,
\begin{equation}
\widehat W_{\mathrm K}=\arg\min_{W\in\mathbb R^{D\times D}}
\left\{\sum_{\mu=1}^N\left[\Tr\!\left(
W(H_\mu^{\mathrm K})^\top\right)-y_0^\mu\right]^2+\lambda N_0\|W\|_F^2\right\},
\label{eq:kernel-erm}
\end{equation}
where $\lambda\ge0$ denotes the weight decay strength. We denote by $\mathcal E_M^{\mathrm K}$ and $\mathcal E_G^{\mathrm K}$ the memorization and generalization errors defined in
\eqref{eq:memorization-generalization-error}, respectively, when the prediction is produced by $\widehat W_{\mathrm K}$.

For analytical tractability, we specialize the fixed feature map $h$ to a finite collection of spherical harmonics. For each degree $k\ge1$, let $h_k(x)\in\mathbb R^{D_{k,d}}$ denote an orthonormal basis of the degree-$k$ spherical harmonics on $\mathbb S^{d-1}(\sqrt d)$, where $D_{k,d}=\frac{2k+d-2}{k}\binom{k+d-3}{k-1}$. Let $\mathcal K\subset\mathbb Z_{\ge1}$ be a fixed finite set of harmonic degrees. We choose $h(x)=\bigoplus_{k\in\mathcal K} h_k(x)\in\mathbb R^D$, where $D=D(d):=\sum_{k\in\mathcal K}D_{k,d}$ and $\bigoplus$ denotes vertical concatenation. 
This spherical-harmonic construction should be viewed as an analytically convenient realization of a fixed nonlinear feature map. More practical embeddings, such as random features \citep{hu2024asymptotics,aguirre2025random,defilippis2024dimension}, lead to closely related architectures. We leave the analysis of such random-feature learners to future work.

\subsection{Feature learner}
We next consider the \emph{feature learner}, which reverses the order of the nonlinear transformation and the attention module. 
For the $\mu$-th training prompt, we form
\begin{equation}
E^\mu=\begin{bmatrix}
x_1^\mu & \cdots & x_L^\mu & x_0^\mu\\
y_1^\mu & \cdots & y_L^\mu & 0
\end{bmatrix}
\in\mathbb R^{(d+1)\times(L+1)}.
\label{eq:feature-prompt}
\end{equation}
We apply the same single linear self-attention layer as in \eqref{eq:linear-self-attention} and impose the parameterization
\eqref{eq:choice-attention}, now with $W\in\mathbb R^{d\times d}$.
The bottom-right entry of the attention output is then
\begin{equation}
u_W^\mu:=\mathrm{Attn}(E^\mu)_{d+1,L+1}=\frac{1}{L}
\sum_{\ell=1}^Ly_\ell^\mu\,(x_0^\mu)^\top W x_\ell^\mu.
\label{eq:feature-attention-output-expanded}
\end{equation}
We equivalently have
\begin{equation}
u_W^\mu=\Tr\!\left(W(H_\mu^{\mathrm F})^\top\right),\ \text{where}\ H^{\mathrm F}:=\frac{1}{L}x_0^\mu\left(\sum_{\ell=1}^Ly_\ell^\mu x_\ell^\mu\right)^\top\in\mathbb R^{d\times d}.
\label{eq:feature-attention-output}
\end{equation}
The final prediction is obtained by applying a scalar nonlinear readout $\widehat y^\mu=\sigma(u_W^\mu)$.

We train the attention module and the nonlinear readout in two stages. In the first stage, we fix a prescribed scalar function $\sigma_0:\mathbb R\to\mathbb R$ and optimize only the attention matrix:
\begin{equation}
\widehat W_{\mathrm F}=\arg\min_{W\in\mathbb R^{d\times d}}\left\{\sum_{\mu=1}^N\left[\sigma_0\!\left(\Tr\!\left(W(H_\mu^{\mathrm F})^\top\right)\right)-y_0^\mu\right]^2+\lambda N_0\|W\|_F^2\right\}.
\label{eq:feature-first-stage}
\end{equation}
Here $\sigma_0$ is fixed throughout the first stage and is not itself trained.

In the second stage, we freeze $\widehat W_{\mathrm F}$ and optimize only the scalar readout. Let $U_{\widehat W_{\mathrm F}}=\Tr\!\left(\widehat W_{\mathrm F}(H^{\mathrm F})^\top\right)$
be the scalar representation produced by $\widehat W_{\mathrm F}$ on an independent fresh prompt of length $L$ drawn from the same pretraining distribution, with its task vector sampled uniformly from the same task pool $\mathcal A_0$. We define
\begin{equation}
\widehat\sigma\in\arg\min_{\sigma}
\mathbb E\left[\left(\sigma(U_{\widehat W_{\mathrm F}})-y_0\right)^2\,\middle|\,\widehat W_{\mathrm F},\mathcal A_0\right],
\label{eq:feature-second-stage-population}
\end{equation}
where the expectation is over the fresh prompt, including its task index, inputs, and label noise. The solution of \eqref{eq:feature-second-stage-population} is $\widehat\sigma(u)=\mathbb E\left[y_0\,\middle|\,U_{\widehat W_{\mathrm F}}=u,\,\widehat W_{\mathrm F},\mathcal A_0\right]$.

The population optimization in \eqref{eq:feature-second-stage-population} is an idealization of training a one-dimensional multi-layer perceptron (MLP) on an independent set of fresh prompts after the attention matrix has been learned. More explicitly, in experiments we use $N'$ additional prompts and fit
\begin{equation}
\widehat\sigma_{N'}=\arg\min_{\sigma\in\mathcal S}
\frac{1}{N'}\sum_{\mu=1}^{N'}
\left[\sigma\!\left(\Tr\!\left(\widehat W_{\mathrm F}(H_\mu^{\mathrm F})^\top\right)\right)-y_0^\mu\right]^2,
\label{eq:feature-second-stage-empirical}
\end{equation}
where $\mathcal S$ is chosen to be a one-dimensional MLP. Since this stage involves only scalar regression, we idealize its estimation error away by taking $N'\to\infty$ and $\frac{N'}{N}\to0$.
Consequently, $N+N'=N(1+o(1))$, so the additional samples used to learn the scalar readout form a vanishing fraction of the total number of training prompts and do not change the leading-order sample complexity.

We denote by $\mathcal E_M^{\mathrm F}$ and $\mathcal E_G^{\mathrm F}$ the memorization and generalization errors defined in \eqref{eq:memorization-generalization-error}, respectively, for the learner specified by $(\widehat W_{\mathrm F},\widehat\sigma)$.

\section{Main results}
In this section we present and compare the memorization and generalization errors of the kernel learner and the feature learner. 
The results are derived using the replica method together with a Gaussian-equivalence ansatz, in which the local fields are replaced by Gaussian variables with matching leading-order covariances. Such Gaussian equivalence has been established rigorously for several related models \citep{goldt2022gaussian,hu2022universality,montanari2022universality}. In our ICL setting, however, we use it as an ansatz and validate the resulting predictions numerically.

Let $c_k$ denote the Hermite coefficients of $\sigma_\star$, $\theta(a):=\bigoplus_{k\in\mathcal K} c_k\sqrt{\frac{D}{D_{k,d}}}\,h_k(a)\in\R^D$ and $\nu_{\rm in}:=\sum_{k\in\mathcal{K}}c_k^2$, $\nu_{\rm out}:=\sum_{k\notin\mathcal{K}}c_k^2$. 
Result \ref{res:kernel} gives the memorization error and the generalization error in the limit \(D\to\infty\) with \(\kappa_K:=N_0/D\), \(\alpha_K:=L/D\), \(\widetilde\alpha_K:=\widetilde L/D\), and \(\gamma_K:=N/(DN_0)\) fixed. 
\begin{Result}[kernel learner]
\label{res:kernel}
For $\nu_{\rm in}>0$, let
$(m_K,q_K,\overline q_K,\chi_K,\overline\chi_K,
\widehat m_K,\widehat q_K,\widehat\chi_K)$ solve
Eqs.~\eqref{eq:SE-kernel}.
The distribution of the learned attention matrix is represented by
\begin{equation}
\widehat W_K\overset{\mathrm d}=
\frac1{\widehat\chi_K}
\left[\widehat m_KS_K+
\frac{\sqrt{\nu_{\rm in}\widehat q_K}}{\sqrt{N_0}}Z_K
\left(S_K+\frac{1+\Delta}{\alpha_K}I_D\right)^{1/2}\right]
\left[S_K+\left(\frac{1+\Delta}{\alpha_K}+\frac{\lambda}{\widehat\chi_K}\right)I_D\right]^{-1},
\label{eq:kernel-learned-matrix-main}
\end{equation}
where $S_K:=\frac1{N_0}\sum_{k=1}^{N_0}\theta(a^k)\theta(a^k)^\top$ and $(Z_K)_{ij}\overset{\mathrm{iid}}\sim\Normal(0,1)$.
For $\mathcal P\in\{\mathrm M,\mathrm G\}$, the memorization error and the generalization error can be written as
\begin{equation}
\mathcal E_{\mathcal P}^{\mathrm{kernel}}\approx\nu_{\rm out}+\nu_{\rm in}(1-2m_{K,\mathcal P}+q_{K,\mathcal P})+\frac{\overline q_K}{\widetilde\alpha_K},
\label{eq:kernel-errors-main}
\end{equation}
where $m_{K,\mathrm M}:=m_K,q_{K,\mathrm M}:=q_K$ and
$m_{K,\mathrm G}:=\frac1{D\nu_{\rm in}}\E\Tr(\widehat W_KS_{K,\mathrm{pop}}),q_{K,\mathrm G}:=\frac1{D\nu_{\rm in}}\E\Tr(\widehat W_KS_{K,\mathrm{pop}}\widehat W_K^\top)$ with $S_{K,\mathrm{pop}}:=\E_a[\theta(a)\theta(a)^\top]$.

Moreover, for large $\gamma_K$ we have $\mathcal E_{\mathrm M}^{\mathrm{kernel}}
=f_{\mathrm M}(\alpha_K,\widetilde\alpha_K,S_K)+O\!\left(\frac{D^2}{N}\right)$ and $\mathcal E_{\mathrm G}^{\mathrm{kernel}}
=f_{\mathrm G}
(\alpha_K,\widetilde\alpha_K,S_K,S_{K,\mathrm{pop}})+O\!\left(\frac{D^2}{N}\right)$,
where the functions $f_{\mathcal P}$ are given in Eqs.~\eqref{eq:app-kernel-large-N-memorization}--\eqref{eq:app-kernel-large-N-generalization}.
\end{Result}

Result~\ref{res:kernel} is derived in Appendix~\ref{app:kernel-details}. It is based on the approximation
\begin{equation}
\sigma_\star\left(\frac{a^\top x}{\sqrt d}\right)
\approx\sum_{k\in\mathcal K}
\frac{c_k}{\sqrt{D_{k,d}}}h_k(a)^\top h_k(x)+\xi,
\qquad
\xi\sim\mathcal N(0,\nu_{\rm out}),
\label{eq:kernel-expansion-main}
\end{equation}
where the residual component is modeled as independent Gaussian noise under a Gaussian-equivalence ansatz, as detailed in Appendix~\ref{app:expansion}. 

Result \ref{res:feature} gives the memorization and the generalization error of the feature learner with two-stage training in the limit \(d\to\infty\) with \(\kappa_F:=N_0/d\), \(\alpha_F:=L/d\), \(\widetilde\alpha_F:=\widetilde L/d\), and \(\gamma_F:=N/(dN_0)\) fixed, which generalizes previous work on linear settings \citep{takanami2025learning,lu2025asymptotic}. 
\begin{Result}[feature learner]
\label{res:feature}
Let $m,q,\bar q,\chi_Q,\chi_{\bar Q}$ and their conjugates $\widehat m,\widehat\chi_Q,\widehat\chi_{\bar Q},\widehat q_Q,\widehat q_{\bar Q}$ be the order parameters that solve the state equations in Eq.~\eqref{eq:app-feature-state-equations}.  The distribution of the learned attention matrix is
\begin{equation}
\widehat W_F\overset{\mathrm d}=\left(\widehat mS_F+\frac1{\sqrt{N_0}}Z(\widehat q_QS_F+\widehat q_{\bar Q}I_d)^{1/2}\right)\big((\lambda+\widehat\chi_{\bar Q})I_d+\widehat\chi_QS_F\big)^{-1},
\label{eq:effective-W}
\end{equation}
where $S_F:=\frac1{N_0}\sum_{k=1}^{N_0}a^k(a^k)^\top$ and $Z_{ij}\overset{\mathrm{iid}}\sim\Normal(0,1)$.
The learned stage-II readout is
\begin{equation}
\hat{\sigma}(v)=\E_\psi\!\left[\sigma_\star\!\left(
\frac{\mu_1m}{\mu_1^2q+(1+\Delta)\bar q/\alpha_F}v
+\sqrt{1-\frac{\mu_1^2m^2}{\mu_1^2q+(1+\Delta)\bar q/\alpha_F}}\,\psi\right)\right],
\label{eq:optimal-readout}
\end{equation}
where the expectation is over $\psi\sim\Normal(0,1)$. 
For $\mathcal P\in\{\mathrm M,\mathrm G\}$, the memorization error and the generalization error can be written as
\begin{equation}
\mathcal E_{\mathcal P}^{\mathrm{feature}}\approx\E_{g,z}\left[\left(\sigma_\star(g)-\widehat\sigma\left(C_{\mathcal P}g+\sqrt{V_{\mathcal P}-C_{\mathcal P}^2}\,z\right)\right)^2\right],\qquad
g,z\overset{\rm iid}{\sim}\Normal(0,1),
\label{eq:rs-test-error}
\end{equation}
where we define $C_{\rm M}=\mu_1m,V_{\rm M}=\mu_1^2q+\frac{\bar q}{\widetilde\alpha_F}$ and $C_{\rm G}=\mu_1d^{-1}\E_Z\Tr \widehat W_F,
V_{\rm G}=\left(\mu_1^2+\frac1{\widetilde\alpha_F}\right)\bar q$.

Moreover, for large $\gamma_F$, $\sigma_0(v)=v$ and $\mathcal P\in\{\mathrm M,\mathrm G\}$ we have $\mathcal E_{\mathcal P}^{\mathrm{feature}}=g_{\mathcal P}(\alpha_F,\widetilde\alpha_F,S_F)+O\!\left(\frac{d^2}{N}\right)$, where the function $g_{\mathcal P}$ is given in Eq.~\eqref{eq:app-feature-large-N-errors}.

\end{Result}

The following theorem studies the context scalings. For simplicity we choose $\sigma_0(v)=v$. While it can be derived from Results \ref{res:kernel} and \ref{res:feature}, we also prove it rigorously. See Appendix \ref{app:rigorous}.
\begin{theorem}[Informal]
\label{theo:comparison}
Take $\kappa_F,\kappa_K\to\infty$ after
$\gamma_F,\gamma_K\to\infty$, and set
$\Delta=\lambda=0$ and $\sigma_0(v)=v$.
Let $\alpha=L/d$ and
$\widetilde\alpha=\widetilde L/d$.
For the kernel learner, define
$\eta_{k,L}:=\frac{c_k^2L}{D_{k,d}+c_k^2L}$.
Then, for $\mathcal P\in\{\mathrm M,\mathrm G\}$,
\begin{equation}
\mathcal E_{\mathcal P}^{\mathrm{kernel}}=\nu_{\rm out}+\sum_{k\in\mathcal K}\left[c_k^2(1-\eta_{k,L})^2+\frac{D_{k,d}}{\widetilde L}\eta_{k,L}^2\right]+o(1).
\label{eq:kernel-population}
\end{equation}
For the feature learner, define
$\rho_\alpha:=\frac{c_1}{\sqrt{c_1^2+1/\alpha}}$,
$r_\alpha:=\rho_\alpha^2$ and
$s_{\alpha,\widetilde\alpha}=\frac{\rho_\alpha}{\rho_{\widetilde\alpha}}$.
Then
\begin{equation}
\mathcal E_{\mathcal P}^{\mathrm{feature}}=1-\sum_{k\ge1}c_k^2r_{\widetilde\alpha}^k+
\sum_{\ell\ge0}
\left[
\sum_{m\ge0}
c_{\ell+2m}\rho_\alpha^{\ell+2m}
\sqrt{\frac{(\ell+2m)!}{\ell!}}
\frac{s_{\alpha,\widetilde\alpha}^{\ell}}{m!}
\left(
\frac{s_{\alpha,\widetilde\alpha}^2-1}{2}
\right)^m
-
c_\ell\rho_{\widetilde\alpha}^{\ell}
\right]^2
+o(1),
\label{eq:feature-population}
\end{equation}
\end{theorem}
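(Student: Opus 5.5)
In the stated order of limits ($\gamma\to\infty$ first, then $\kappa\to\infty$, with $\lambda=\Delta=0$), both empirical risk minimisations become population least-squares problems over fresh tasks $a\sim\Unif(\mathbb S^{d-1}(\sqrt d))$. The plan is therefore to solve these population problems in closed form and to evaluate their test errors directly, without passing through the replica formulas of Results~\ref{res:kernel}--\ref{res:feature}. Because the task pool becomes dense, $S_K\to S_{K,\mathrm{pop}}$ and $S_F\to I_d$. This is why $\mathcal P=\mathrm M$ and $\mathcal P=\mathrm G$ give the same limit, and I will only treat $\mathrm G$.

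\textbf{Step 1: reduce each learner to population regression.} In both learners the prediction is bilinear, $\Tr(WH^\top)=\phi(x_0)^\top W v$, with $v=\frac1L\sum_\ell y_\ell\phi(x_\ell)$, where $\phi=h$ for the kernel learner and $\phi=\mathrm{id}$ for the feature learner. I will argue that, as $\gamma\to\infty$ and then $\kappa\to\infty$, the minimiser $\widehat W$ converges to
\[
W_\star=\arg\min_W \E\big[(\phi(x_0)^\top Wv-y_0)^2\big],
\]
with the expectation taken over fresh tasks. This uses convexity, which holds since $\sigma_0$ is the identity, together with a law of large numbers for the $D^2\times D^2$ Gram matrix of the $H_\mu$. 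I will also need a lower bound on the smallest eigenvalue of that Gram matrix, and I plan to obtain it from the second-moment computation in Step~2.

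\textbf{Step 2: kernel learner.} First average over $x_0$, using orthonormality of the $h_k$. Write $y_0=\beta(a)^\top h(x_0)+r$, where $\beta_k(a)=c_kh_k(a)/\sqrt{D_{k,d}}$ (Funk--Hecke) and $r$ is the out-of-$\mathcal K$ residual, orthogonal to the span of $h(x_0)$ and of squared norm $\nu_{\rm out}$. The objective then becomes $\nu_{\rm out}+\E\|\beta(a)-Wv\|^2$.
\begin{itemize}
\item By Funk--Hecke, $\E[v\mid a]=\beta(a)$.
\item Conditionally on $a$, $\mathrm{Cov}(v\mid a)=\frac1L\big(\E[y^2hh^\top]-\beta\beta^\top\big)$, and I claim $\E_a\E[y^2hh^\top]=I_D+o(1)$ in operator norm.
\end{itemize}
These give $\E[vv^\top]\approx\bigoplus_k(c_k^2/D_{k,d}+1/L)I_{D_{k,d}}$ and $\E[\beta v^\top]=\bigoplus_k(c_k^2/D_{k,d})I$. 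Hence $W_\star=\bigoplus_k\eta_{k,L}I$. Evaluating at context length $\widetilde L$ yields $\sum_k\big[c_k^2(1-\eta_{k,L})^2+D_{k,d}\eta_{k,L}^2/\widetilde L\big]$, which is \eqref{eq:kernel-population}.

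I expect the main technical obstacle to be the operator-norm claim $\E[y^2hh^\top]\approx I_D$. It requires bounding cross-degree blocks and the products $h_kh_{k'}$ re-expanded in harmonics. I would try the Gegenbauer product formula together with hypercontractivity on the sphere to show that these corrections are $O(d^{-1/2})$ per block.

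\textbf{Step 3: feature learner.} By rotational invariance, $W_\star=wI_d$. With $\E[y x\mid a]=c_1a$ and $\mathrm{Cov}(v\mid a)\approx I_d/L$, the scalar $u=x_0^\top Wv$ is, after normalisation, asymptotically jointly Gaussian with $g=a^\top x_0/\sqrt d$. I will prove this by a conditional CLT in $x_0$ together with concentration of $\|v\|$ and $a^\top v$. The standardised train-length statistic has correlation $\rho_\alpha$ with $g$. By Mehler's formula the optimal readout is then $\widehat\sigma(v)=\sum_kc_k\rho_\alpha^k\,\mathrm{He}_k(v)/\sqrt{k!}$.

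At test length $\widetilde L$, the same standardisation produces $v=s_{\alpha,\widetilde\alpha}w$, where $w$ is standard Gaussian with correlation $\rho_{\widetilde\alpha}$ to $g$. Expanding with
\[
\mathrm{He}_n(sx)=\sum_m\frac{n!}{m!(n-2m)!}s^{n-2m}\Big(\frac{s^2-1}2\Big)^m\mathrm{He}_{n-2m}(x)
\]
gives the Hermite coefficients $b_\ell$ of $\widehat\sigma(s\,\cdot)$. Projecting $\sigma_\star(g)$ onto the Hermite polynomials of $w$ gives coefficients $c_\ell\rho_{\widetilde\alpha}^\ell$ plus an orthogonal remainder of energy $1-\sum c_\ell^2\rho_{\widetilde\alpha}^{2\ell}$. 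Pythagoras then yields \eqref{eq:feature-population}.

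The delicate points in this step are two. First, the joint-Gaussian limit must be justified uniformly enough to interchange the limit with the infinite Hermite series; I plan to use $L^2$ convergence and tail control of $\sum_kc_k^2$. Second, when $s>1$ the re-expansion series must converge, which holds since $\rho_\alpha s<1$ is not needed but $\rho_\alpha^2 s^2=\rho_\alpha^2/\rho_{\widetilde\alpha}^2\cdot\rho_{\widetilde\alpha}^2\le1$.
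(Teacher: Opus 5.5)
Your skeleton matches the paper's rigorous proof: exact population solutions, $W=wI_d$ for the feature learner, concentration of the norm and direction of the context vector, a Gaussian channel, Mehler, Hermite rescaling and Pythagoras. Step~3, however, has a genuine gap at Stage~II.

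You solve the actual population problem rather than use Result~\ref{res:feature}. The readout is therefore the finite-$d$ conditional expectation $\widehat\sigma_{d,L}(u)=\E[y_0\mid U^L=u]$ under a non-Gaussian law. You replace it by the Gaussian-channel readout $\Phi_{\rho_\alpha}$ only because $(g,u)$ is asymptotically jointly Gaussian. Weak convergence, or a conditional CLT, does not make conditional expectations converge. Even $L^2$-closeness under the training law $P_d$ of the normalized statistic $X_{d,L}$ would not suffice. The readout is evaluated under the law of $s\,X_{d,\widetilde L}$, which for $\widetilde L<L$ ($s>1$) is wider than the law it was fitted on.

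The paper uses two ingredients you do not have.

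(i) A sandwich. The optimal Stage-II risk is at least the risk of regressing $y_0$ on the richer $\sigma$-field generated by the context and $S=b^\top x_0$. That regression function is explicitly $\sum_kc_{k,d}Q_{k,d}(q)P_{k,d}(S)$ by the spherical projection formula, and its risk is $1-\sum_kc_k^2\rho_{d,L}^{2k}+o(1)$. The Gaussian candidate $\Phi_{\rho_{d,L}}(X_{d,L})$ attains the same value. Pythagoras then forces the readout, written in the normalized variable, to be $L^2(P_d)$-close to $\Phi_{\rho_{d,L}}$.

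(ii) A change-of-measure transfer. This combines an $L^r$-bounded likelihood ratio (with $r$ close to $1$) between the two nearly Gaussian laws on $|x|\le d^\tau$, moment interpolation, the a priori bound $|\E[y_0\mid X_{d,L}]|\le C(1+d^{p/2})$, and super-polynomially small probabilities for the concentration-failure events and the tails.

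Your plan of ``$L^2$ convergence and tail control of $\sum_kc_k^2$'' only handles exchanging the limit with the Hermite series. It does not address either (i) or (ii).

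There are also three smaller points.

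Your convergence argument for the re-expansion is wrong. In fact $\rho_\alpha^2s^2=\rho_\alpha^4/\rho_{\widetilde\alpha}^2$, not $\rho_\alpha^2$, and it exceeds $1$ for, e.g., large $\alpha$ and small $\widetilde\alpha$. The $m$-th term of the series for the $\ell$-th coefficient behaves like $c_{\ell+2m}[\rho_\alpha^2(s^2-1)]^m$ up to polynomial factors. Absolute convergence can therefore fail when $\rho_\alpha^2(s^2-1)>1$ and the Hermite coefficients decay slowly. The robust statement defines the coefficient as $\E_T[\Phi_{\rho_\alpha}(s_{\alpha,\widetilde\alpha}T)H_\ell(T)]$, which is finite under polynomial growth of $\sigma_\star$. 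The explicit series should be used only when it converges absolutely, e.g.\ for $s\le1$ or polynomial teachers.

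The ``main obstacle'' you flag in Step~2 is not one. Averaging over $a$ first at fixed $x$ gives $\E_{a,x}[y^2h(x)h(x)^\top]=v_dI_D$ exactly, because $a^\top x/\sqrt d\sim G_d$ for every $x$. No product formula or hypercontractivity is needed.

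What does need care in Step~2 is the replacement $c_{k,d}\to c_k$, $v_d\to1$. It must be uniform in $L/D_{k,d}$, even though the errors are multiplied by $D_{k,d}/\widetilde L$. This requires $L/\widetilde L=O(1)$, which you never invoke.
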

One immediate corollary is that if we choose $\widetilde L=L=\alpha d^s$ with fixed $s>0$ and $d\to\infty$, we have
\begin{equation}
\mathcal E_{\mathcal P}^{\mathrm{feature}}
\longrightarrow
\begin{cases}
1, & 0<s<1,\\
1-\sum_{k=1}^{\infty}c_k^2r_\alpha^k, & s=1,\\
0, & s>1,
\end{cases}
\quad \mathcal P\in\{\mathrm M,\mathrm G\},
\label{eq:comparison-feature-context-scaling}
\end{equation}
while for the kernel learner:
\begin{equation}
\mathcal E_{\mathcal P}^{\mathrm{kernel}}
\longrightarrow
\nu_{\rm out}+\sum_{\substack{k\in\mathcal K,\ k>s}}c_k^2
+\mathbf 1_{\{s\in\mathcal K\}}
\frac{c_s^2}{1+\alpha s!c_s^2},
\qquad
\mathcal P\in\{\mathrm M,\mathrm G\}.
\label{eq:comparison-kernel-context-scaling}
\end{equation}
This suggests that for the kernel learner, a degree-$k$ harmonic sector is asymptotically fully recovered when $L\gg d^k$, is only partially recovered when $L=\Theta(d^k)$, and is asymptotically unrecovered when $L\ll d^k$. 
Moreover, assuming $s=1$ and $1\in\mathcal K$, we obtain $
\mathcal E_{\mathcal P}^{\mathrm{kernel}}-\mathcal E_{\mathcal P}^{\mathrm{feature}}\approx\sum_{k\ge2}c_k^2r_\alpha^k$.
For a nonlinear teacher this gap is strictly positive for every $\alpha>0$.

Thus there are two advantages of the feature learner. First, Results \ref{res:kernel} and \ref{res:feature} suggest that the two learners operate at different pretraining sample scales. Denote \(p:=\max\mathcal K\). The feature and kernel attention matrices contain $d^2$ and $D^2=\Theta(d^{2p})$
parameters, respectively, and hence converge on the scales
$N\gg d^2$ and $N\gg D^2$. Thus, when $p>1$, there is a broad regime $d^2\ll N\ll D^2$ in which the feature learner is already close to its infinite-pretraining limit while the kernel learner is not.

Second, the two architectures use the demonstrations differently. For the kernel learner,
a degree-$k$ spherical-harmonic sector requires a context length of order $d^k$ to be recovered. Under $L=\Theta(d)$, only the degree-one sector contributes.
By contrast, the feature learner only requires $L=\Theta(d)$ demonstrations and exploits all Hermite components of the teacher with factors $r_\alpha^k$ simultaneously.

On the other hand, the kernel learner has an advantage that a longer inference context always improves its performance by Eq. \eqref{eq:kernel-population}.
In contrast, such monotonicity need not hold for the feature learner. There is a nonnegative mismatch error (the third term of Eq. \eqref{eq:feature-population}) for the feature learner which arises because the Stage-II readout is optimized for the training context length \(L\) and is then kept fixed at inference. 
Theorem \ref{theo:comparison} also implies that when $\tilde L$ is fixed, both the kernel and the feature learner reach the optimal performance at $L=\tilde L$. Increasing the training context beyond $\tilde L$ is not beneficial.

\section{Experiments}
\label{sec:experiments}
\begin{figure}[t]
\vspace{-10mm}
\centering
\includegraphics[width=\textwidth]{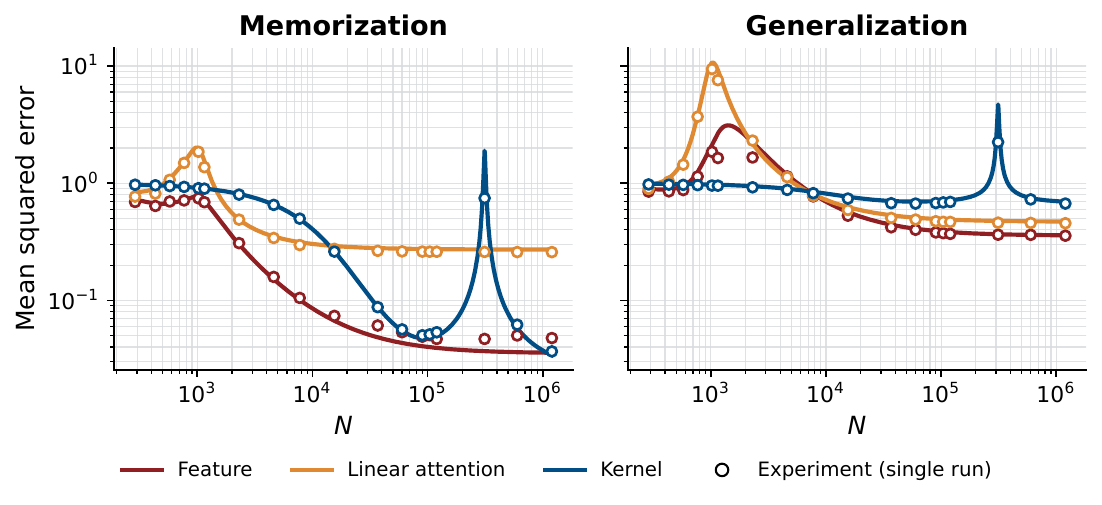}
\vspace{-8mm}
\caption{Memorization (left) and generalization (right) errors as a function of $N$. We fix $N_0=24$ and $L=\widetilde L=1024$. Red, orange, and blue denote the feature learner, linear attention, and the kernel learner, respectively. Solid lines show replica predictions, and circles show numerical results from a single training run. The three learners use the same training and inference prompts.}
\label{fig:gamma-merged}
\vspace{-2mm}
\end{figure}

\begin{figure}[t]
\vspace{-5mm}
\centering
\includegraphics[width=\textwidth]{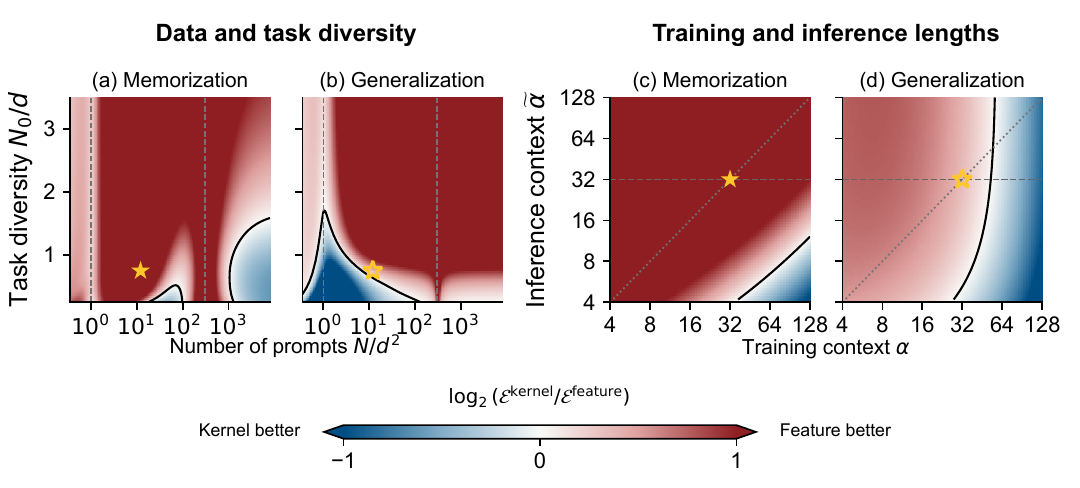}
\vspace{-8mm}
\caption{Replica predictions for the relative errors of the two learners. In all panels, the color shows $\log_2(\mathcal E^{\mathrm{kernel}}/\mathcal E^{\mathrm{feature}})$. (a,b) Memorization and generalization as $N/d^2$ and $N_0/d$ vary, with $\alpha=\widetilde\alpha=32$; vertical dashed lines mark $N=d^2$ and $N=D^2$. (c,d) Memorization and generalization as the training length $\alpha=L/d$ and inference length $\widetilde\alpha=\widetilde L/d$ vary, with $N_0=24$ and $N=12288$; the diagonal dotted line marks equal lengths, and the horizontal dashed line marks $\widetilde\alpha=32$. Gold stars mark the shared parameter setting ($N=12288, N_0=24, \alpha=\widetilde\alpha=32$). Blue favors the kernel learner, red favors the feature learner, and black curves indicate equal errors.}
\vspace{-2mm}
\label{fig:learner_advantage_four_panel}
\end{figure}

We compare the two learners at finite $d,D,N_0,N,L,\widetilde L$, using the same task pool
and prompts, and evaluate the state equations at the corresponding values. 
Experiments use the common teacher:
\begin{equation}
\sigma_\star\!\left(\frac{a^\top x}{\sqrt d}\right)
=\frac{\sqrt3}{2}\frac{a^\top x}{\sqrt d}
+\frac{\sqrt{2}}{4}
\left(\frac{(a^\top x)^2}{d}-1\right).
\label{eq:common-teacher}
\end{equation}
For the kernel learner, we use the complete degree-one and degree-two spherical harmonic basis. Note that because the degree-one and degree-two harmonic basis contains the teacher, the kernel learner has no representation error (\(\nu_{\rm out}=0\)).

Unless stated otherwise, we fix $d=32$, $D=559$, $\Delta=0.01$ and $\lambda=0.001$. We first increase $N$ with a small fixed task pool, then vary task diversity and the training and inference lengths to compare the two learners across these settings. In this section we denote \(\alpha:=L/d\) and \(\widetilde\alpha:=\widetilde L/d\).
\begin{itemize}[leftmargin=1.5em]
\item \textbf{Numerics.} We use the attention block in Eq.~\eqref{eq:choice-attention}. In terms of the feature learner, we use the following two-stage protocol.
  \begin{itemize}
  \item \textit{Stage I.} Initialize the attention block, fix the scalar readout to $\sigma_0(u)=u$ by initializing a width-64 ReLU readout (which remains frozen during Stage I), and do full-batch L-BFGS.
  \item \textit{Stage II.} Freeze the learned attention block, independently draw $1024$ prompts\footnote{This choice is for a generic MLP readout. In general, Stage II requires far fewer samples than Stage I; for example, in our case fitting a quadratic readout is enough.} from the same task pool, and train the readout by AdamW followed by full-batch L-BFGS.
  \end{itemize}
For the kernel learner, we map each input to the fixed kernel basis $h(x)$ and optimize the attention block corresponding to Eq.~\eqref{eq:choice-attention} by full-batch L-BFGS.
We also compare them with the linear attention baseline, which is equivalent to the feature learner after Stage I. All three learners use the same Stage-I and inference prompts.  Each inference error averages $32768$ prompts.

\item \textbf{Replica curves.}
  \begin{itemize}
  \item \textit{Feature learner.} We solve Eq.~\eqref{eq:app-feature-state-equations}, substitute the order parameters into Eq.~\eqref{eq:rs-test-error} with the readout in Eq.~\eqref{eq:optimal-readout}.
  \item \textit{Kernel learner.} We solve Eqs.~\eqref{eq:SE-kernel} and substitute their order parameters into Eq.~\eqref{eq:kernel-errors-main}.
  \end{itemize}
\end{itemize}

\vspace{-3mm}
\subsection{Varying the number of training prompts}
\vspace{-1mm}
\label{sec:experiments-N}

We first compare the numerics and the replica prediction with different numbers of training prompts. We fix $N_0=24$ and $\alpha=\widetilde\alpha=32$, and vary $N$ from $288$ to $1200000$. Increasing $N$ allows us to compare the two learners both before and after their interpolation peaks.
In Figure \ref{fig:gamma-merged}, we can observe a good agreement between the numerics and the replica prediction. In Appendix~\ref{app:consistency} we further check the consistency of corresponding order parameters between experiments and replica prediction.
The feature learner, linear attention and kernel learner all show the familiar double-descent behavior \citep{lu2025asymptotic}, with interpolation peaks near $d^2$ and $D^2$, respectively. Stage II substantially reduces the interpolation peak for the feature learner, although a smaller peak remains. Our main interest, however, is the relative performance of the two learners across the different data regimes. We highlight the main observations below.

\begin{itemize}[leftmargin=1.5em]
\item \textbf{The feature learner reaches low error earlier.}
After its first interpolation peak, the feature learner (red line) reaches low error well before the kernel learner (blue line), since $D\gg d$. At $N=36864$, its memorization/generalization errors are $0.061/0.421$, compared with $0.088/0.675$ for the kernel learner, which later passes through its interpolation peak near $D^2$. The feature learner is therefore better over $d^2\ll N\ll D^2$. 
\item \textbf{The kernel learner wins on memorization at large $N$, but fails on generalization.} At $N=1200000$, the kernel learner achieves memorization error $0.037$, below the feature learner's $0.046$. Generalization still favors the feature learner, with mean error $0.354$ compared with $0.657$ for the kernel learner. In fact, as $N \to \infty$, Results \ref{res:kernel} and \ref{res:feature} give the limiting errors:
\begin{equation*}
\begin{aligned}
\mathcal E_{\mathrm M}^{\mathrm{feature}}&\simeq0.03536,
&\mathcal E_{\mathrm G}^{\mathrm{feature}}&\simeq0.35682,
&\mathcal E_{\mathrm M}^{\mathrm{kernel}}&\simeq0.02264,
&\mathcal E_{\mathrm G}^{\mathrm{kernel}}&\simeq0.67073.
\end{aligned}
\end{equation*}
The kernel learner can thus learn the training tasks more accurately with more trainable parameters once enough prompts are available, but generalizes worse at low task diversity.
\end{itemize}

\vspace{-2mm}
\subsection{Varying task diversity and context lengths}
\vspace{-1mm}
\label{sec:experiments-N0}
The previous comparison uses a fixed value of $N_0,\alpha,\widetilde\alpha$. In this section, we vary these hyperparameters to obtain a more comprehensive phase diagram  (Figure \ref{fig:learner_advantage_four_panel}).

Figure \ref{fig:learner_advantage_four_panel}(a,b) shows that the kernel learner has a memorization advantage when the training set size $N$ is large and the task pool $N_0$ is small. Increasing $N_0$ removes this kernel memorization advantage. At smaller $N$, the relative performance is further affected by the two interpolation peaks near $N\sim d^2$ and $N\sim D^2$. Hence, the kernel learner has a generalization advantage only in the low-\(N\) regime, where the feature learner is affected by its \(N\sim d^2\) interpolation peak. This is consistent with the observation in Figure \ref{fig:gamma-merged}.
We next vary the training and inference lengths in Figure \ref{fig:learner_advantage_four_panel}(c,d).

\vspace{-1mm}
\paragraph{Longer training prompts do not always improve inference performance.} 
Figure \ref{fig:learner_advantage_four_panel}(c,d) suggests that the relative advantage depends strongly on the training and inference context lengths. Two regimes can favor the kernel learner. First, under a large context-length mismatch, the feature learner can deteriorate (see Theorem \ref{theo:comparison}). Second, when both $\alpha$ and $\widetilde{\alpha}$ are large, the kernel learner can reconstruct the teacher with smaller fluctuations (see Theorem \ref{theo:comparison}), since the kernel basis is complete.

Overall, these replica predictions show that the relative advantage of feature and kernel learners depends jointly on the amount of pretraining data $N$, task diversity $N_0$, and the training and inference context lengths $\alpha,\widetilde\alpha$. Additional numerical results are provided in Appendix \ref{app:vary_N0}, where we compare the replica prediction with the experiments with different $N_0$.

\vspace{-1mm}
\subsection{Learning different harmonic degrees}
\vspace{-1mm}
Theorem \ref{theo:comparison} predicts that the two learners have different context scalings. Figure~\ref{fig:multiscale-teachers} illustrates this difference for the common teacher \eqref{eq:common-teacher}. The solid curves are obtained by substituting $d=16$ and $L=16^s$ into Eqs. \eqref{eq:kernel-population} and \eqref{eq:feature-population}, which agree closely with the numerical experiments. The feature learner has lower error throughout the displayed range. The dashed curves evaluate the same formulas at $d=10^6$ and show the separation of sample complexities emerge as $d$ becomes larger. The kernel learner acquires the degree-one and degree-two components at the separate scales $L\asymp d$ and $L\asymp d^2$, whereas the feature learner recovers both components at the same time.

\begin{wrapfigure}{r}{0.48\linewidth}
\vspace{-5mm}
  \centering
  \includegraphics[width=\linewidth]{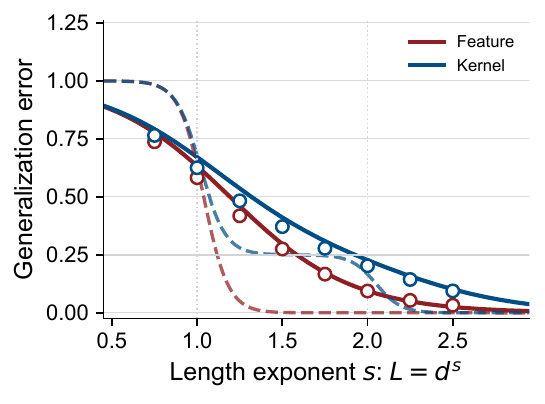}
  \vspace{-7mm}
  \caption{Learning different harmonic degrees. The horizontal coordinate is $s=\log L/\log d$, with equal training and inference lengths $L=\widetilde L=d^s$. The teacher is \eqref{eq:common-teacher}. Circles show experiments at $d=16$, $N_0=8192$, $N=1200000$, $\Delta=0$, and $\lambda=0$. Solid curves are Theorem \ref{theo:comparison} (Eqs.~\eqref{eq:kernel-population} and \eqref{eq:feature-population}) evaluated at $d=16$. Dashed curves are the same formulas evaluated at $d=10^6$ and display the separation between the scales $L\asymp d$ and $L\asymp d^2$.}
  \label{fig:multiscale-teachers}
\vspace{-12mm}
\end{wrapfigure}

\section{Conclusion and Future Work}
In this work, we studied nonlinear ICL by comparing the kernel learner and the feature learner on the same single-index task. Using replica calculations, we characterized how their memorization and generalization errors depend jointly on pretraining size, task diversity, and the training and inference context lengths. In the population limit, we rigorously prove two context-scaling mechanisms. More broadly, our results suggest that the relative advantage of kernel and feature learning in ICL is determined by how the architectures interact with the amount of pretraining data, task diversity, and the information available in the context.

Several directions remain open. Our analysis focuses on one-layer linear attention, spherical-harmonic features, and an idealized two-stage training procedure in which the scalar readout is learned from sufficient additional data. It would be useful to understand whether the same separation persists for random features or learned embeddings, jointly trained readouts, softmax attention, and deeper Transformer architectures. On the theoretical side, important directions include proving the finite-dimensional results and analyzing ICL beyond single-index targets with $\mu_1\neq0$.

\section*{Acknowledgements}
We acknowledge funding from the Swiss National Science Foundation grants SNSF SMArtNet (grant number 212049), the Simons Collaboration on the Physics of Learning and Neural Computation via the Simons Foundation grant (\#1257413 (LZ)), the Swiss State Secretariat for Education, Research and Innovation (SERI), which funded the ERC Advanced Grant SPASM-net, and the EPFL AI Center PhD Fellowship Program 2026 (YX).

\bibliographystyle{plain}
\bibliography{ref}

\appendix
\tableofcontents
\section{Derivation of the kernel learner}
\label{app:kernel-details}
\renewcommand{\theResult}{\thesection.\arabic{Result}}
\setcounter{Result}{0}
\subsection{Teacher expansion}
\label{app:expansion}
We first express the nonlinear teacher in the spherical-harmonic basis used by the kernel learner. Let
\begin{equation}
g_d:=\frac{a^\top x}{\sqrt d},
\qquad
a,x\overset{\mathrm{i.i.d.}}{\sim}
\Unif(\mathbb S^{d-1}(\sqrt d)).
\label{eq:app-spherical-score}
\end{equation}
The random variable $g_d$ has mean zero and variance one and converges in
distribution to $G\sim\mathcal N(0,1)$ as $d\to\infty$.

For each $k\geq0$, let $P_{k,d}$ denote the degree-$k$ Gegenbauer polynomial
orthonormal with respect to the distribution of $g_d$, namely
\begin{equation}
\mathbb E\!\left[P_{j,d}(g_d)P_{k,d}(g_d)\right]
=\delta_{jk}.
\end{equation}
With the spherical harmonics normalized with respect to the uniform probability
measure on $\mathbb S^{d-1}(\sqrt d)$, the spherical addition theorem \citep{frye2012spherical} gives
\begin{equation}
P_{k,d}\!\left(\frac{a^\top x}{\sqrt d}\right)
=
\frac{1}{\sqrt{D_{k,d}}}
h_k(a)^\top h_k(x).
\label{eq:app-addition-theorem}
\end{equation}

Define the Gegenbauer coefficients $c_{k,d}:=\mathbb E\!\left[\sigma_\star(g_d)P_{k,d}(g_d)\right]$. Then, in $L^2$ with respect to the distribution of $g_d$,
\begin{equation}
\begin{aligned}
\sigma_\star(g_d)=\sum_{k\geq0}c_{k,d}P_{k,d}(g_d)=\sum_{k\geq0}
\frac{c_{k,d}}{\sqrt{D_{k,d}}}
h_k(a)^\top h_k(x).
\end{aligned}
\label{eq:app-harmonic-expansion-full}
\end{equation}

Let $H_k$ denote the normalized Hermite polynomial and define the Hermite coefficients of the teacher by
\begin{equation}
c_k:=\mathbb E[\sigma_\star(G)H_k(G)].
\end{equation}
For every fixed $k$, the normalized Gegenbauer polynomial $P_{k,d}$ converges to $H_k$ as $d\to\infty$ \citep{mei2021learning}. We assume throughout that
$\sigma_\star$ is sufficiently regular so that
\begin{equation}
c_{k,d}\longrightarrow c_k
\quad\text{for every fixed }k,
\qquad
\mathbb E[\sigma_\star(g_d)^2]
\longrightarrow
\mathbb E[\sigma_\star(G)^2]=1.
\label{eq:app-gegenbauer-hermite-convergence}
\end{equation}

We now separate the harmonic degrees represented by the learner from the
remaining component. For a fixed finite set
$\mathcal K\subset\mathbb N_{\ge1}$, define
\begin{equation}
\begin{aligned}
f_{\mathcal K,d}(a,x)
&:=
\sum_{k\in\mathcal K}
\frac{c_{k,d}}{\sqrt{D_{k,d}}}
h_k(a)^\top h_k(x),\\
r_{\mathcal K,d}(a,x)
&:=
\sigma_\star\!\left(\frac{a^\top x}{\sqrt d}\right)
-
f_{\mathcal K,d}(a,x).
\end{aligned}
\label{eq:app-teacher-basis-split}
\end{equation}
By orthogonality of the Gegenbauer polynomials, for every fixed $a$,
\begin{equation}
\mathbb E_x\!\left[
r_{\mathcal K,d}(a,x)h_k(x)
\right]
=0,
\qquad k\in\mathcal K,
\label{eq:app-residual-orthogonality}
\end{equation}
and
\begin{equation}
\mathbb E_x[r_{\mathcal K,d}(a,x)^2]
=
\sum_{k\notin\mathcal K}c_{k,d}^2.
\label{eq:app-residual-finite-d-variance}
\end{equation}
Using Parseval's identity together with
\eqref{eq:app-gegenbauer-hermite-convergence},
\begin{equation}
\begin{aligned}
\sum_{k\notin\mathcal K}c_{k,d}^2=
\mathbb E[\sigma_\star(g_d)^2]-\sum_{k\in\mathcal K}c_{k,d}^2\longrightarrow
1-\sum_{k\in\mathcal K}c_k^2=\sum_{k\notin\mathcal K}c_k^2.
\end{aligned}
\label{eq:app-residual-variance-limit}
\end{equation}
Define
\begin{equation}
\theta(a):=\bigoplus_{k\in\mathcal K}
c_k\sqrt{\frac{D}{D_{k,d}}}\,h_k(a).
\label{eq:app-kernel-task-vector}
\end{equation}
Since $\mathcal K$ is finite and $c_{k,d}\to c_k$ for every
$k\in\mathcal K$,
\begin{equation}
\begin{aligned}
\mathbb E_{a,x}\left[
f_{\mathcal K,d}(a,x)
-\frac{1}{\sqrt D}\theta(a)^\top h(x)
\right]^2
=
\sum_{k\in\mathcal K}(c_{k,d}-c_k)^2
\longrightarrow0.
\end{aligned}
\label{eq:app-modeled-teacher-linear-error}
\end{equation}
Hence
\begin{equation}
f_{\mathcal K,d}(a,x)
=
\frac{1}{\sqrt D}\theta(a)^\top h(x)
+o_{L^2}(1).
\label{eq:app-modeled-teacher-linear}
\end{equation}
The residual
$r_{\mathcal K,d}(a,x)$, however, is in general neither Gaussian nor
independent of $h(x)$. To obtain a tractable effective model, we make an
additional Gaussian-equivalence approximation: at the level of the
high-dimensional learning problem, we replace the residual by an independent Gaussian random variable with the same variance. This approximation is motivated by Gaussian-equivalence results for high-dimensional random feature models
\citep{hu2024asymptotics,aguirre2025random,defilippis2024dimension}.

Writing
\begin{equation}
\nu_{\rm in}:=\sum_{k\in\mathcal K}c_k^2,
\qquad
\nu_{\rm out}:=\sum_{k\notin\mathcal K}c_k^2
=1-\nu_{\rm in},
\label{eq:app-kernel-in-out-variance}
\end{equation}
we therefore approximate the original teacher by the effective linear model
\begin{equation}
y
=
\frac{1}{\sqrt D}\theta(a)^\top h(x)
+\widetilde\epsilon,
\qquad
\widetilde\epsilon
\sim
\mathcal N(0,\Delta+\nu_{\rm out}),
\label{eq:app-effective-linear-teacher}
\end{equation}
where $\widetilde\epsilon$ is taken to be independent of $(a,x)$ and of the
feature vector $h(x)$.

\subsection{Fixed point equations}
\begin{Result}[kernel learner: state equations]
\label{res:app-kernel-state-equations}

Given $(\alpha_K,\gamma_K,\kappa_K,\nu_{\rm in},\Delta,\lambda,S_K)$,
the state equations are
\begin{equation}
\begin{aligned}
\widehat m_K=\widehat\chi_K
&=\frac{\gamma_K\nu_{\rm in}}
{\nu_{\rm in}(1+\chi_K)+\frac{1+\Delta}{\alpha_K}\overline\chi_K},
\\[1mm]
\widehat q_K
&=\frac{\gamma_K\nu_{\rm in}
\left[1+\Delta-2\nu_{\rm in}m_K+\nu_{\rm in}q_K
+\frac{1+\Delta}{\alpha_K}\overline q_K\right]}
{\left[\nu_{\rm in}(1+\chi_K)
+\frac{1+\Delta}{\alpha_K}\overline\chi_K\right]^2},
\\[1mm]
\chi_K
&=\frac{1}{\widehat\chi_K}\left[1-\zeta_K r_K(\zeta_K)\right],
\\
\overline\chi_K
&=\frac{\nu_{\rm in}}{\widehat\chi_K}
\left[\frac{1-\kappa_K}{\zeta_K}+\kappa_K r_K(\zeta_K)\right],
\\
m_K
&=\frac{\widehat m_K}{\nu_{\rm in}\widehat\chi_K}
(\nu_{\rm in}-\zeta_K+\zeta_K^2r_K(\zeta_K)),
\\
q_K
&=\frac{\widehat m_K^2}{\nu_{\rm in}\widehat\chi_K^2}
(\nu_{\rm in}-2\zeta_K+3\zeta_K^2r_K(\zeta_K)-\zeta_K^3t_K(\zeta_K))
\\&\quad+\frac{\widehat q_K}{\widehat\chi_K^2}
\left[1+(\beta_K-2\zeta_K)r_K(\zeta_K)+(\zeta_K^2-\beta_K\zeta_K)t_K(\zeta_K)\right],
\\
\overline q_K
&=\frac{\kappa_K\widehat m_K^2}{\widehat\chi_K^2}
\left[1-2\zeta_Kr_K(\zeta_K)+\zeta_K^2t_K(\zeta_K)\right]\\
&\quad+\frac{\nu_{\rm in}\widehat q_K}{\widehat\chi_K^2}\left[(1-\kappa_K)\frac{\beta_K}{\zeta_K^2}+\kappa_K\left\{r_K(\zeta_K)+(\beta_K-\zeta_K)t_K(\zeta_K)\right\}\right],
\end{aligned}
\label{eq:SE-kernel}
\end{equation}
where $r_K(\zeta):=\lim_{N_0\to\infty}\frac1{N_0}\Tr(G_K+\zeta I_{N_0})^{-1}$ and $t_K(\zeta):=\lim_{N_0\to\infty}\frac1{N_0}\Tr(G_K+\zeta I_{N_0})^{-2}$. Here we define $G_K:=\frac1D\Theta_K^\top\Theta_K\in\mathbb{R}^{N_0\times N_0}$ with $\Theta_K:=
\begin{bmatrix}
\theta(a^1)&\cdots&\theta(a^{N_0})
\end{bmatrix}
\in\mathbb R^{D\times N_0}$.
We also define $\beta_K:=(1+\Delta)\frac{\kappa_K}{\alpha_K}$ and $\zeta_K:=\beta_K+\frac{\kappa_K\lambda}{\widehat\chi_K}$.
These equations determine five order parameters
$(m_K,q_K,\overline q_K,\chi_K,\overline\chi_K)$ and three conjugates
$(\widehat m_K,\widehat q_K,\widehat\chi_K)$.
\end{Result}

\subsection{Limiting spectrum}
Result \ref{res:app-kernel-state-equations} does not specialize to a particular spectrum of $G_K$. Meanwhile, we can derive the analytical limiting spectrum of $G_K$ to simplify the equations.
In Section \ref{sec:experiments}, we use a different scaling $\frac{N_0}{d}\longrightarrow \varrho\in(0,\infty)$ for our comparison with the feature learner. Next we derive the limiting resolvents $r_K(\zeta)$ and $t_K(\zeta)$ in this scaling.
Let
\begin{equation}
\Theta_K:=
\begin{bmatrix}
\theta(a^1)&\cdots&\theta(a^{N_0})
\end{bmatrix}
\in\mathbb R^{D\times N_0},
\qquad
S_K=\frac1{N_0}\Theta_K\Theta_K^\top,
\end{equation}
and introduce the \(N_0\times N_0\) task Gram matrix $G_K:=\frac1D\Theta_K^\top\Theta_K$.
We retain the notation $\kappa_K:=\frac{N_0}{D}$.
By the definition of \(\theta(a)\),
\begin{equation}
G_K=\sum_{k\in\mathcal K}
c_k^2\frac1{D_{k,d}}H_k^\top H_k,
\qquad
H_k:=
\begin{bmatrix}
h_k(a^1)&\cdots&h_k(a^{N_0})
\end{bmatrix}.
\label{eq:app-kernel-Gram-harmonic-decomposition}
\end{equation}
In the regime \(N_0/d\to\varrho\), the degree-one block has a nontrivial Marchenko--Pastur spectrum, whereas every degree \(k\ge2\) satisfies
\(N_0/D_{k,d}\to0\) and contributes asymptotically as an identity matrix.
More precisely, the polynomial-regime spectral decomposition
of \cite{misiakiewicz2022spectrum} gives
\begin{equation}
G_K=c_{1,\mathcal K}^2\frac1dH_1^\top H_1+\nu_{>1,\mathcal K} I_{N_0}+o_{\mathrm{op}}(1),
\label{eq:app-kernel-Gram-linearization}
\end{equation}
where
\begin{equation}
c_{1,\mathcal K}^2:=c_1^2\mathbf 1_{\{1\in\mathcal K\}},
\qquad
\nu_{>1,\mathcal K}:=\nu_{\rm in}-c_{1,\mathcal K}^2.
\end{equation}
Here \(H_1=[a^1,\ldots,a^{N_0}]\).
Let $\nu_{\kappa}$ denote the Marchenko--Pastur distribution
\begin{equation}
\nu_{\kappa}(\dd s)=\bigl(1-\min\{1,\kappa\}\bigr)\delta_0(\dd s)+\1_{[s_-,s_+]}(s)\frac{\kappa\sqrt{(s_+-s)(s-s_-)}}{2\pi s}\dd s,\qquad s_\pm=(1\pm\kappa^{-1/2})^2.
\label{eq:MP-law}
\end{equation}
The empirical spectral distribution of \(d^{-1}H_1^\top H_1\) converges to \(\nu_{1/\varrho}\). Hence the empirical spectral distribution of \(G_K\) converges to the law of
\begin{equation}
g_{\varrho}=\nu_{>1,\mathcal K}+c_{1,\mathcal K}^2 s,
\qquad
s\sim\nu_{1/\varrho}.
\label{eq:app-kernel-Gram-limit-law}
\end{equation}
In particular, if \(1\notin\mathcal K\), the limiting distribution reduces to the point mass at \(\nu_{\rm in}\).
The deterministic limits of the resolvents are therefore
\begin{equation}
\mathcal R_{\varrho}(\zeta):=\int\frac{\nu_{1/\varrho}(\mathrm ds)}
{\nu_{>1,\mathcal K}+c_{1,\mathcal K}^2s+\zeta},
\label{eq:app-kernel-Gram-resolvent-limit}
\end{equation}
and
\begin{equation}
\mathcal T_{\varrho}(\zeta):=\int
\frac{\nu_{1/\varrho}(\mathrm ds)}
{\left(\nu_{>1,\mathcal K}+c_{1,\mathcal K}^2s+\zeta\right)^2}.
\label{eq:app-kernel-Gram-second-resolvent-limit}
\end{equation}
For every fixed \(\zeta>0\),
\begin{equation}
r_K(\zeta)\longrightarrow\mathcal R_{\varrho}(\zeta),
\qquad
t_K(\zeta)\longrightarrow
\mathcal T_{\varrho}(\zeta).
\end{equation}
Substituting
\(r_K(\zeta_K)\simeq\mathcal R_{\varrho}(\zeta_K)\) and
\(t_K(\zeta_K)\simeq\mathcal T_{\varrho}(\zeta_K)\)
into \eqref{eq:SE-kernel} gives the state equations used in our numerical experiments.

\subsection{Reduction to Takanami et al.}
\label{app:kernel-reduction}
We now relate the kernel learner to the linear in-context learning model analyzed
by \cite{takanami2025learning}. The correspondence consists of
an exact algebraic equivalence,
followed by a Gaussian-equivalence approximation for the effective
statistics entering the replica calculation.

We first remove the normalization $\nu_{\rm in}\neq1$. Recall from
Eq.~\eqref{eq:app-effective-linear-teacher} that the effective teacher obtained
in the previous subsection is
\begin{equation}
y
=
\frac{1}{\sqrt D}\theta(a)^\top h(x)
+\widetilde\epsilon,
\qquad
\widetilde\epsilon\sim
\mathcal N(0,\Delta+\nu_{\rm out}).
\label{eq:app-kernel-effective-model-before-rescaling}
\end{equation}
For $\nu_{\rm in}>0$, define
\begin{equation}
\theta'(a):=\frac{\theta(a)}{\sqrt{\nu_{\rm in}}},
\qquad
y':=\frac{y}{\sqrt{\nu_{\rm in}}},
\qquad
\Delta':=\frac{\Delta+\nu_{\rm out}}{\nu_{\rm in}},
\qquad
S_K':=\frac{S_K}{\nu_{\rm in}},
\qquad
\lambda':=\frac{\lambda}{\nu_{\rm in}}.
\label{eq:app-kernel-effective-rescaling}
\end{equation}
Since $H^K$ is linear in the labels, the rescaled labels satisfy
$(H^K)'=H^K/\sqrt{\nu_{\rm in}}$, and hence
\begin{equation}
\begin{aligned}
\sum_{\mu=1}^{N}
\left[
\Tr\!\left(W(H_\mu^K)^\top\right)-y_0^\mu
\right]^2
+\lambda N_0\|W\|_F^2
=
\nu_{\rm in}
\left\{
\sum_{\mu=1}^{N}
\left[
\Tr\!\left(W((H_\mu^K)')^\top\right)-(y_0^\mu)'
\right]^2
+\lambda' N_0\|W\|_F^2
\right\}.
\end{aligned}
\label{eq:app-kernel-objective-rescaling}
\end{equation}
Therefore the minimizing matrix $W$ is unchanged. It is thus sufficient to
derive the result in the normalized setting
\begin{equation}
\nu_{\rm in}=1,
\qquad
\nu_{\rm out}=0,
\qquad
\|\theta(a)\|^2=D,
\label{eq:app-kernel-normalized-setting}
\end{equation}
with the understanding that the replacements in
Eq.~\eqref{eq:app-kernel-effective-rescaling} are restored at the end.
The last identity follows from
$\|h_k(a)\|^2=D_{k,d}$ and the definition of $\theta(a)$.

\paragraph{Correspondence of the linearized variables.}
In the notation of \cite{takanami2025learning}, define
\begin{equation}
x^{\rm Tak}:=\frac{h(x)}{\sqrt D},
\qquad
w^{\rm Tak}:=\theta(a),
\qquad
\sigma_{\rm Tak}^2:=\Delta.
\label{eq:app-takanami-basic-map}
\end{equation}
Then
\begin{equation}
\mathbb E[x^{\rm Tak}]=0,
\qquad
\mathbb E[x^{\rm Tak}(x^{\rm Tak})^\top]
=\frac1D I_D,
\label{eq:app-takanami-input-moments}
\end{equation}
and the effective teacher takes the same linear form
\begin{equation}
y=(w^{\rm Tak})^\top x^{\rm Tak}+\epsilon,
\qquad
\epsilon\sim\mathcal N(0,\Delta).
\label{eq:app-takanami-linear-model-map}
\end{equation}
Moreover, the effective feature matrix used by the attention model is exactly
\begin{equation}
\begin{aligned}
H^{\rm Tak}=\frac{D}{L}\,
x_0^{\rm Tak}\sum_{\ell=1}^{L}
y_\ell(x_\ell^{\rm Tak})^\top=\frac1Lh(x_0)\sum_{\ell=1}^{L}y_\ell h(x_\ell)^\top=H^K.
\end{aligned}
\label{eq:app-kernel-prompt-statistic-map}
\end{equation}
Thus the correspondence between the task matrices is also exact:
\begin{equation}
S^{\rm Tak}=\frac1{N_0}\sum_{k=1}^{N_0}
w^k(w^k)^\top=\frac1{N_0}\sum_{k=1}^{N_0}
\theta(a^k)\theta(a^k)^\top=S_K.
\label{eq:app-kernel-task-matrix-map}
\end{equation}

For replicas $W^a,W^b$, the corresponding macroscopic overlaps are
\begin{equation}
m_K^a:=\frac1D\Tr(W^aS_K),
\qquad
Q_K^{ab}:=\frac1D\Tr(W^aS_K(W^b)^\top),
\qquad
\overline Q_K^{ab}:=\frac1D\Tr(W^a(W^b)^\top),
\label{eq:app-kernel-order-parameter-map}
\end{equation}
which coincide with the order parameters used in the linear model of
\cite{takanami2025learning}.

\paragraph{Gaussian equivalence for the effective statistic.}
The correspondence above does not imply that $x^{\rm Tak}=h(x)/\sqrt D$
is Gaussian. We therefore require one additional approximation before
transferring the replica calculation.

For a fixed task $a$, the first moments needed below are:
\begin{equation}
\mathbb E[y_0\mid a]=0,
\qquad
\mathbb E[y_0^2\mid a]=1+\Delta,
\label{eq:app-kernel-label-moments}
\end{equation}
and
\begin{equation}
\mathbb E[y_0H^K_{ij}\mid a]
=
\mathbb E[y_0h_i(x_0)\mid a]\,
\mathbb E[yh_j(x)\mid a]
=
\frac{\theta_i(a)\theta_j(a)}{D}.
\label{eq:app-kernel-label-prompt-moment}
\end{equation}
For the second moment of $H^K$, the calculation gives
\begin{equation}
\mathbb E[H^K_{ij}H^K_{i'j'}\mid a]=\delta_{ii'}
\left[\frac{L-1}{LD}\theta_j(a)\theta_{j'}(a)+\frac1L B_{jj'}(a)\right],
\label{eq:app-kernel-prompt-second-moment-exact}
\end{equation}
where
\begin{equation}
B(a):=\mathbb E_x[y^2h(x)h(x)^\top\mid a].
\label{eq:app-kernel-B-matrix}
\end{equation}
By $\|h(x)\|^2=D$ for every $x$, we have
\begin{equation}
\frac1D\Tr B(a)=\mathbb E_x[y^2\mid a]=1+\Delta.
\label{eq:app-kernel-B-trace}
\end{equation}
Moreover, rotational invariance gives
\begin{equation}
\mathbb E_a[B(a)]=(1+\Delta)I_D.
\label{eq:app-kernel-B-task-average}
\end{equation}
Conditioned on $a$, however, $B(a)$ is not asymptotically isotropic. Nevertheless, for the following we replace $B(a)$ with its expectation (i.e., a self-averaging ansatz) and write 
\begin{equation}
\mathbb E[H^K_{ij}H^K_{i'j'}\mid a]
\simeq\frac{\delta_{ii'}}D\left(\theta_j(a)\theta_{j'}(a)+\frac{D(1+\Delta)}L\delta_{jj'}\right).
\label{eq:app-kernel-prompt-second-moment-GE}
\end{equation}

For the following, we invoke the Gaussian-equivalence step used in \cite{takanami2025learning}: the effective learning problem is characterized through the leading first- and second-order statistics, while higher-order
correlations are neglected in the high-dimensional limit. 
To make the connection explicit, define the task-dependent quadratic teacher matrix
\begin{equation}
W_{\rm teacher}(a):=\frac{\theta(a)\theta(a)^\top}{D}
\label{eq:app-kernel-quadratic-teacher}
\end{equation}
and the surrogate query label
\begin{equation}
\overline y_0:=\Tr\!\left(
W_{\rm teacher}(a)(H^K)^\top
\right)
+\epsilon_0,
\qquad
\epsilon_0\sim\mathcal N(0,\Delta).
\label{eq:app-kernel-surrogate-query-label}
\end{equation}
Under Eq.~\eqref{eq:app-kernel-prompt-second-moment-GE},
$(\overline y_0,H^K)$ and $(y_0,H^K)$ have the same leading first- and
second-order statistics. Hence, following the Gaussian-equivalence argument
of \cite{takanami2025learning}, we replace the original training problem by
the corresponding many-teacher regression problem.

More specifically, for replicas $\{W^a\}_{a=1}^n$, introduce the scalar fields
\begin{equation}
u^a
:=
\Tr\!\left[
\left(
W_{\rm teacher}(a)-W^a
\right)(H^K)^\top
\right]
+\epsilon_0.
\label{eq:app-kernel-replica-fields}
\end{equation}
Under the Gaussian-equivalence approximation, the vector
$(u^1,\ldots,u^n)$ is replaced by a centered Gaussian vector with covariance
determined only by the overlaps in
Eq.~\eqref{eq:app-kernel-order-parameter-map}. After averaging over the task
pool, its covariance is
\begin{equation}
\mathcal M_K^{ab}
=
Q_K^{ab}
-m_K^a-m_K^b
+1+\Delta
+
\frac{1+\Delta}{\alpha_K}\overline Q_K^{ab},
\qquad
\alpha_K:=\frac{L}{D}.
\label{eq:app-kernel-replica-field-covariance}
\end{equation}
This is precisely the covariance entering the data-averaged replicated
partition function of \cite{takanami2025learning}.

Consequently, under the same replica-symmetric ansatz, the remainder of the
replica calculation has the same form as in the linear model, with the
identifications
\begin{equation}
M_0\leftrightarrow N_0,
\qquad
M\leftrightarrow N,
\qquad
\alpha\leftrightarrow\alpha_K,
\qquad
\kappa\leftrightarrow\kappa_K,
\qquad
\gamma\leftrightarrow\gamma_K,
\qquad
S\leftrightarrow S_K,
\qquad
\sigma^2\leftrightarrow\Delta.
\label{eq:app-kernel-takanami-parameter-map}
\end{equation}
Restoring the original normalization with the replacements
\begin{equation}
S_K\mapsto\frac{S_K}{\nu_{\rm in}},
\qquad
\Delta\mapsto
\frac{\Delta+\nu_{\rm out}}{\nu_{\rm in}},
\qquad
\lambda\mapsto\frac{\lambda}{\nu_{\rm in}},
\label{eq:app-kernel-restore-normalization}
\end{equation}
this yields the saddle-point equations 
\begin{equation}
\begin{aligned}
\widehat m_K=\widehat\chi_K
&=\frac{\gamma_K\nu_{\rm in}}
{\nu_{\rm in}(1+\chi_K)+\frac{1+\Delta}{\alpha_K}\overline\chi_K},
\\[1mm]
\widehat q_K
&=\frac{\gamma_K\nu_{\rm in}
\left[1+\Delta-2\nu_{\rm in}m_K+\nu_{\rm in}q_K
+\frac{1+\Delta}{\alpha_K}\overline q_K\right]}
{\left[\nu_{\rm in}(1+\chi_K)
+\frac{1+\Delta}{\alpha_K}\overline\chi_K\right]^2},
\\[1mm]
\chi_K
&=\frac{1}{\kappa_K\widehat\chi_KD}\Tr(S_KA_K^{-1}),
\\
\overline\chi_K
&=\frac{\nu_{\rm in}}{\kappa_K\widehat\chi_KD}\Tr(A_K^{-1}),
\\
m_K
&=\frac{\widehat m_K}{\nu_{\rm in}\widehat\chi_KD}
\Tr(S_K^2A_K^{-1}),
\\
q_K
&=\frac{\widehat m_K^2}{\nu_{\rm in}\widehat\chi_K^2D}
\Tr(S_K^3A_K^{-2})
+\frac{\widehat q_K}{\kappa_K\widehat\chi_K^2D}
\Tr\!\left[S_K\left(S_K+\frac{1+\Delta}{\alpha_K}I_D\right)A_K^{-2}\right],
\\
\overline q_K
&=\frac{\widehat m_K^2}{\widehat\chi_K^2D}
\Tr(S_K^2A_K^{-2})
+\frac{\nu_{\rm in}\widehat q_K}{\kappa_K\widehat\chi_K^2D}
\Tr\!\left[\left(S_K+\frac{1+\Delta}{\alpha_K}I_D\right)A_K^{-2}\right],
\end{aligned}
\end{equation}
where
\begin{equation}
A_K:=S_K+\left(\frac{1+\Delta}{\alpha_K}
+\frac{\lambda}{\widehat\chi_K}\right)I_D.
\end{equation}
We finally obtain \eqref{eq:SE-kernel} using
\begin{equation}
\begin{aligned}
\frac1D\Tr(A_K^{-1})
&=
\kappa_K
\left[
\frac{1-\kappa_K}{\zeta_K}
+
\kappa_K r_K(\zeta_K)
\right],
\\
\frac1D\Tr(S_KA_K^{-1})
&=
\kappa_K
\left[
1-\zeta_K r_K(\zeta_K)
\right],
\\
\frac1D\Tr(S_K^2A_K^{-1})
&=
\nu_{\rm in}
-\zeta_K
+\zeta_K^2r_K(\zeta_K),
\\
\frac1D\Tr(S_K^3A_K^{-2})
&=
\nu_{\rm in}
-2\zeta_K
+3\zeta_K^2r_K(\zeta_K)
-\zeta_K^3t_K(\zeta_K),
\\
\frac1D\Tr(S_K^2A_K^{-2})
&=
\kappa_K
\left[
1-2\zeta_Kr_K(\zeta_K)
+\zeta_K^2t_K(\zeta_K)
\right],
\\
\frac1D\Tr\!\left[
S_K(S_K+b_KI_D)A_K^{-2}
\right]
&=
\kappa_K
\left[
1+(\beta_K-2\zeta_K)r_K(\zeta_K)
+
(\zeta_K^2-\beta_K\zeta_K)t_K(\zeta_K)
\right],
\\
\frac1D\Tr\!\left[
(S_K+b_KI_D)A_K^{-2}
\right]
&=
\kappa_K
\left[
(1-\kappa_K)\frac{\beta_K}{\zeta_K^2}
+
\kappa_K
\left\{
r_K(\zeta_K)
+
(\beta_K-\zeta_K)t_K(\zeta_K)
\right\}
\right],
\end{aligned}
\label{eq:app-kernel-Gram-trace-reduction}
\end{equation}
where $b_K:=\frac{1+\Delta}{\alpha_K}$ and we also used the identity
\begin{equation}
\frac1{N_0}\Tr G_K=\frac1{DN_0}\Tr(\Theta_K^\top\Theta_K)=\nu_{\rm in}.
\end{equation}

We emphasize that the reduction in this subsection concerns the pretraining problem and its saddle-point equations. The inference errors require a separate calculation because the original test labels are noiseless and there is an additional irreducible term;
this is carried out in the following subsection.

\subsection{Derivation of the main results}
We now derive the distribution of the learned attention matrix and the
memorization and generalization errors stated in Result~\ref{res:kernel}.

Recall that
\begin{equation}
\nu_{\rm in}:=\sum_{k\in\mathcal K}c_k^2,
\qquad
\nu_{\rm out}:=\sum_{k\notin\mathcal K}c_k^2,
\qquad
\nu_{\rm in}+\nu_{\rm out}=1,
\end{equation}
and
\begin{equation}
S_K:=\frac1{N_0}
\sum_{k=1}^{N_0}
\theta(a^k)\theta(a^k)^\top.
\end{equation}
The reduction in the previous subsection maps the kernel learner to the normalized linear problem with
\begin{equation}
S_K'=\frac{S_K}{\nu_{\rm in}},
\qquad
\Delta'=\frac{\Delta+\nu_{\rm out}}{\nu_{\rm in}},
\qquad
\lambda'=\frac{\lambda}{\nu_{\rm in}}.
\label{eq:app-kernel-normalized-parameters}
\end{equation}
Applying \cite[Result E.1]{takanami2025learning} to the normalized problem gives
\begin{equation}
\widehat W_K\overset{\mathrm d}=\frac1{\widehat\chi_K}
\left[\widehat m_KS_K'+\sqrt{\frac{\widehat q_K}{N_0}}\,Z_K\left(S_K'+\frac{1+\Delta'}{\alpha_K}I_D
\right)^{1/2}\right]
\left[S_K'+\left(\frac{1+\Delta'}{\alpha_K}+\frac{\lambda'}{\widehat\chi_K}
\right)I_D\right]^{-1},
\label{eq:app-kernel-learned-matrix-normalized}
\end{equation}
where $(Z_K)_{ij}\overset{\mathrm{iid}}\sim\mathcal N(0,1)$. Substituting Eq.~\eqref{eq:app-kernel-normalized-parameters} into
Eq.~\eqref{eq:app-kernel-learned-matrix-normalized} yields
\begin{equation}
\boxed{
\widehat W_K\overset{\mathrm d}=
\frac1{\widehat\chi_K}\left[\widehat m_KS_K+\sqrt{\frac{\nu_{\rm in}\widehat q_K}{N_0}}\,Z_K\left(S_K+\frac{1+\Delta}{\alpha_K}I_D\right)^{1/2}\right]\left[S_K+\left(\frac{1+\Delta}{\alpha_K}+\frac{\lambda}{\widehat\chi_K}\right)I_D\right]^{-1}}.
\label{eq:app-kernel-learned-matrix}
\end{equation}
This gives Eq.~\eqref{eq:kernel-learned-matrix-main}.

We next derive the test errors. Define
\begin{equation}
\begin{aligned}
S_{K,\mathrm{pop}}:=\mathbb E_a[\theta(a)\theta(a)^\top]=\bigoplus_{k\in\mathcal K}
c_k^2\frac{D}{D_{k,d}}I_{D_{k,d}},
\end{aligned}
\label{eq:app-kernel-population-task-matrix}
\end{equation}
where orthogonality of spherical harmonics across different degrees was used.
In particular,
\begin{equation}
\frac1D\Tr S_{K,\mathrm{pop}}
=
\nu_{\rm in}.
\end{equation}

For notational convenience, define
\begin{equation}
S_{K,\mathrm M}:=S_K,
\qquad
S_{K,\mathrm G}:=S_{K,\mathrm{pop}},
\end{equation}
We use the test-error parameters in Eq.~\eqref{eq:kernel-errors-main}, together with
\begin{equation}
\overline q_K
:=
\frac1D\mathbb E\Tr(
\widehat W_K\widehat W_K^\top).
\label{eq:app-kernel-qbar-definition}
\end{equation}
By construction,
\begin{equation}
m_{K,\mathrm M}=m_K,
\qquad
q_{K,\mathrm M}=q_K,
\end{equation}
where $m_K$ and $q_K$ are the order parameters of the normalized problem in
Eq.~\eqref{eq:SE-kernel}.

At inference time the original labels are noiseless. Under the same
Gaussian-equivalence approximation introduced in
Section~\ref{app:expansion}, we write the test labels as
\begin{equation}
y
=
\frac1{\sqrt D}\theta(a)^\top h(x)
+\xi,
\qquad
\xi\sim\mathcal N(0,\nu_{\rm out}),
\label{eq:app-kernel-effective-test-label}
\end{equation}
where $\xi$ is independent of $h(x)$. Thus
\begin{equation}
\mathbb E[y^2\mid a]=1,
\qquad
\mathbb E[yh(x)\mid a]
=
\frac{\theta(a)}{\sqrt D}.
\label{eq:app-kernel-test-label-moments}
\end{equation}

Let
\begin{equation}
H_{\rm te}^K
=
\frac1{\widetilde L}
h(x_0)
\left(
\sum_{\ell=1}^{\widetilde L}
y_\ell h(x_\ell)
\right)^\top,
\qquad
\widehat y_0
=
\Tr\!\left(
\widehat W_K(H_{\rm te}^K)^\top
\right).
\label{eq:app-kernel-test-statistic}
\end{equation}
For a fixed test task $a$,
\begin{equation}
\mathbb E[
y_0(H_{\rm te}^K)_{ij}
\mid a]
=
\frac{\theta_i(a)\theta_j(a)}{D}.
\label{eq:app-kernel-test-cross-moment}
\end{equation}
Therefore,
\begin{equation}
\mathbb E[y_0\widehat y_0]
=
\nu_{\rm in}m_{K,\mathcal P},
\qquad
\mathcal P\in\{\mathrm M,\mathrm G\}.
\label{eq:app-kernel-test-cross-term}
\end{equation}

For the second moment, define
\begin{equation}
B_{\rm te}(a)
:=
\mathbb E_x[
y^2h(x)h(x)^\top\mid a].
\end{equation}
A direct calculation gives
\begin{equation}
\mathbb E[
(H_{\rm te}^K)_{ij}
(H_{\rm te}^K)_{i'j'}
\mid a]
=
\delta_{ii'}
\left[
\frac{\widetilde L-1}{\widetilde L D}
\theta_j(a)\theta_{j'}(a)
+
\frac1{\widetilde L}
(B_{\rm te}(a))_{jj'}
\right].
\label{eq:app-kernel-test-prompt-second-moment}
\end{equation}

As in Appendix \ref{app:kernel-reduction}, we have $\mathbb E_a[B_{\rm te}(a)]=I_D$.
For memorization, we analogously replace the empirical task average by the population quantity. Hence,
using Eq.~\eqref{eq:app-kernel-test-prompt-second-moment}, we obtain
\begin{equation}
\begin{aligned}
\mathbb E[\widehat y_0^2]
&=
\frac{\widetilde L-1}{\widetilde L}
\nu_{\rm in}q_{K,\mathcal P}
+
\frac1{\widetilde L}
\mathbb E\Tr(
\widehat W_K\widehat W_K^\top)
+o(1)\\
&=
\nu_{\rm in}q_{K,\mathcal P}
+
\frac{\overline q_K}{\widetilde\alpha_K}
+o(1),
\end{aligned}
\label{eq:app-kernel-test-prediction-second-moment}
\end{equation}
where we used $\widetilde L=\widetilde\alpha_KD$ in the second equality.

Finally, since the test labels are noiseless and normalized so that
$\mathbb E[y_0^2]=1=\nu_{\rm in}+\nu_{\rm out}$,
\begin{equation}
\begin{aligned}
\mathcal E_{\mathcal P}^{\mathrm{kernel}}
&=
\mathbb E[
(y_0-\widehat y_0)^2]\\
&=
1
-2\nu_{\rm in}m_{K,\mathcal P}
+\nu_{\rm in}q_{K,\mathcal P}
+\frac{\overline q_K}{\widetilde\alpha_K}
+o(1)\\
&=
\boxed{
\nu_{\rm out}
+
\nu_{\rm in}
\left(
1-2m_{K,\mathcal P}+q_{K,\mathcal P}
\right)
+
\frac{\overline q_K}{\widetilde\alpha_K}
}
+o(1),
\qquad
\mathcal P\in\{\mathrm M,\mathrm G\}.
\end{aligned}
\label{eq:app-kernel-test-errors}
\end{equation}
Dropping the asymptotically vanishing $o(1)$ term gives
Eq.~\eqref{eq:kernel-errors-main}.

\paragraph{Large-pretraining limit.}
Now we study the regime in which \(\alpha_K=L/D\), \(\widetilde\alpha_K=\widetilde L/D\), and
\(\kappa_K=N_0/D\) remain fixed, while
\(\gamma_K=N/(DN_0)\to\infty\).
Let
\begin{equation}
b_K:=\frac{1+\Delta}{\alpha_K},
\qquad
\delta_K:=\frac{\lambda}{\widehat\chi_K},
\qquad
A_{K,\gamma}:=S_K+(b_K+\delta_K)I_D.
\label{eq:app-kernel-large-N-definitions}
\end{equation}
Since \(\widehat m_K=\widehat\chi_K\), the first two conjugate equations in
Result~\ref{res:app-kernel-state-equations} give
\begin{equation}
\frac{\nu_{\rm in}\widehat q_K}{N_0\widehat\chi_K^2}=\frac{D}{N}\left(1+\Delta-2\nu_{\rm in}m_K+\nu_{\rm in}q_K+\frac{1+\Delta}{\alpha_K}\overline q_K\right).
\label{eq:app-kernel-finite-N-noise-coefficient}
\end{equation}
Define
\begin{equation}
\Xi_K:=1+\Delta-2\nu_{\rm in}m_K+\nu_{\rm in}q_K+\frac{1+\Delta}{\alpha_K}\overline q_K.
\label{eq:app-kernel-Xi}
\end{equation}
Then Eq.~\eqref{eq:app-kernel-learned-matrix} can be written as
\begin{equation}
\widehat W_K\overset{\mathrm d}=S_KA_{K,\gamma}^{-1}+\sqrt{\frac{D}{N}\Xi_K}\,Z_K(S_K+b_KI_D)^{1/2}A_{K,\gamma}^{-1}.
\label{eq:app-kernel-finite-N-matrix-decomposition}
\end{equation}
We first control the ridge correction \(\delta_K\).
Using the equations for \(\chi_K\) and \(\overline\chi_K\) in
Eq.~\eqref{eq:SE-kernel}, the equation for \(\widehat\chi_K\) becomes
\begin{equation}
\widehat\chi_K+\frac1{\kappa_KD}
\Tr\!\left[(S_K+b_KI_D)A_{K,\gamma}^{-1}\right]=\gamma_K.
\label{eq:app-kernel-hatchi-large-gamma}
\end{equation}
For \(\lambda\ge0\),
\begin{equation}
0\le(S_K+b_KI_D)A_{K,\gamma}^{-1}\preceq I_D,
\end{equation}
and hence
\begin{equation}
\gamma_K-\frac1{\kappa_K}\le\widehat\chi_K\le\gamma_K.
\end{equation}
Therefore
\begin{equation}
\widehat\chi_K=\gamma_K+O(1),\qquad\delta_K=O(\gamma_K^{-1})=O\!\left(\frac{D^2}{N}\right),
\label{eq:app-kernel-ridge-correction-scale}
\end{equation}
where the last relation uses
\(N=\gamma_K\kappa_KD^2\) with fixed \(\kappa_K\).

It follows that the deterministic part of
Eq.~\eqref{eq:app-kernel-finite-N-matrix-decomposition} converges to
\begin{equation}
W_{K,\infty}:=S_K(S_K+b_KI_D)^{-1}.
\label{eq:app-kernel-large-N-solution}
\end{equation}
Indeed, since \(b_K>0\),
\begin{equation}
\left\|S_KA_{K,\gamma}^{-1}-W_{K,\infty}
\right\|_{\rm op}=O(\delta_K)=O\!\left(\frac{D^2}{N}\right).
\label{eq:app-kernel-deterministic-correction}
\end{equation}
The second term in
Eq.~\eqref{eq:app-kernel-finite-N-matrix-decomposition} is a zero-mean fluctuation conditioned on \(S_K\). Its normalized squared Frobenius norm is
\begin{equation}
\begin{aligned}
\frac1D\mathbb E_{Z_K}\left\|\widehat W_K-S_KA_{K,\gamma}^{-1}\right\|_F^2=\frac{D}{N}\Xi_K\Tr\!\left[(S_K+b_KI_D)A_{K,\gamma}^{-2}\right].
\end{aligned}
\label{eq:app-kernel-fluctuation-norm}
\end{equation}
For fixed \(\alpha_K>0\),$
\frac1D\Tr\!\left[(S_K+b_KI_D)A_{K,\gamma}^{-2}\right]=O(1),
$, and \(\Xi_K=O(1)\) in the large-pretraining regime. Consequently,
\begin{equation}
\frac1D\mathbb E_{Z_K}\left\|\widehat W_K-S_KA_{K,\gamma}^{-1}\right\|_F^2=O\!\left(\frac{D^2}{N}\right).
\label{eq:app-kernel-finite-N-fluctuation}
\end{equation}
We next evaluate the limiting test errors by substituting
\(W_{K,\infty}\) into Eq.~\eqref{eq:app-kernel-test-errors}. Define
\begin{equation}
A_{K,\infty}:=S_K+b_KI_D.
\label{eq:app-kernel-large-N-A}
\end{equation}
For memorization,
\begin{equation}
\begin{aligned}
\mathcal E_{\mathrm M}^{\mathrm{kernel}}(\infty)=\nu_{\rm out}+\nu_{\rm in}-\frac2D\Tr(S_K^2A_{K,\infty}^{-1})
+\frac1D\Tr(S_K^3A_{K,\infty}^{-2})
+\frac1{\widetilde\alpha_KD}\Tr(S_K^2A_{K,\infty}^{-2}).
\end{aligned}
\label{eq:app-kernel-large-N-memorization-expanded}
\end{equation}
Using \(D^{-1}\Tr S_K=\nu_{\rm in}\) and
\(S_K=A_{K,\infty}-b_KI_D\), the first three terms reduce to
\begin{equation}
\nu_{\rm in}-\frac2D\Tr(S_K^2A_{K,\infty}^{-1})
+\frac1D\Tr(S_K^3A_{K,\infty}^{-2})=\frac{b_K^2}{D}
\Tr(S_KA_{K,\infty}^{-2}).
\end{equation}
Therefore
\begin{equation}
\boxed{\mathcal E_{\mathrm M}^{\mathrm{kernel}}(\infty)=\nu_{\rm out}+\frac{(1+\Delta)^2}{\alpha_K^2D}
\Tr(S_KA_{K,\infty}^{-2})+\frac1{\widetilde\alpha_KD}\Tr(S_K^2A_{K,\infty}^{-2}).}
\label{eq:app-kernel-large-N-memorization}
\end{equation}
Similarly, using
\(D^{-1}\Tr S_{K,\mathrm{pop}}=\nu_{\rm in}\), the generalization error is
\begin{equation}
\boxed{\mathcal E_{\mathrm G}^{\mathrm{kernel}}(\infty)=\nu_{\rm out}+\frac{(1+\Delta)^2}{\alpha_K^2D}
\Tr(S_{K,\mathrm{pop}}A_{K,\infty}^{-2})+\frac1{\widetilde\alpha_KD}\Tr(S_K^2A_{K,\infty}^{-2}).}
\label{eq:app-kernel-large-N-generalization}
\end{equation}
The deterministic correction caused by
\(\delta_K=\lambda/\widehat\chi_K\) is
\(O(D^2/N)\), while terms linear in the Gaussian fluctuation vanish after averaging over \(Z_K\) and its quadratic contribution is controlled by
Eq.~\eqref{eq:app-kernel-finite-N-fluctuation}. Hence
\begin{equation}
\boxed{\mathcal E_{\mathcal P}^{\mathrm{kernel}}=\mathcal E_{\mathcal P}^{\mathrm{kernel}}(\infty)+O\!\left(\frac{D^2}{N}\right),\qquad
\mathcal P\in\{\mathrm M,\mathrm G\}.}
\label{eq:app-kernel-large-N-rate}
\end{equation}

\paragraph{Infinite task-pool limit and context-length scalings.}
We next set $\Delta=\lambda=0$ and take first $\gamma_K\to\infty$ and then $\kappa_K\to\infty$, while allowing
$L$ and $\widetilde L$ to be different. In this ordered limit,
\begin{equation}
S_K\longrightarrow S_{K,\mathrm{pop}}=\bigoplus_{k\in\mathcal K}c_k^2\frac{D}{D_{k,d}}I_{D_{k,d}}.
\end{equation}
Consequently, the memorization and generalization errors coincide.
Writing
$S:=S_{K,\mathrm{pop}}$ and $A:=S+\frac1{\alpha_K}I_D$,
Eqs.~\eqref{eq:app-kernel-large-N-memorization} and
\eqref{eq:app-kernel-large-N-generalization} give
\begin{equation}
\begin{aligned}
\mathcal E_{\mathcal P}^{\mathrm{kernel}}
=
\nu_{\rm out}
+
\frac1{\alpha_K^2D}\Tr(SA^{-2})
+
\frac1{\widetilde\alpha_KD}\Tr(S^2A^{-2})
+o(1),
\qquad
\mathcal P\in\{\mathrm M,\mathrm G\}.
\end{aligned}
\label{eq:app-kernel-population-error-mismatch-trace}
\end{equation}
Since the degree-$k$ block of $S$ has eigenvalue
$c_k^2\frac{D}{D_{k,d}}$ with multiplicity $D_{k,d}$, we obtain
\begin{equation}
\boxed{
\mathcal E_{\mathcal P}^{\mathrm{kernel}}
=
\nu_{\rm out}
+
\sum_{k\in\mathcal K}
\left[
c_k^2
\left(
\frac{D_{k,d}}{D_{k,d}+c_k^2L}
\right)^2
+
\frac{D_{k,d}}{\widetilde L}
\left(
\frac{c_k^2L}{D_{k,d}+c_k^2L}
\right)^2
\right]
+o(1).
}
\label{eq:app-kernel-population-error-mismatch}
\end{equation}

Although the replica calculation above was formulated with a fixed positive
$\alpha_K=L/D$, we can examine
$d$-dependent context scalings by substituting $\alpha_K(d)=L(d)/D(d)$ before taking $d\to\infty$.
For every fixed $k$,
\begin{equation}
D_{k,d}
=
\frac{d^k}{k!}\left(1+O(d^{-1})\right).
\label{eq:app-kernel-harmonic-dimension-asymptotic}
\end{equation}
Consider a polynomial context-length scaling
\begin{equation}
L=\widetilde L=\alpha d^s,
\qquad
\alpha>0,\quad s>0.
\label{eq:app-kernel-general-context-scaling}
\end{equation}
For each $k\in\mathcal K$,
\begin{equation}
c_k^2
\frac{D_{k,d}}
{c_k^2L+D_{k,d}}
\longrightarrow
\begin{cases}
0,
& k<s,\\[5pt]
\displaystyle
\frac{c_k^2}{1+\alpha k!c_k^2},
& k=s,\\[9pt]
c_k^2,
& k>s,
\end{cases}
\label{eq:app-kernel-degree-scaling}
\end{equation}
where the middle case occurs only when $s$ is an integer contained in
$\mathcal K$. Therefore,
\begin{equation}
\boxed{
\mathcal E_{\mathcal P}^{\mathrm{kernel}}(\infty)
\longrightarrow
\nu_{\rm out}
+
\sum_{\substack{k\in\mathcal K\\k>s}}c_k^2
+
\mathbf 1_{\{s\in\mathbb N,\;s\in\mathcal K\}}
\frac{c_s^2}{1+\alpha s!c_s^2},
}
\qquad
\mathcal P\in\{\mathrm M,\mathrm G\}.
\label{eq:app-kernel-general-context-limit}
\end{equation}
Hence a degree-$k$ harmonic component is asymptotically fully recovered when
$L\gg d^k$, partially recovered when $L=\Theta(d^k)$, and unrecovered when
$L\ll d^k$.

\newpage
\section{Derivation of the feature learner}
\label{app:feature-details}
\setcounter{definition}{0}
\renewcommand{\theResult}{\thesection.\arabic{Result}}
\setcounter{Result}{0}

\subsection{Fixed point equations}
\label{app:feature-fixed-point}
\begin{Result}[feature learner: state equations]
\label{res:app-feature-state-equations}
Given $(\alpha_F,\gamma_F,\kappa_F,\Delta,\lambda,\sigma_\star,\sigma_0,S_F)$, the state equations are
\begin{equation}
\begin{aligned}
V
&:=\mu_1^2(q-m^2)+\frac{1+\Delta}{\alpha_F}\bar q,
\qquad
d_\chi:=\mu_1^2\chi_Q+\frac{1+\Delta}{\alpha_F}\chi_{\bar Q},
\qquad
u_0:=\mu_1mG+\sqrt V\,Z,
\\[2mm]
v^\star
&:=\operatorname*{arg\,min}_{v\in\R}
\left\{
\frac12\left[\sigma_\star(G)+\sqrt\Delta\,\varepsilon-\sigma_0(v)\right]^2
+\frac{(v-u_0)^2}{2d_\chi}
\right\},
\\[2mm]
\widehat m
&=\gamma_F\E\!\left[
\frac{v^\star-u_0}{d_\chi}
\left(
\mu_1G-\frac{\mu_1^2m}{\sqrt V}Z
\right)
\right],
\\
\widehat\chi_Q
&=
-\gamma_F\mu_1^2
\E\!\left[
\frac{(v^\star-u_0)Z}
{d_\chi\sqrt V}
\right],
\\
\widehat\chi_{\bar Q}
&=
-\gamma_F\frac{1+\Delta}{\alpha_F}
\E\!\left[
\frac{(v^\star-u_0)Z}
{d_\chi\sqrt V}
\right],
\\
\widehat q_Q
&=
\gamma_F\mu_1^2
\E\!\left[
\frac{(v^\star-u_0)^2}
{d_\chi^2}
\right],
\\
\widehat q_{\bar Q}
&=
\gamma_F\frac{1+\Delta}{\alpha_F}
\E\!\left[
\frac{(v^\star-u_0)^2}
{d_\chi^2}
\right],
\\[2mm]
m
&=\widehat m\int
\frac{s^2}{\lambda+\widehat\chi_{\bar Q}+\widehat\chi_Qs}
\,\nu_{\kappa_F}(\dd s),
\\
q
&=\int
\frac{
\widehat m^2s^3+\kappa_F^{-1}s(\widehat q_Qs+\widehat q_{\bar Q})
}{
(\lambda+\widehat\chi_{\bar Q}+\widehat\chi_Qs)^2
}
\,\nu_{\kappa_F}(\dd s),
\\
\bar q
&=\int
\frac{
\widehat m^2s^2+\kappa_F^{-1}(\widehat q_Qs+\widehat q_{\bar Q})
}{
(\lambda+\widehat\chi_{\bar Q}+\widehat\chi_Qs)^2
}
\,\nu_{\kappa_F}(\dd s),
\\
\chi_Q
&=\frac1{\kappa_F}\int
\frac{s}{\lambda+\widehat\chi_{\bar Q}+\widehat\chi_Qs}
\,\nu_{\kappa_F}(\dd s),
\\
\chi_{\bar Q}
&=\frac1{\kappa_F}\int
\frac{1}{\lambda+\widehat\chi_{\bar Q}+\widehat\chi_Qs}
\,\nu_{\kappa_F}(\dd s).
\end{aligned}
\label{eq:app-feature-state-equations}
\end{equation}
where $G,Z,\varepsilon\overset{\mathrm{iid}}{\sim}\Normal(0,1)$ and $\nu_{\kappa}$ denotes the spectrum in \eqref{eq:MP-law}.
The fixed point gives the five order parameters
$(m,q,\bar q,\chi_Q,\chi_{\bar Q})$
and their conjugates
$(\widehat m,\widehat q_Q,\widehat q_{\bar Q},
\widehat\chi_Q,\widehat\chi_{\bar Q})$.

\end{Result}

\subsection{Derivation of the state equations}
\label{app:feature-state-equation-derivation}

We condition on the task pool throughout this calculation, and evaluate the resulting normalized traces using its limiting spectrum at the end. For notational simplicity, we will drop the subscript F.

\subsubsection{The replica trick}

Write $\ell(y,v):=\frac12[y-\sigma_0(v)]^2$. The Stage-I objective in Eq.~\eqref{eq:feature-first-stage} is, 
\begin{equation}
\mathcal L_F(W)
=
\frac12\sum_{t=1}^{N}
\left[
\sigma_\star\!\left(
\frac{(a^t)^\top x_0^t}{\sqrt d}
\right)
+\epsilon_0^t
-
\sigma_0\!\left(
\Tr(W(H^t)^\top)
\right)
\right]^2
+
\frac{N_0\lambda}{2}\|W\|_F^2.
\label{eq:app-feature-loss}
\end{equation}

The dataset $\mathcal D$ and the prompt task indices $\tau^t$ are defined in Section~\ref{sec:settings}. Below, $\E_{\mathcal D\mid\mathcal A_0}$ averages over the task indices, inputs, and label noise, conditional on the task pool.
We represent loss minimization by the integral~\citep{edwards1975theory,loureiro2024four}
\begin{equation}
Z_\beta(\mathcal D):=\int\dd W\,e^{-\beta\mathcal L_F(W)},
\qquad
-\lim_{\beta\to\infty}\frac{\log Z_\beta(\mathcal D)}{\beta dN_0}
=\frac{\min_W\mathcal L_F(W)}{dN_0}.
\label{eq:app-feature-loss-integral}
\end{equation}
Here $\beta>0$ is the inverse temperature. 
To evaluate the data average of the logarithm, we use its relation to moments near $n=0$:
\begin{equation}
\begin{aligned}
\log\E[Z_\beta^n\mid\mathcal A_0]
&=\log\E_{\mathcal D\mid\mathcal A_0}e^{n\log Z_\beta}
=n\,\E_{\mathcal D\mid\mathcal A_0}\log Z_\beta+O(n^2),\\
\E_{\mathcal D\mid\mathcal A_0}\log Z_\beta
&=\lim_{n\to0}\frac1n\log\E[Z_\beta^n\mid\mathcal A_0].
\end{aligned}
\label{eq:app-feature-replica-identity}
\end{equation}
We compute the moments for positive integers $n$ and continue the expression to $n$ near zero. The calculation first takes $d\to\infty$ at fixed $n,\beta$ and fixed $(\alpha,\gamma,\kappa)$, then extracts the term linear in $n$, and finally takes $\beta\to\infty$. This continuation and the exchanges of limits needed to relate it to Eq.~\eqref{eq:app-feature-loss-integral} are formal steps in the replica method~\citep{mezard1987spin,loureiro2024four}. 

For integer $n$, $Z_\beta^n$ introduces $n$ copies $W^1,\ldots,W^n$ of the attention matrix. Conditional on $\mathcal A_0$, the prompts are independent, so
\begin{equation}
\begin{aligned}
\E[Z_\beta^n\mid\mathcal A_0]
&=\int\prod_{r=1}^n\dd W^r\exp\!\left[-\frac{\beta N_0\lambda}{2}\sum_r\Tr(W^r(W^r)^\top)\right]\left[\frac1{N_0}\sum_{k=1}^{N_0}\phi_{\beta,k}(W^1,\ldots,W^n)\right]^N,\\
\phi_{\beta,k}
&:=\E_{x_{0:L},\epsilon_{0:L}\mid a^k}\exp\!\left[-\beta\sum_{r=1}^n\ell\!\left(\sigma_\star((a^k)^\top x_0/\sqrt d)+\epsilon_0,\Tr(W^rH^\top)\right)\right].
\end{aligned}
\label{eq:app-feature-replicated-partition}
\end{equation}
The factor $N_0^{-1}\sum_k\phi_{\beta,k}$ averages one prompt over its task index; the power $N$ accounts for all training prompts.

\subsubsection{Teacher score and attention outputs}

For a prompt generated by task $a^\mu$, we first compute the moments of the teacher score $g:=(a^\mu)^\top x_0^\mu/\sqrt d$ and the attention output $u^r:=\Tr(W^r(H^\mu)^\top)$,$r=1,\ldots,n$. We use $\alpha=L/d$, $\widetilde\alpha=\widetilde L/d$, $\gamma=N/(dN_0)$, and $\kappa=N_0/d$, as in Result~\ref{res:app-feature-state-equations}.

Eq.~\eqref{eq:feature-attention-output} gives
\begin{equation}
\begin{aligned}
H^\mu
&:=
\frac{1}{L}
x_0^\mu
\left(
\sum_{\ell=1}^{L}y_\ell^\mu x_\ell^\mu
\right)^\top ,\\
u^r
&=
\Tr\!\left(W^r(H^\mu)^\top\right)
=
\frac{1}{L}\sum_{\ell=1}^{L}
y_\ell^\mu (x_0^\mu)^\top W^r x_\ell^\mu .
\end{aligned}
\label{eq:app-feature-H}
\end{equation}

We first compute the first- and second-order moments of these local fields.
\begin{proposition}
\label{lem:app-feature-local-moments}
For fixed matrices $W^r,W^s\in\R^{d\times d}$ with bounded operator norm and $d^{-1}\Tr(W^r(W^r)^\top)=O(1)$,
\begin{equation}
\begin{aligned}
\E[g\mid a^\mu]
&=0,\\
\E[u^r\mid a^\mu]
&=0,
\\[1mm]
\E[g^2\mid a^\mu]
&=1,
\\[1mm]
\E[g\,u^r\mid a^\mu]
&=
\frac{\mu_1}{d}(a^\mu)^\top W^r a^\mu+o(1),
\\[1mm]
\E[u^r u^s\mid a^\mu]
&=
\frac{\mu_1^2}{d}
(W^r a^\mu)^\top(W^s a^\mu)
+
\frac{1+\Delta}{\alpha d}
\Tr\!\left(W^r(W^s)^\top\right)
+o(1).
\end{aligned}
\label{eq:app-feature-local-moments}
\end{equation}
\end{proposition}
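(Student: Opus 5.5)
We compute each moment by conditioning on the query $x_0$ and then using independence of the demonstrations $\{(x_\ell,y_\ell)\}_{\ell\ge1}$ given $a^\mu$. Write $a=a^\mu$.

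\textbf{Teacher moments.} Since $g=a^\top x_0/\sqrt d$ with $x_0$ uniform on $\mathbb S^{d-1}(\sqrt d)$ and $\|a\|^2=d$, symmetry gives $\E[g\mid a]=0$. Rotational invariance gives $\E[x_0x_0^\top]=I_d$, hence $\E[g^2\mid a]=a^\top a/d=1$. Both identities are exact.

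\textbf{First moments of $u^r$.} Define
\[
b:=\frac1L\sum_{\ell=1}^L y_\ell x_\ell ,\qquad\text{so that}\qquad u^r=x_0^\top W^r b,
\]
with $b$ independent of $x_0$ given $a$. Then $\E[u^r\mid a]=\E[x_0]^\top W^r\E[b]=0$, since $\E x_0=0$.

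Next, $\E[y_\ell x_\ell\mid a]=\E[\sigma_\star(g_\ell)x_\ell]$. Decompose $x_\ell=g_\ell\,a/\sqrt d+x_\ell^\perp$. Conditionally on $g_\ell$, the component $x_\ell^\perp$ is symmetric, so its contribution vanishes and
\[
\E[y_\ell x_\ell\mid a]=\frac{\E[\sigma_\star(g_d)g_d]}{\sqrt d}\,a=\frac{c_{1,d}}{\sqrt d}\,a .
\]
By \eqref{eq:app-gegenbauer-hermite-convergence}, $c_{1,d}\to c_1=\mu_1$. In the same way, $\E[g\,x_0\mid a]=a/\sqrt d$. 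Combining these by independence,
\[
\E[g\,u^r\mid a]=\frac{c_{1,d}}{d}\,a^\top W^r a .
\]
The error $(c_{1,d}-\mu_1)\,d^{-1}a^\top W^r a$ is $o(1)$ because $d^{-1}|a^\top W^r a|\le\|W^r\|_{\rm op}=O(1)$.

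\textbf{Second moment.} Using $\E[x_0x_0^\top]=I_d$ and independence,
\[
\E[u^ru^s\mid a]=\E\bigl[b^\top (W^r)^\top W^s b\bigr]=\Tr\bigl((W^r)^\top W^s\,\E[bb^\top]\bigr).
\]
Split $\E[bb^\top]$ into the $\ell\neq\ell'$ and $\ell=\ell'$ terms:
\[
\E[bb^\top]=\frac{L-1}{L}\,\frac{c_{1,d}^2}{d}\,aa^\top+\frac1L\,B(a),\qquad B(a)=\E\bigl[y^2xx^\top\mid a\bigr].
\]
The first term contributes $\frac{L-1}{L}\frac{c_{1,d}^2}{d}(W^ra)^\top W^sa$, which equals $\frac{\mu_1^2}{d}(W^ra)^\top W^sa+o(1)$ by the operator-norm bound. (If $L$ is fixed, the factor $(L-1)/L$ would matter; here $L=\alpha d\to\infty$.)

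For the diagonal term, decompose $B(a)$ along $a$ and $a^\perp$. By rotational symmetry in $a^\perp$,
\[
B(a)=\beta_\perp\Bigl(I_d-\frac{aa^\top}{d}\Bigr)+\beta_\parallel\,\frac{aa^\top}{d},
\]
where $\beta_\parallel=\E[y^2g^2]$, which is $O(1)$ under moment assumptions on $\sigma_\star$. The perpendicular coefficient is
\[
\beta_\perp=\frac{\E\bigl[y^2(d-g^2)\bigr]}{d-1}=\E[y^2]+O(1/d)=1+\Delta+O(1/d).
\]
Substituting, the diagonal contribution is
\[
\frac1L\Bigl[(1+\Delta)\Tr\bigl(W^r(W^s)^\top\bigr)+O\bigl(\|W^r\|_{\rm op}\|W^s\|_{\rm op}\bigr)+O(d^{-1})\Tr(\cdots)\Bigr]
=\frac{1+\Delta}{\alpha d}\Tr\bigl(W^r(W^s)^\top\bigr)+O\!\left(\frac1L\right).
\]
This uses $L=\alpha d$ and $d^{-1}\Tr(WW^\top)=O(1)$.

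\textbf{Main obstacle.} The delicate step is controlling this rank-one anisotropic correction to $B(a)$. The operator-norm assumption is exactly what makes it negligible: the correction is $O(\|W\|_{\rm op}^2/L)$, whereas the isotropic part is $\Theta(\Tr/L)$. Everything else is exact computation plus the convergence $c_{1,d}\to\mu_1$.
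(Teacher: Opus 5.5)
Your proposal is correct and follows the paper's own route. You decompose $x_\ell=g_\ell a/\sqrt d+x_\ell^\perp$ to get $\E[y x\mid a]$, split the second moment into the $\ell\neq\ell'$ and $\ell=\ell'$ terms, and absorb the rank-one anisotropic part of $\E[y^2xx^\top\mid a]$ into an $O(1/L)$ error via the operator-norm bound; your version is more explicit, with the exact coefficient $c_{1,d}$, explicit $\beta_\parallel,\beta_\perp$, and the elementary first and second moments of $g$ and $u^r$. One small correction: $\E[y^2]=v_d+\Delta$ with $v_d=\E[\sigma_\star(g_d)^2]$, which equals $1$ only in the limit, so you should write $\beta_\perp=1+\Delta+o(1)$; this does not affect the conclusion, because $d^{-1}|\Tr(W^r(W^s)^\top)|\le\|W^r\|_{\rm op}\|W^s\|_{\rm op}=O(1)$.
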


\begin{proof}

Using $\mu_1=\E[\sigma_\star(G)G]$, we decompose
$
x=g_d\,\frac{a^\mu}{\sqrt d}+x_\perp .
$
Then
\begin{equation}
\begin{aligned}
\E[yx\mid a^\mu]
&=
\E\!\left[
y\left(
g_d\frac{a^\mu}{\sqrt d}+x_\perp
\right)
\middle|a^\mu
\right]
=
\E[yg_d\mid a^\mu]\frac{a^\mu}{\sqrt d}
+
\E\!\left[
y\,\E[x_\perp\mid g_d,a^\mu]
\middle|a^\mu
\right]
=
\mu_{1}\frac{a^\mu}{\sqrt d}+o(1),\\
\E[y^2xx^\top\mid a^\mu]
&=
(1+\Delta)I_d
+
O(1)\frac{a^\mu(a^\mu)^\top}{d}
+
o(1).
\end{aligned}
\label{eq:app-feature-single-token-moments}
\end{equation}

The mixed moment of $(g,u^r)$ is therefore
\begin{align}
\E[g\,u^r\mid a^\mu]
&=
\frac{1}{L}
\sum_{\ell=1}^{L}
\E\!\left[
g\,
y_\ell^\mu
(x_0^\mu)^\top W^r x_\ell^\mu
\middle|a^\mu
\right]
\nonumber\\
&=
\frac{1}{L}
\sum_{\ell=1}^{L}
\frac{1}{\sqrt d}
(a^\mu)^\top
W^r
\E[y_\ell^\mu x_\ell^\mu\mid a^\mu]
\nonumber\\
&=
\frac{\mu_1}{d}
(a^\mu)^\top W^r a^\mu
+o(1).
\label{eq:app-feature-mixed-moment}
\end{align}

For the second-order student statistic $u^r$,
\begin{align}
&\E\!\left[u^\mu(W^r)u^\mu(W^s)\middle|a^\mu\right]
\\&=
\frac{1}{L^2}\sum_{\ell\neq\ell'}
\E\!\left[y_\ell^\mu y_{\ell'}^\mu(x_0^\mu)^\top W^r x_\ell^\mu(x_0^\mu)^\top W^s x_{\ell'}^\mu\middle|a^\mu\right]+\frac{1}{L^2}
\sum_{\ell=1}^{L}
\E\!\left[
(y_\ell^\mu)^2
(x_0^\mu)^\top W^r x_\ell^\mu
(x_0^\mu)^\top W^s x_\ell^\mu
\middle|a^\mu
\right]
\nonumber\\
&=
\frac{L-1}{L}
\frac{\mu_1^2}{d}
(W^r a^\mu)^\top(W^s a^\mu)
+
\frac{1}{L}
\Tr\!\left[
W^r
\E[y^2xx^\top\mid a^\mu]
(W^s)^\top
\right]
\nonumber\\
&=
\frac{\mu_1^2}{d}
(W^r a^\mu)^\top(W^s a^\mu)
+
\frac{1+\Delta}{\alpha d}
\Tr\!\left(W^r(W^s)^\top\right)
+o(1).
\label{eq:app-feature-second-moment}
\end{align}

These calculations prove Eq.~\eqref{eq:app-feature-local-moments}.
\end{proof}

These definitions are imposed inside $\phi_{\beta,k}$ by the Fourier representations of Dirac delta functions
\begin{equation}
1=\int\dd g\,\delta(g-(a^k)^\top x_0/\sqrt d)=\int\frac{\dd g\,\dd\widehat g}{2\pi}e^{i\widehat g(g-(a^k)^\top x_0/\sqrt d)},
\label{eq:app-feature-local-delta-g}
\end{equation}
\begin{equation}
1=\int\dd u^r\,\delta(u^r-\Tr(W^rH^\top))=\int\frac{\dd u^r\,\dd\widehat u^r}{2\pi}e^{i\widehat u^r(u^r-\Tr(W^rH^\top))},\qquad r=1,\ldots,n.
\label{eq:app-feature-local-deltas}
\end{equation}
The variables $\widehat g,\widehat u^1,\ldots,\widehat u^n$ are temporary Fourier variables and are unrelated to the conjugate order parameters introduced later.  If $p_\Delta$ denotes the density of $\Normal(0,\Delta)$, substituting Eqs.~\eqref{eq:app-feature-local-delta-g} and \eqref{eq:app-feature-local-deltas} gives
\begin{equation}
\begin{aligned}
\phi_{\beta,k}
&=\int\dd\epsilon_0p_\Delta(\epsilon_0)\int\frac{\dd g\,\dd\widehat g}{2\pi}\prod_{r=1}^n\frac{\dd u^r\,\dd\widehat u^r}{2\pi}
e^{-\beta\sum_r\ell(\sigma_\star(g)+\epsilon_0,u^r)}e^{i\widehat gg+i\sum_r\widehat u^ru^r}\\
&\quad\times\E_{x_{0:L},\epsilon_{1:L}\mid a^k}\exp\!\left[-i\widehat g(a^k)^\top x_0/\sqrt d-i\sum_r\widehat u^r\Tr(W^rH^\top)\right].
\end{aligned}
\label{eq:app-feature-single-prompt-fourier}
\end{equation}

\begin{assumption}[Gaussian equivalence of the local field]
\label{ass:app-feature-local-gaussian}
For fixed $n$ and $L/d\to\alpha\in(0,\infty)$, condition on $a^k,W^1,\ldots,W^n$, independently of the current prompt. We assume that the teacher score and the attention outputs can be replaced, in an averaged prompt, by jointly Gaussian variables with the same leading first two moments:
\begin{equation}
(g,u^1,\ldots,u^n)\Longrightarrow\Normal(0,C_k),
\label{eq:app-feature-local-gaussianity}
\end{equation}
where
\begin{equation}
\begin{aligned}
(C_k)_{00}&=1,\\
(C_k)_{0r}&=\frac{\mu_1}{d}(a^k)^\top W^ra^k+o(1),\\
(C_k)_{rs}&=\frac{\mu_1^2}{d}(W^ra^k)^\top(W^sa^k)+\frac{1+\Delta}{\alpha d}\Tr(W^r(W^s)^\top)+o(1).
\end{aligned}
\label{eq:app-feature-task-local-covariance}
\end{equation}
Here we assume the first two moments are matched, and higher-order correlations are negligible in the high-dimensional limit. 
\end{assumption}

Let $\bm z:=(g,u^1,\ldots,u^n)^\top$ and $\widehat{\bm z}:=(\widehat g,\widehat u^1,\ldots,\widehat u^n)^\top$.  Assumption~\ref{ass:app-feature-local-gaussian} gives
\begin{equation}
\E_{x_{0:L},\epsilon_{1:L}\mid a^k}e^{-i\widehat{\bm z}^\top\bm z}=\exp\!\left(-\frac12\widehat{\bm z}^\top C_k\widehat{\bm z}\right)+o(1).
\label{eq:app-feature-characteristic-function}
\end{equation}
Substituting this function into Eq.~\eqref{eq:app-feature-single-prompt-fourier} and integrating the Fourier variables gives
\begin{equation}
\begin{aligned}
\phi_{\beta,k}
&=\int\dd\epsilon_0p_\Delta(\epsilon_0)\int\dd g\prod_r\dd u^r\int\frac{\dd^{n+1}\widehat{\bm z}}{(2\pi)^{n+1}}
e^{-\beta\sum_r\ell(\sigma_\star(g)+\epsilon_0,u^r)}e^{i\widehat{\bm z}^\top\bm z-\widehat{\bm z}^\top C_k\widehat{\bm z}/2}+o(1)\\
&=\int\dd\epsilon_0p_\Delta(\epsilon_0)\int\dd g\prod_r\dd u^r\,\Normal((g,u^1,\ldots,u^n)\mid0,C_k)e^{-\beta\sum_r\ell(\sigma_\star(g)+\epsilon_0,u^r)}+o(1).
\end{aligned}
\label{eq:app-feature-fourier-to-gaussian}
\end{equation}
The Fourier integrals thus replace the average over the prompt by a Gaussian integral over $(g,u^1,\ldots,u^n)$.

\subsubsection{Order parameters and replica symmetry}

As in \cite{takanami2025learning}, we assume self-averaging with respect to the tasks in the pool. The task-dependent quadratic forms can then be replaced by their averages over the task pool. We therefore introduce
\begin{equation}
\begin{aligned}
m^r
&:=
\frac{1}{dN_0}
\sum_{k=1}^{N_0}
(a^k)^\top W^r a^k
=
\frac1d\Tr(W^rS),
\\
Q^{rs}
&:=
\frac{1}{dN_0}
\sum_{k=1}^{N_0}
(W^ra^k)^\top(W^sa^k)
=
\frac1d\Tr(W^rS(W^s)^\top),
\\
\bar Q^{rs}
&:=
\frac1d\Tr(W^r(W^s)^\top).
\end{aligned}
\label{eq:app-feature-order-parameters}
\end{equation}

Equations~\eqref{eq:app-feature-task-local-covariance} give
\begin{equation}
\begin{aligned}
C_{00}&=1,\\
C_{0r}&=\mu_1m^r,\\
C_{rs}&=\mu_1^2Q^{rs}+\frac{1+\Delta}{\alpha}\bar Q^{rs}.
\end{aligned}
\label{eq:app-feature-local-covariance}
\end{equation}

Hence
\begin{equation}
\phi_\beta(m,Q,\bar Q)=\int\dd\epsilon_0p_\Delta(\epsilon_0)\int\dd g\prod_r\dd u^r\,\Normal((g,u^1,\ldots,u^n)\mid0,C)e^{-\beta\sum_r\ell(\sigma_\star(g)+\epsilon_0,u^r)},
\label{eq:app-feature-common-prompt-factor}
\end{equation}
and
\begin{equation}
\E[Z_\beta^n\mid\mathcal A_0]\simeq\int\prod_r\dd W^r\exp\!\left[-\frac{\beta N_0\lambda}{2}\sum_r\Tr(W^r(W^r)^\top)\right][\phi_\beta(m,Q,\bar Q)]^N.
\label{eq:app-feature-replicated-after-prompt}
\end{equation}

We insert Eq.~\eqref{eq:app-feature-order-parameters} into the integral over the matrices using Dirac delta functions. Their Fourier representations introduce the conjugate parameters $\widehat m^r,\widehat Q^{rs},\widehat{\bar Q}^{rs}$:
\begin{equation}
\begin{aligned}
1&=\int\dd m^r\,\delta\!\left(m^r-\frac1d\Tr(W^rS)\right)\propto\int\dd m^r\dd\widehat m^r\exp\!\left[-dN_0\widehat m^rm^r+N_0\widehat m^r\Tr(W^rS)\right],\\[1mm]
1&=\int\dd Q^{rs}\,\delta\!\left(Q^{rs}-\frac1d\Tr(W^rS(W^s)^\top)\right)\propto\int\dd Q^{rs}\dd\widehat Q^{rs}\exp\!\left[-\frac{dN_0}{2}\widehat Q^{rs}Q^{rs}+\frac{N_0}{2}\widehat Q^{rs}\Tr(W^rS(W^s)^\top)\right],\\[1mm]
1&=\int\dd\bar Q^{rs}\,\delta\!\left(\bar Q^{rs}-\frac1d\Tr(W^r(W^s)^\top)\right)\propto\int\dd\bar Q^{rs}\dd\widehat{\bar Q}^{rs}\exp\!\left[-\frac{dN_0}{2}\widehat{\bar Q}^{rs}\bar Q^{rs}+\frac{N_0}{2}\widehat{\bar Q}^{rs}\Tr(W^r(W^s)^\top)\right].
\end{aligned}
\label{eq:app-feature-overlap-deltas}
\end{equation}
The conjugate integrations are along the Fourier contours, with the factors of $i$ absorbed into the conjugate variables. Substituting these identities separates the dependence on the matrices from that on the order parameters:
\begin{equation}
\E[Z_\beta^n\mid\mathcal A_0]
\simeq\int\dd\Omega\,e^{-n\beta dN_0 f_{n,\beta}(\Omega)},
\label{eq:app-feature-saddle}
\end{equation}
where $\Omega=\{m^r,Q^{rs},\bar Q^{rs},\widehat m^r,\widehat Q^{rs},\widehat{\bar Q}^{rs}\}$ and
\begin{equation}
\begin{aligned}
f_{n,\beta}(\Omega)
&=-\frac{\gamma}{n\beta}\log\phi_\beta
+\frac1{n\beta}\sum_r\widehat m^rm^r
+\frac1{2n\beta}\sum_{r,s}
(\widehat Q^{rs}Q^{rs}+\widehat{\bar Q}^{rs}\bar Q^{rs})
-\frac1{n\beta dN_0}\log\int\prod_r\dd W^r\,e^{G_n},\\
G_n
&=\frac{N_0}{2}\sum_{r,s}\widehat Q^{rs}\Tr(W^rS(W^s)^\top)+\frac{N_0}{2}\sum_{r,s}\widehat{\bar Q}^{rs}\Tr(W^r(W^s)^\top)\\
&\quad+N_0\sum_r\widehat m^r\Tr(W^rS)-\frac{\beta N_0\lambda}{2}\sum_r\Tr(W^r(W^r)^\top).
\end{aligned}
\label{eq:app-feature-replica-action}
\end{equation}
In the high-dimensional limit, the integral in Eq.~\eqref{eq:app-feature-saddle} is evaluated at a stationary point of $f_{n,\beta}$. We next substitute the replica-symmetric form and take the limits described above.

\begin{assumption}[Replica symmetry]
\label{ass:app-feature-replica-symmetry}
We assume that the stationary solution is invariant under permutations of the replica indices. In the zero-temperature limit, write
\begin{equation}
\begin{aligned}
m^r&=m,\\
Q^{rs}&=q+\frac{\chi_Q}{\beta}\delta_{rs},\\
\bar Q^{rs}&=\bar q+\frac{\chi_{\bar Q}}{\beta}\delta_{rs}.
\end{aligned}
\label{eq:app-feature-rs-ansatz}
\end{equation}
Here $q,\bar q$ are the off-diagonal values, and $\chi_Q,\chi_{\bar Q}$ give the leading difference between diagonal and off-diagonal values. We use the same symmetry for the conjugate parameters~\cite{takanami2025learning,loureiro2024four}.
\end{assumption}

The $1/\beta$ differences retain the variance around a stable minimum as the loss-weighted distribution concentrates. This scaling holds for a strictly convex quadratic objective, as in the identity-readout case with $\lambda>0$; for a general $\sigma_0$, finite limits of these scaled differences are part of the ansatz~\cite{loureiro2024four}.
The corresponding scaling of the conjugate parameters is
\begin{equation}
\begin{aligned}
\widehat Q^{rs}&=\beta^2\widehat q_Q-\beta\widehat\chi_Q\delta_{rs},\\
\widehat{\bar Q}^{rs}&=\beta^2\widehat q_{\bar Q}-\beta\widehat\chi_{\bar Q}\delta_{rs},\\
\widehat m^r&=\beta\widehat m.
\end{aligned}
\label{eq:app-feature-conjugate-ansatz}
\end{equation}

After substituting Eqs.~\eqref{eq:app-feature-rs-ansatz} and \eqref{eq:app-feature-conjugate-ansatz}, the remaining variables are
\begin{equation}
\boldsymbol x=(m,q,\bar q,\chi_Q,\chi_{\bar Q},
\widehat m,\widehat q_Q,\widehat q_{\bar Q},
\widehat\chi_Q,\widehat\chi_{\bar Q}).
\label{eq:scalar-variables-loss}
\end{equation}
Grouping the terms in Eq.~\eqref{eq:app-feature-replica-action} gives
\begin{equation}
f(\boldsymbol x)
=\lim_{\beta\to\infty}\lim_{n\to0}
\left.f_{n,\beta}(\Omega)\right|_{\mathrm{RS}}
=f_1(\boldsymbol x)+f_2(\boldsymbol x)+f_3(\boldsymbol x),
\label{eq:app-feature-scalar-limit}
\end{equation}
up to additive constants independent of $\boldsymbol x$. With $u_0,V,d_\chi$ as defined in Eqs.~\eqref{eq:app-feature-state-equations} and $A_F:=(\lambda+\widehat\chi_{\bar Q})I_d+\widehat\chi_QS_F,\ T_F:=\widehat q_QS_F+\widehat q_{\bar Q}I_d.$, we would obtain
\begin{equation}
\begin{aligned}
f_1(\boldsymbol x)
&=\gamma\E_{G,Z,\varepsilon}\min_{v\in\R}
\left\{\ell\!\left(\sigma_\star(G)+\sqrt\Delta\,\varepsilon,v\right)
+\frac{(v-u_0)^2}{2d_\chi}\right\},\\
f_2(\boldsymbol x)
&=\widehat m m
-\frac12\widehat\chi_Qq-\frac12\widehat\chi_{\bar Q}\bar q
+\frac12\widehat q_Q\chi_Q+\frac12\widehat q_{\bar Q}\chi_{\bar Q},\\
f_3(\boldsymbol x)
&=-\frac1{2d}\left[
\widehat m^2\Tr(S^2A_F^{-1})
+\frac1\kappa\Tr(T_FA_F^{-1})\right].
\end{aligned}
\label{eq:app-feature-rs-potential}
\end{equation}
Here $G,Z,\varepsilon$ are independent standard Gaussian variables. The three functions correspond, respectively, to the term containing $\log\phi_\beta$, the sums over the order parameters and their conjugates, and the logarithm of the matrix integral in Eq.~\eqref{eq:app-feature-replica-action}. The conjugate parameters act as auxiliary multipliers enforcing the definitions of the order parameters.

The state equations follow from
\begin{equation}
\frac{\partial f}{\partial x_i}=0,\qquad i=1,\ldots,10.
\label{eq:app-feature-scalar-stationarity}
\end{equation}
We seek a stationary point of $f$. The function determines the training statistics. Below we calculate $f_1,f_2,f_3$ in turn and then take these derivatives.

\subsubsection{Calculation of \texorpdfstring{$f_1$}{f1}}

To calculate $f_1$, we evaluate the factor $\phi_\beta$ obtained above. 
The teacher score $g$ and the replicated attention outputs $(u^1,\ldots,u^n)$ can be replaced by jointly Gaussian variables with the covariance in Eq.~\eqref{eq:app-feature-local-covariance}. 
After imposing the replica-symmetric form in Eq.~\eqref{eq:app-feature-rs-ansatz}, this Gaussian vector can be represented as
\begin{equation}
g=G,\qquad
u^r
=
\mu_1mG+\sqrt V\,Z+\sqrt{\frac{d_\chi}{\beta}}\,\eta_r,
\qquad
G,Z,\eta_1,\ldots,\eta_n
\overset{\rm iid}{\sim}\Normal(0,1),
\label{eq:app-feature-rs-fields}
\end{equation}
where
$V=\mu_1^2(q-m^2)+(1+\Delta)\bar q/\alpha$
and
$d_\chi=\mu_1^2\chi_Q+(1+\Delta)\chi_{\bar Q}/\alpha$,
as defined in Result~\ref{res:app-feature-state-equations}.

The first two terms in $u^r$ are shared by all replicas in Eq.~\eqref{eq:app-feature-local-covariance}. 
The last term is replica-dependent.
Indeed,
$\E[Gu^r]=\mu_1m$ and
$\E[u^ru^s]=\mu_1^2q+(1+\Delta)\bar q/\alpha+(d_\chi/\beta)\delta_{rs}$,
which agrees with the covariance obtained from the order parameters.

Using this Gaussian representation and the definition of $u_0$ in Eq.~\eqref{eq:app-feature-state-equations}, we have
\begin{equation}
\phi_\beta
=\E_{G,Z,\varepsilon}\!\left\{
\left[
\int\frac{e^{-\eta^2/2}\dd\eta}{\sqrt{2\pi}}\,
e^{-\beta\ell(\sigma_\star(G)+\sqrt\Delta\,\varepsilon,\,
u_0+\sqrt{d_\chi/\beta}\,\eta)}
\right]^n\right\}.
\label{eq:app-feature-prompt-power}
\end{equation}
Extracting the coefficient linear in $n$ gives
\begin{equation}
\lim_{n\to0}\frac1n\log\phi_\beta
=
\E_{G,Z,\varepsilon}
\log
\int
\frac{e^{-\eta^2/2}\dd\eta}{\sqrt{2\pi}}\,
\exp\!\left[
-\frac{\beta}{2}
\left(
\sigma_\star(G)+\sqrt\Delta\,\varepsilon
-\sigma_0\!\left(u_0+\sqrt{\frac{d_\chi}{\beta}}\eta\right)
\right)^2
\right].
\label{eq:app-feature-one-prompt-gaussian}
\end{equation}
Thus the original average over all inputs and demonstrations in a prompt has been reduced to an average over a few scalar Gaussian variables.

To take $\beta\to\infty$, set $v=u_0+\sqrt{d_\chi/\beta}\,\eta$. Writing $y=\sigma_\star(G)+\sqrt\Delta\,\varepsilon$ within this calculation,
\begin{equation}
\int\frac{\dd\eta}{\sqrt{2\pi}}\,
e^{-\eta^2/2-\beta\ell(y,u_0+\sqrt{d_\chi/\beta}\,\eta)}
=
\sqrt{\frac{\beta}{2\pi d_\chi}}\int\dd v\,
e^{-\beta\{\ell(y,v)+(v-u_0)^2/(2d_\chi)\}}.
\label{eq:app-feature-prompt-change-variable}
\end{equation}
The quadratic term comes from the Gaussian density. Taking the negative logarithm and dividing by $\beta$ makes the logarithm of the prefactor vanish, while the integral gives the minimum of the scalar objective. Hence
\begin{equation}
f_1(\boldsymbol x)=-\lim_{\beta\to\infty}\lim_{n\to0}
\frac{\gamma}{n\beta}\log\phi_\beta
=
\gamma\E_{G,Z,\varepsilon}\min_{v\in\R}
\left\{
\frac12[\sigma_\star(G)+\sqrt\Delta\,\varepsilon-\sigma_0(v)]^2
+\frac{(v-u_0)^2}{2d_\chi}
\right\}.
\label{eq:app-feature-output-potential}
\end{equation}

We denote the minimizer of the scalar problem in Eq.~\eqref{eq:app-feature-output-potential} by $v^\star$. 
It is this scalar minimization that enters the fixed-point equations. 
For the identity Stage-I readout $\sigma_0(v)=v$, the objective is strictly convex and $v^\star$ is unique.

\subsubsection{Calculation of \texorpdfstring{$f_2$}{f2}}

For $f_2$, we evaluate the sums in Eq.~\eqref{eq:app-feature-replica-action}. For the $Q$ terms, substituting the two ansatzes gives
\begin{equation}
\begin{aligned}
\frac1{2n\beta}\sum_{r,s}\widehat Q^{rs}Q^{rs}
&=\frac{n\beta}{2}\widehat q_Qq
+\frac12\widehat q_Q\chi_Q
-\frac12\widehat\chi_Qq
-\frac1{2\beta}\widehat\chi_Q\chi_Q\xrightarrow{\,n\to0\ \mathrm{then}\ \beta\to\infty\,}
\frac12\widehat q_Q\chi_Q-\frac12\widehat\chi_Qq\\
\qquad
\frac1{n\beta}\sum_r\widehat m^rm^r
&=\widehat m m.
\end{aligned}
\label{eq:app-feature-constraint-contribution}
\end{equation}
The $\bar Q$ terms give the same expression with bars. Combining these terms gives $f_2$ in Eq.~\eqref{eq:app-feature-rs-potential}.

\subsubsection{Calculation of \texorpdfstring{$f_3$}{f3}}

To calculate $f_3$, we evaluate the matrix integral in Eq.~\eqref{eq:app-feature-replica-action}.

Substitution into $G_n$ gives
\begin{equation}
G_n=\frac{N_0\beta^2}{2}\Tr\!\left[\left(\sum_rW^r\right)T_F\left(\sum_sW^s\right)^\top\right]-\beta N_0\sum_r\left[\frac12\Tr(W^rA_F(W^r)^\top)-\widehat m\Tr(W^rS)\right].
\label{eq:app-feature-weight-exponent}
\end{equation}
The remaining difficulty in the matrix integral is the quadratic coupling between different replicas through
$\sum_r W^r$. We remove this coupling by introducing an auxiliary Gaussian matrix. For
$(Z_W)_{ij}\overset{\rm iid}{\sim}\Normal(0,1)$, the Hubbard--Stratonovich identity gives
\begin{equation}
\exp\!\left\{
\frac{N_0\beta^2}{2}
\Tr\!\left[
\left(\sum_rW^r\right)
T_F
\left(\sum_sW^s\right)^\top
\right]
\right\}
=
\E_{Z_W}
\exp\!\left[
\beta\sqrt{N_0}
\sum_r
\Tr(W^rT_F^{1/2}Z_W^\top)
\right].
\label{eq:app-feature-hs-identity}
\end{equation}

Conditioned on $Z_W$, the replicas are now independent. Therefore the matrix integral factorizes as
\begin{equation}
\begin{aligned}
\int\prod_r\dd W^r\,e^{G_n}
&=
\E_{Z_W}
\prod_{r=1}^n
\int\dd W^r
\exp\!\left\{
-\beta N_0
\left[
\begin{aligned}
&\frac12\Tr(W^rA_F(W^r)^\top)
-\widehat m\Tr(W^rS)
\\[-1mm]
&-\frac1{\sqrt{N_0}}
\Tr(Z_WT_F^{1/2}(W^r)^\top)
\end{aligned}
\right]
\right\}
\\
&=
\E_{Z_W}\!\left\{
\left[\int\dd W\,e^{-\beta N_0\mathcal L_{\mathrm{eff}}(W;Z_W)}\right]^n
\right\},
\end{aligned}
\label{eq:app-feature-weight-factorization}
\end{equation}
where
\begin{equation}
\mathcal L_{\mathrm{eff}}(W;Z_W)
:=
\frac12\Tr(WA_FW^\top)
-\widehat m\Tr(WS)
-\frac1{\sqrt{N_0}}
\Tr(Z_WT_F^{1/2}W^\top).
\label{eq:app-feature-effective-objective}
\end{equation}

For positive definite $A_F$, the integral over $W$ is Gaussian. The minimizer of its quadratic exponent satisfies
\begin{equation}
0
=
WA_F-\widehat mS-\frac1{\sqrt{N_0}}Z_WT_F^{1/2}
\qquad\Longrightarrow\qquad
W_F(Z_W)
=
\left(
\widehat mS+\frac1{\sqrt{N_0}}Z_WT_F^{1/2}
\right)A_F^{-1}.
\label{eq:app-feature-effective-matrix}
\end{equation}

Completing the square gives the exact integral
\begin{equation}
\int\dd W\,e^{-\beta N_0\mathcal L_{\mathrm{eff}}(W;Z_W)}
=e^{-\beta N_0\mathcal L_{\mathrm{eff}}(W_F(Z_W);Z_W)}
\left(\frac{2\pi}{\beta N_0}\right)^{d^2/2}
(\det A_F)^{-d/2}.
\label{eq:app-feature-matrix-gaussian-integral}
\end{equation}
Its contribution to Eq.~\eqref{eq:app-feature-replica-action}, after taking $n\to0$, is therefore
\begin{equation}
\begin{aligned}
-\lim_{n\to0}\frac1{n\beta dN_0}
\log\int\prod_r\dd W^r\,e^{G_n}
&=-\frac1{\beta dN_0}\E_{Z_W}
\log\int\dd W\,e^{-\beta N_0\mathcal L_{\mathrm{eff}}(W;Z_W)}\\
&=\frac1d\E_{Z_W}\min_W\mathcal L_{\mathrm{eff}}(W;Z_W)
+\frac{\log\det A_F}{2\beta N_0}
+\frac{d}{2\beta N_0}\log\frac{\beta N_0}{2\pi}.
\end{aligned}
\label{eq:app-feature-matrix-limit}
\end{equation}
The last term is independent of the order parameters. If $d^{-1}\log\det A_F$ remains bounded, the determinant term vanishes as $\beta\to\infty$. Substituting the minimizer into $\mathcal L_{\mathrm{eff}}$ gives
\begin{equation}
\begin{aligned}
f_3(\boldsymbol x)=\frac1d\E_{Z_W}\min_W \mathcal L_{\mathrm{eff}}(W;Z_W)
&=
-\frac1{2d}
\E_{Z_W}
\Tr\!\left[
\left(
\widehat mS+\frac{Z_WT_F^{1/2}}{\sqrt{N_0}}
\right)
A_F^{-1}
\left(
\widehat mS+\frac{Z_WT_F^{1/2}}{\sqrt{N_0}}
\right)^\top
\right]
\\
&=
-\frac1{2d}
\left[
\widehat m^2\Tr(S^2A_F^{-1})
+
\frac1{N_0}
\E_{Z_W}
\Tr\!\left(
Z_WT_F^{1/2}A_F^{-1}T_F^{1/2}Z_W^\top
\right)
\right]
\\
&=
-\frac1{2d}
\left[
\widehat m^2\Tr(S^2A_F^{-1})
+
\frac{d}{N_0}\Tr(T_FA_F^{-1})
\right]
\\
&=
-\frac1{2d}
\left[
\widehat m^2\Tr(S^2A_F^{-1})
+
\frac1\kappa\Tr(T_FA_F^{-1})
\right],
\end{aligned}
\label{eq:app-feature-weight-potential}
\end{equation}
where we used
$\E[Z_WT_F^{1/2}A_F^{-1}T_F^{1/2}Z_W^\top]=\Tr(T_FA_F^{-1})I_d$ and $N_0=\kappa d$ in the last two lines.

Together, Eqs.~\eqref{eq:app-feature-output-potential}, \eqref{eq:app-feature-constraint-contribution}, and \eqref{eq:app-feature-weight-potential} give $f_1,f_2,f_3$ in Eq.~\eqref{eq:app-feature-rs-potential}, and hence $f=f_1+f_2+f_3$.

\subsubsection{Stationarity}

We first differentiate $f$ with respect to the five order parameters $(m,q,\bar q,\chi_Q,\chi_{\bar Q})$. At a differentiable scalar minimum, the derivative through $v^\star$ vanishes because the scalar objective is stationary there. Thus
\begin{equation}
\begin{aligned}
0=\partial_m f
&=\widehat m-\gamma\E\!\left[\frac{v^\star-u_0}{d_\chi}
\left(\mu_1G-\frac{\mu_1^2m}{\sqrt V}Z\right)\right],\\
0=2\partial_q f
&=-\widehat\chi_Q-\gamma\mu_1^2\E\!\left[
\frac{(v^\star-u_0)Z}{d_\chi\sqrt V}\right],\\
0=2\partial_{\bar q} f
&=-\widehat\chi_{\bar Q}-\gamma\frac{1+\Delta}{\alpha}\E\!\left[
\frac{(v^\star-u_0)Z}{d_\chi\sqrt V}\right],\\
0=2\partial_{\chi_Q} f
&=\widehat q_Q-\gamma\mu_1^2\E\!\left[
\frac{(v^\star-u_0)^2}{d_\chi^2}\right],\\
0=2\partial_{\chi_{\bar Q}} f
&=\widehat q_{\bar Q}-\gamma\frac{1+\Delta}{\alpha}\E\!\left[
\frac{(v^\star-u_0)^2}{d_\chi^2}\right].
\end{aligned}
\label{eq:app-feature-output-stationarity}
\end{equation}
These are the five conjugate equations in Eq.~\eqref{eq:app-feature-state-equations}. Taking derivatives of $f$ with respect to the conjugates gives the remaining equations:
\begin{equation}
\begin{aligned}
0=\partial_{\widehat m}f
&=m-\frac{\widehat m}{d}\Tr(S^2A_F^{-1}),\\
0=-2\partial_{\widehat\chi_Q}f
&=q-\frac{\widehat m^2}{d}\Tr(S^3A_F^{-2})
-\frac1{\kappa d}\Tr(T_FS A_F^{-2}),\\
0=-2\partial_{\widehat\chi_{\bar Q}}f
&=\bar q-\frac{\widehat m^2}{d}\Tr(S^2A_F^{-2})
-\frac1{\kappa d}\Tr(T_FA_F^{-2}),\\
0=2\partial_{\widehat q_Q}f
&=\chi_Q-\frac1{\kappa d}\Tr(SA_F^{-1}),\\
0=2\partial_{\widehat q_{\bar Q}}f
&=\chi_{\bar Q}-\frac1{\kappa d}\Tr(A_F^{-1}).
\end{aligned}
\label{eq:app-feature-parameter-stationarity}
\end{equation}
Together, the derivatives with respect to the five order parameters and their five conjugates recover Result~\ref{res:app-feature-state-equations} under the fact that as $d,N_0\to\infty$ with $N_0/d\to\kappa_F\in(0,\infty)$, the empirical spectrum of $S_F$ converges almost surely to $\nu_{\kappa_F}(\dd s)$.

\subsection{Derivation of the main results}
\label{app:feature-main-results}

Equation~\eqref{eq:app-feature-effective-matrix} gives the learned attention matrix stated in the main text:
\begin{equation}
\boxed{\widehat W\overset{\rm d}=\left(\widehat mS+\frac1{\sqrt{N_0}}Z(\widehat q_QS+\widehat q_{\bar Q}I_d)^{1/2}\right)\bigl[(\lambda+\widehat\chi_{\bar Q})I_d+\widehat\chi_QS\bigr]^{-1}}.
\label{eq:app-feature-learned-matrix}
\end{equation}
After the fixed point is solved, Stage II is trained on fresh prompts from the training distribution. Its attention output has moments
\begin{equation}
\begin{aligned}
\E[Gu_{\rm tr}]&=\mu_1m,\\
\E[u_{\rm tr}^2]&=\mu_1^2q+\frac{1+\Delta}{\alpha}\bar q
\end{aligned}
\label{eq:app-feature-readout-training-moments}
\end{equation}
Hence, the output of the attention can be represented as 
\begin{equation}
u_{\rm tr}
=
\mu_1mG
+
\sqrt{
\mu_1^2(q-m^2)
+
\frac{1+\Delta}{\alpha}\bar q
}\,Z,
\qquad
G,Z\overset{\rm iid}{\sim}\Normal(0,1).
\label{eq:output-feature-readout-training-moments}
\end{equation}

We next evaluate the same learned attention matrix on the noiseless inference prompts. The only changes are the task distribution and the prompt length $\widetilde L=\widetilde\alpha d$. For a task drawn from the training pool, the relevant overlaps are $m,q,\bar q$, whereas a fresh task averages over an isotropic direction and therefore involves $\E_Z\Tr\widehat W$. This gives
\begin{equation}
\begin{aligned}
C_{\rm M}&=\mu_1m,
&
V_{\rm M}&=\mu_1^2q+\frac{\bar q}{\widetilde\alpha},
\\
C_{\rm G}&=\frac{\mu_1}{d}\E_Z\Tr\widehat W,
&
V_{\rm G}&=\left(\mu_1^2+\frac1{\widetilde\alpha}\right)\bar q.
\end{aligned}
\label{eq:app-feature-test-moments}
\end{equation}
Thus, for either test protocol $\mathcal P\in\{\mathrm M,\mathrm G\}$, the attention output can be represented as
\begin{equation}
u_{\mathcal P}
=
C_{\mathcal P}G
+
\sqrt{V_{\mathcal P}-C_{\mathcal P}^2}\,Z,
\qquad
G,Z\overset{\rm iid}{\sim}\Normal(0,1),
\label{eq:app-feature-test-channel}
\end{equation}
and the corresponding error is
\begin{equation}
\mathcal E_{\mathcal P}^{\mathrm{feature}}
=
\E_{G,Z}
\left[
\left(
\sigma_\star(G)-\widehat\sigma(u_{\mathcal P})
\right)^2
\right].
\label{eq:app-feature-two-errors}
\end{equation}

It remains to determine the Stage-II readout $\widehat\sigma$. The Stage-II training label is
$\sigma_\star(G)+\sqrt\Delta\,\varepsilon$, where $\varepsilon$ is independent of $(G,u_{\rm tr})$. Therefore, for any scalar function $h$,
$
\E\!\left[
\left(
\sigma_\star(G)+\sqrt\Delta\,\varepsilon-h(u_{\rm tr})
\right)^2
\right]
=
\Delta
+
\E\!\left[
\left(
\sigma_\star(G)-h(u_{\rm tr})
\right)^2
\right].
$
The query noise therefore adds only the constant $\Delta$ and does not affect the optimal readout. Conditioning on $u_{\rm tr}$ gives
\begin{equation}
\begin{aligned}
\E\!\left[
(\sigma_\star(G)-h(u_{\rm tr}))^2
\mid u_{\rm tr}
\right]
&=
\Var(\sigma_\star(G)\mid u_{\rm tr})
+
\left(
\E[\sigma_\star(G)\mid u_{\rm tr}]
-h(u_{\rm tr})
\right)^2.
\end{aligned}
\label{eq:app-feature-conditional-projection}
\end{equation}
The second term is minimized by
$
\widehat\sigma(v)
=
\E[\sigma_\star(G)\mid u_{\rm tr}=v].
$

Since $(G,u_{\rm tr})$ is jointly Gaussian, its conditional distribution is
\begin{equation}
G\mid(u_{\rm tr}=v)
\overset{\rm d}=
\frac{\mu_1m}
{\mu_1^2q+(1+\Delta)\bar q/\alpha}\,v
+
\sqrt{
1-
\frac{\mu_1^2m^2}
{\mu_1^2q+(1+\Delta)\bar q/\alpha}
}\,\psi,
\qquad
\psi\sim\Normal(0,1).
\label{eq:conditional-teacher-score}
\end{equation}
Therefore,
\begin{equation}
\boxed{\widehat\sigma(v)=\E_\psi\!\left[
\sigma_\star\!\left(\frac{\mu_1m}
{\mu_1^2q+(1+\Delta)\bar q/\alpha}\,v+\sqrt{
1-\frac{\mu_1^2m^2}
{\mu_1^2q+(1+\Delta)\bar q/\alpha}
}\,\psi\right)\right]},
\label{eq:app-feature-optimal-readout}
\end{equation}
where $\psi\sim\Normal(0,1)$. Equations~\eqref{eq:app-feature-learned-matrix}--\eqref{eq:app-feature-optimal-readout} determine the learned attention matrix, the Stage-II readout, and the two inference errors, and therefore derive Result \ref{res:feature}.

Now we set \(\sigma_0(v)=v\).
We first study the large-pretraining regime ($\gamma\to\infty$ with $\alpha,\widetilde\alpha$ fixed).
Then we take the infinite-pretraining and infinite-task-pool limits with the more general context scaling \(L=\widetilde L=\alpha d^s\).

\paragraph{Large-pretraining limit.}

For \(\sigma_0(v)=v\), the scalar minimizer in
Eq.~\eqref{eq:app-feature-state-equations} is
\begin{equation}
v^\star=\frac{u_0+d_\chi y}{1+d_\chi},
\qquad
y:=\sigma_\star(G)+\sqrt{\Delta}\,\varepsilon.
\label{eq:app-feature-comparison-vstar}
\end{equation}
Hence the conjugate equations simplify to
\begin{equation}
\begin{aligned}
\widehat m=\widehat\chi_Q&=\frac{\gamma\mu_1^2}{1+d_\chi},
\\
\widehat\chi_{\bar Q}&=\frac{\gamma(1+\Delta)}
{\alpha(1+d_\chi)},
\\
\widehat q_Q&=\frac{\gamma\mu_1^2 R}{(1+d_\chi)^2},
\\
\widehat q_{\bar Q}&=\frac{\gamma(1+\Delta)R}
{\alpha(1+d_\chi)^2},
\end{aligned}
\label{eq:app-feature-comparison-conjugates}
\end{equation}
where $R:=1+\Delta-2\mu_1^2m+\mu_1^2q+\frac{1+\Delta}{\alpha}\bar q$.
Define $\tau:=\frac{1+\Delta}{\alpha\mu_1^2}$.
Equation~\eqref{eq:app-feature-comparison-conjugates} implies
\begin{equation}
\widehat\chi_{\bar Q}=\tau\widehat m,
\qquad
\widehat q_{\bar Q}=\tau\widehat q_Q.
\label{eq:app-feature-comparison-proportionality}
\end{equation}
Therefore
\begin{equation}
A=\widehat m\left[S+\left(\tau+\frac{\lambda}{\widehat m}\right)I_d\right],
\qquad
T=\widehat q_Q(S+\tau I_d).
\label{eq:app-feature-comparison-AT}
\end{equation}
Substituting these identities into
Eq.~\eqref{eq:app-feature-learned-matrix} gives
\begin{equation}
\begin{aligned}
\widehat W\overset{\rm d}=S
\left[S+\left(\tau+\frac{\lambda}{\widehat m}\right)I_d\right]^{-1}+\sqrt{\frac{\widehat q_Q}{N_0\widehat m^2}}\,Z(S+\tau I_d)^{1/2}
\left[S+\left(\tau+\frac{\lambda}{\widehat m}\right)I_d\right]^{-1}.
\end{aligned}
\label{eq:app-feature-comparison-W-decomposition}
\end{equation}
Moreover,
Eq.~\eqref{eq:app-feature-comparison-conjugates} yields
\begin{equation}
\frac{\widehat q_Q}{\widehat m^2}=\frac{R}{\gamma\mu_1^2}.
\label{eq:app-feature-comparison-noise-ratio}
\end{equation}
For fixed \(\alpha>0\), the resolvents in
Eq.~\eqref{eq:app-feature-comparison-W-decomposition} are uniformly bounded
because \(\tau>0\).
Using the equations for \(\chi_Q\) and \(\chi_{\bar Q}\),
\begin{equation}
d_\chi=\frac{\mu_1^2}{\kappa d\,\widehat m}
\Tr\!\left[(S+\tau I_d)\left(S+\left(\tau+\lambda/\widehat m\right)I_d
\right)^{-1}\right]=O(\widehat m^{-1}),
\end{equation}
and hence
\begin{equation}
\widehat m=\Theta(\gamma),
\qquad
\frac{\lambda}{\widehat m}=O(\gamma^{-1}).
\label{eq:app-feature-comparison-mhat-scaling}
\end{equation}
Since $N=\gamma dN_0=\gamma\kappa d^2$,
we have,
\begin{equation}
\gamma^{-1}=O\!\left(\frac{d^2}{N}\right).
\label{eq:app-feature-comparison-gamma-scaling}
\end{equation}
The large-\(\gamma\) limit of
Eq.~\eqref{eq:app-feature-comparison-W-decomposition} is therefore
\begin{equation}
W_{\infty}:=S(S+\tau I_d)^{-1}.
\label{eq:app-feature-large-N-solution}
\end{equation}
Write $\widehat W=W_{\infty}+\delta W_{\rm det}+\delta W_{\rm fl}$, where \(\delta W_{\rm det}\) is the correction caused by
\(\lambda/\widehat m\), and \(\delta W_{\rm fl}\) is the fluctuation (the last term of
Eq.~\eqref{eq:app-feature-comparison-W-decomposition}).
Equation~\eqref{eq:app-feature-comparison-mhat-scaling} gives
\begin{equation}
\|\delta W_{\rm det}\|_{\rm op}=O\!\left(\frac{d^2}{N}\right).
\label{eq:app-feature-comparison-deterministic-correction}
\end{equation}
Conditioned on \(S\), the fluctuation \(\delta W_{\rm fl}\) has zero mean. Moreover, using \(\gamma N_0=N/d\),
\begin{equation}
\begin{aligned}
\frac1d\E_Z\|\delta W_{\rm fl}\|_F^2
&=\frac{R}{\gamma\mu_1^2N_0}
\Tr\!\left[(S+\tau I_d)\left(S+\left(\tau+\lambda/\widehat m\right)I_d\right)^{-2}\right]
\\
&=\frac{dR}{\mu_1^2N}\Tr\!\left[(S+\tau I_d)\left(S+\left(\tau+\lambda/\widehat m\right)I_d\right)^{-2}\right].
\end{aligned}
\label{eq:app-feature-finite-N-fluctuation}
\end{equation}
For fixed \(\alpha>0\), we have \(\tau>0\), so the normalized trace in
Eq.~\eqref{eq:app-feature-finite-N-fluctuation} is \(O(1)\).
Furthermore,
\begin{equation}
R
=\E\!\left[
\left(
\sigma_\star(G)+\sqrt\Delta\,\varepsilon-u_0
\right)^2
\right].
\label{eq:app-feature-RF-mse}
\end{equation}
As the large-\(\gamma\) solution converges to the finite population solution
\((m_\infty,q_\infty,\bar q_\infty)\), we have
\begin{equation}
R\longrightarrow R_{\infty}:=1+\Delta-2\mu_1^2m_\infty+\mu_1^2q_\infty+\frac{1+\Delta}{\alpha}\bar q_\infty,
\end{equation}
which is finite. Consequently,
\begin{equation}
\frac1d\E_Z\|\delta W_{\rm fl}\|_F^2=O\!\left(\frac{d^2}{N}\right).
\label{eq:app-feature-finite-N-fluctuation-rate}
\end{equation}
It follows that the order parameters differ from their infinite-pretraining values by \(O(d^2/N)\):
\begin{equation}
\begin{aligned}
m&=m_\infty+O\!\left(\frac{d^2}{N}\right),
\\
q&=q_\infty+O\!\left(\frac{d^2}{N}\right),
\\
\bar q&=\bar q_\infty+O\!\left(\frac{d^2}{N}\right),
\\
\frac1d\E_Z\Tr\widehat W_F&=w_\infty+O\!\left(\frac{d^2}{N}\right),
\end{aligned}
\label{eq:app-feature-finite-N-overlap-correction}
\end{equation}
where
\begin{equation}
\begin{aligned}
m_\infty
&=
\frac1d
\Tr\!\left[
S^2(S+\tau I_d)^{-1}
\right],
\\
q_\infty
&=
\frac1d
\Tr\!\left[
S^3(S+\tau I_d)^{-2}
\right],
\\
\bar q_\infty
&=
\frac1d
\Tr\!\left[
S^2(S+\tau I_d)^{-2}
\right],
\\
w_\infty
&=
\frac1d
\Tr\!\left[
S(S+\tau I_d)^{-1}
\right].
\end{aligned}
\label{eq:app-feature-large-N-overlaps}
\end{equation}
The linear fluctuation terms vanish after averaging over \(Z\), while their
quadratic contribution is controlled by
Eq.~\eqref{eq:app-feature-finite-N-fluctuation}.

The Stage-II training at infinite pretraining has moments
\begin{equation}
C_{\rm tr}^{(\infty)}
=
\mu_1m_\infty,
\qquad
V_{\rm tr}^{(\infty)}
=
\mu_1^2q_\infty
+
\frac{1+\Delta}{\alpha}\bar q_\infty.
\label{eq:app-feature-large-N-training-moments}
\end{equation}
Accordingly, define the population-optimal readout
\begin{equation}
\widehat\sigma_\infty(v)
:=
\E\!\left[
\sigma_\star(G)
\,\middle|\,
C_{\rm tr}^{(\infty)}G
+
\sqrt{
V_{\rm tr}^{(\infty)}
-
(C_{\rm tr}^{(\infty)})^2
}\,Z
=v
\right].
\label{eq:app-feature-large-N-readout}
\end{equation}
For memorization and generalization, respectively, the infinite-pretraining
test moments are
\begin{equation}
\begin{aligned}
C_{\rm M}^{(\infty)}
&=
\mu_1m_\infty,
&
V_{\rm M}^{(\infty)}
&=
\mu_1^2q_\infty
+
\frac{\bar q_\infty}{\widetilde\alpha},
\\
C_{\rm G}^{(\infty)}
&=
\mu_1w_\infty,
&
V_{\rm G}^{(\infty)}
&=
\left(
\mu_1^2+\frac1{\widetilde\alpha}
\right)\bar q_\infty.
\end{aligned}
\label{eq:app-feature-large-N-test-moments}
\end{equation}
Hence we define
\begin{equation}
\boxed{
\begin{aligned}
g_{\mathcal P}(\alpha,\widetilde\alpha,S)
:=
\E_{G,Z}\!\left[
\left(
\sigma_\star(G)
-
\widehat\sigma_\infty
\left(
C_{\mathcal P}^{(\infty)}G
+
\sqrt{
V_{\mathcal P}^{(\infty)}
-
(C_{\mathcal P}^{(\infty)})^2
}\,Z
\right)
\right)^2
\right],
\end{aligned}
}
\qquad
\mathcal P\in\{\mathrm M,\mathrm G\}.
\label{eq:app-feature-large-N-errors}
\end{equation}
For fixed positive \(\alpha,\widetilde\alpha\), these Gaussian channels are
nondegenerate. Their covariances, and hence the corresponding conditional
expectation in
Eq.~\eqref{eq:app-feature-large-N-readout}, vary smoothly under the
\(O(d^2/N)\) perturbations in
Eq.~\eqref{eq:app-feature-finite-N-overlap-correction}.
Therefore
\begin{equation}
\boxed{
\mathcal E_{\mathcal P}^{\mathrm{feature}}
=
g_{\mathcal P}
(\alpha_F,\widetilde\alpha_F,S_F)
+
O\!\left(\frac{d^2}{N}\right),
\qquad
\mathcal P\in\{\mathrm M,\mathrm G\}.
}
\label{eq:app-feature-finite-N-error}
\end{equation}
This gives the feature-learner finite-pretraining correction.

\paragraph{Infinite task-pool limit and context-length scalings.}

We next set \(\Delta=\lambda=0\) and take first
\(\gamma\to\infty\) and then \(\kappa\to\infty\).
\eqref{eq:app-feature-large-N-solution} gives
\begin{equation}
W_{\infty}=S\left(S+\frac{1}{\alpha\mu_1^2}I_d
\right)^{-1}.
\label{eq:app-feature-population-W-general}
\end{equation}
In the \(\kappa\to\infty\) limit, isotropy of the task distribution implies $S\longrightarrow I_d$.
Hence
\begin{equation}
W_{\infty}\longrightarrow r_{\alpha}I_d,
\qquad
r_{\alpha}:=\frac{\alpha\mu_1^2}{1+\alpha\mu_1^2}.
\label{eq:app-feature-population-W}
\end{equation}
The corresponding limiting order parameters therefore satisfy
\begin{equation}
m_\infty=r_{\alpha},
\qquad
q_\infty=\bar q_\infty=r_{\alpha}^2,
\qquad
\frac1d\Tr W_{\infty}=r_{\alpha}.
\label{eq:app-feature-population-overlaps}
\end{equation}
For $\beta>0$, define
\begin{equation}
B_\beta
:=
\sqrt{c_1^2+\frac1\beta},
\qquad
\rho_\beta
:=
\frac{c_1}{B_\beta},
\qquad
r_\beta:=\rho_\beta^2
=
\frac{\beta c_1^2}{1+\beta c_1^2}.
\label{eq:app-feature-rho-definition}
\end{equation}
The Stage-II training channel can be written as
\begin{equation}
u_{\rm tr}=r_\alpha B_\alpha T_\alpha,
\qquad
T_\alpha=\rho_\alpha G+\sqrt{1-\rho_\alpha^2}\,Z,
\label{eq:app-feature-training-standardized-channel}
\end{equation}
where $G,Z\overset{\rm iid}{\sim}\Normal(0,1)$. By Mehler's identity, the optimal Stage-II readout is
\begin{equation}
\widehat\sigma_\alpha(v)=\sum_{k\ge1}
c_k\rho_\alpha^kH_k\left(\frac{v}{r_\alpha B_\alpha}\right).
\label{eq:app-feature-mismatch-readout}
\end{equation}
At inference with context length
$\widetilde L=\widetilde\alpha d$, the learned attention matrix produces
\begin{equation}
u_{\rm te}=r_\alpha B_{\widetilde\alpha}
T_{\widetilde\alpha},
\qquad
T_{\widetilde\alpha}=\rho_{\widetilde\alpha}G+\sqrt{1-\rho_{\widetilde\alpha}^2}\,Z.
\label{eq:app-feature-mismatch-test-channel}
\end{equation}
Define $s_{\alpha,\widetilde\alpha}
:=\frac{B_{\widetilde\alpha}}{B_\alpha}=\frac{\rho_\alpha}{\rho_{\widetilde\alpha}}$.
For the Hermite polynomials,
\begin{equation}
H_n(sx)=\sum_{m=0}^{\lfloor n/2\rfloor}
\sqrt{\frac{n!}{(n-2m)!}}\frac{s^{n-2m}}{m!}\left(\frac{s^2-1}{2}\right)^m
H_{n-2m}(x).
\label{eq:app-hermite-scaling}
\end{equation}
Hence
\begin{equation}
\widehat\sigma_\alpha(u_{\rm te})=\sum_{\ell\ge0}A_\ell(\alpha,\widetilde\alpha)H_\ell(T_{\widetilde\alpha}),
\end{equation}
where
\begin{equation}
A_\ell(\alpha,\widetilde\alpha):=\sum_{m\ge0}c_{\ell+2m}
\rho_\alpha^{\ell+2m}
\sqrt{\frac{(\ell+2m)!}{\ell!}}
\frac{s_{\alpha,\widetilde\alpha}^{\ell}}{m!}
\left(
\frac{s_{\alpha,\widetilde\alpha}^2-1}{2}
\right)^m.
\label{eq:app-feature-mismatch-coefficients}
\end{equation}
Let
\[
f_{\alpha,\widetilde\alpha}(T_{\widetilde\alpha})
:=
\widehat\sigma_\alpha(u_{\rm te})
=
\sum_{\ell\ge0}
A_\ell(\alpha,\widetilde\alpha)
H_\ell(T_{\widetilde\alpha}).
\]
On the other hand, Mehler's identity gives the Bayes-optimal predictor for the inference channel,
\begin{equation}
m_{\widetilde\alpha}(T_{\widetilde\alpha})
:=
\E[\sigma_\star(G)\mid T_{\widetilde\alpha}]
=
\sum_{\ell\ge1}
c_\ell\rho_{\widetilde\alpha}^{\ell}
H_\ell(T_{\widetilde\alpha}).
\end{equation}
Since
$\E[\sigma_\star(G)-m_{\widetilde\alpha}(T_{\widetilde\alpha})
\mid T_{\widetilde\alpha}
]=0$, we have
\begin{equation}
\E\left[\left(\sigma_\star(G)-f_{\alpha,\widetilde\alpha}(T_{\widetilde\alpha})
\right)^2\right]=\E\left[\left(\sigma_\star(G)-m_{\widetilde\alpha}(T_{\widetilde\alpha})\right)^2\right]+\E\left[\left(m_{\widetilde\alpha}(T_{\widetilde\alpha})-f_{\alpha,\widetilde\alpha}(T_{\widetilde\alpha})\right)^2\right].
\label{eq:app-feature-mismatch-pythagorean}
\end{equation}
The first term is the MMSE of the inference channel. Using
$\E[\sigma_\star(G)^2]=1$ and the orthonormality of the Hermite
polynomials,
\begin{equation}
\E\left[
\left(
\sigma_\star(G)
-
m_{\widetilde\alpha}(T_{\widetilde\alpha})
\right)^2
\right]
=
1-
\sum_{k\ge1}
c_k^2r_{\widetilde\alpha}^k.
\end{equation}
The same orthonormality gives
\begin{equation}
\E\left[
\left(
m_{\widetilde\alpha}(T_{\widetilde\alpha})
-
f_{\alpha,\widetilde\alpha}(T_{\widetilde\alpha})
\right)^2
\right]
=
\sum_{\ell\ge0}
\left[
A_\ell(\alpha,\widetilde\alpha)
-
c_\ell\rho_{\widetilde\alpha}^{\ell}
\right]^2,
\end{equation}
where $c_0=0$. Therefore,
\begin{equation}
\boxed{
\begin{aligned}
\mathcal E_{\mathcal P}^{\mathrm{feature}}
={}&
1-\sum_{k\ge1}c_k^2r_{\widetilde\alpha}^k
+
\sum_{\ell\ge0}
\left[
A_\ell(\alpha,\widetilde\alpha)
-
c_\ell\rho_{\widetilde\alpha}^{\ell}
\right]^2
+o(1),
\end{aligned}}
\qquad
\mathcal P\in\{\mathrm M,\mathrm G\}.
\label{eq:app-feature-population-error-mismatch}
\end{equation}

Although the replica calculation above was formulated with a fixed 
\(\alpha=L/d\),
\eqref{eq:app-feature-population-error-mismatch}
naturally allows us to examine \(d\)-dependent context scalings by substituting $\alpha(d)=\frac{L(d)}{d}$ before taking \(d\to\infty\). Consider
\begin{equation}
L=\widetilde L=\alpha d^s,
\qquad
\alpha>0,\quad s>0.
\label{eq:app-feature-general-context-scaling}
\end{equation}
Then $\alpha(d)=\alpha d^{s-1}$, and hence, using \(\mu_1=c_1\),
\begin{equation}
r_{\alpha(d)}=\frac{\alpha c_1^2d^{s-1}}{1+\alpha c_1^2d^{s-1}}
\longrightarrow
\begin{cases}
0,
& s<1,\\[4pt]
\displaystyle
r_\alpha:=
\frac{\alpha c_1^2}{1+\alpha c_1^2},
& s=1,\\[10pt]
1,
& s>1.
\end{cases}
\label{eq:app-feature-correlation-context-scaling}
\end{equation}
Substituting this limit into
Eq.~\eqref{eq:app-feature-population-error-mismatch} gives
\begin{equation}
\boxed{
\mathcal E_{\mathcal P}^{\mathrm{feature}}
\longrightarrow
\begin{cases}
1,
& s<1,\\[4pt]
\displaystyle
1-\sum_{k=1}^{\infty}c_k^2r_\alpha^k,
& s=1,\\[10pt]
0,
& s>1,
\end{cases}
\qquad
\mathcal P\in\{\mathrm M,\mathrm G\}.
}
\label{eq:app-feature-general-context-limit}
\end{equation}

\section{Additional experiments}
\subsection{Consistency of the Replica method with numerical experiments}
\label{app:consistency}
We first validate our analytical results from the replica method against numerical experiments.
We use the same learned attention matrices and the same fixed-point solutions as in Section~\ref{sec:experiments}. All runs use the teacher in Eq.~\eqref{eq:common-teacher}, $\Delta=0.01$, and $\lambda=10^{-3}$. In Figure \ref{fig:replica-order-parameters} we compare the order parameters obtained from numerics against those obtained from replica under the following settings:
\begin{center}
\begin{tabular}{@{}lccccc@{}}
\toprule
Panels & $d$ & $N$ & $N_0$ & $\alpha$ & $\widetilde\alpha$\\
\midrule
A--B & $32$ & $288$ to $1200000$ & $24$ & $32$ & $32$\\
C--D & $24$ & $1200000$ & $24,48,96,144,192$ & $32$ & $32$\\
\bottomrule
\end{tabular}
\end{center}

\begin{figure}[t]
\centering
\includegraphics[width=\textwidth]{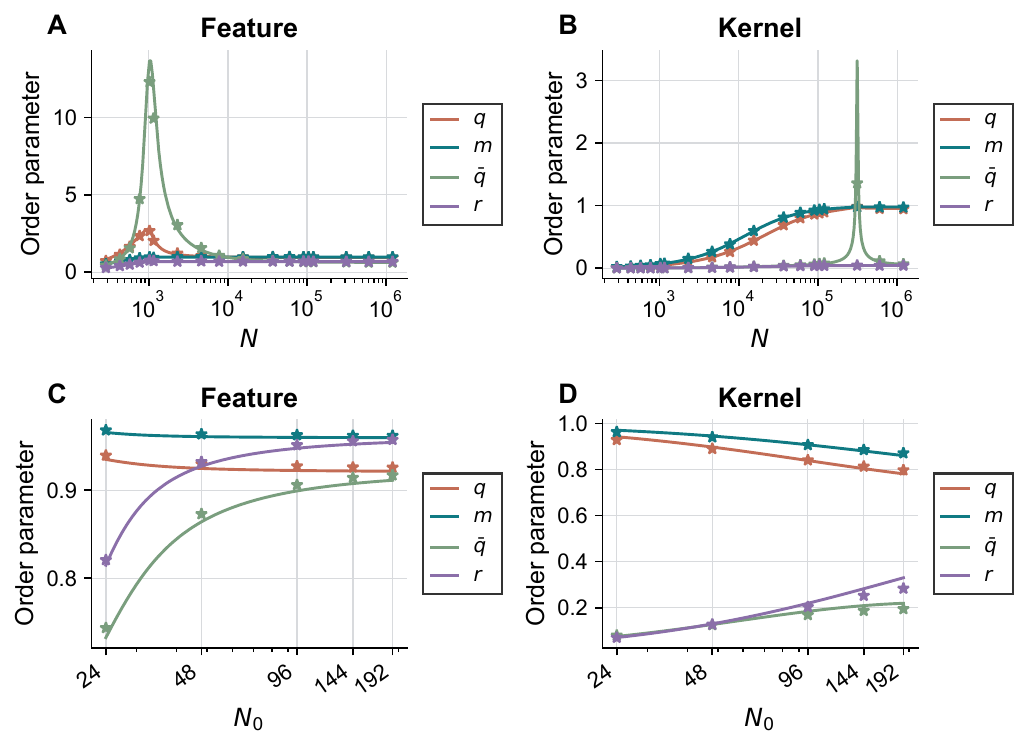}
\caption{Consistency of the replica method with numerical experiments. Left: feature learner; right: kernel learner. The upper row varies $N$ at $d=32$, and the lower row varies $N_0$ at $d=24$, $N=1200000$. Solid lines are replica predictions, and stars show the order parameters measured from the learned attention matrices in one run. Red, teal, green, and purple denote $q,m,\bar q,r$, respectively, for the learner in that column.}
\label{fig:replica-order-parameters}
\end{figure}

The plotted $m,q,\bar q$ are the corresponding learner's order parameters in Eqs.~\eqref{eq:kernel-errors-main}, \eqref{eq:app-kernel-qbar-definition}, and \eqref{eq:app-feature-order-parameters}, measured from the saved attention matrices. The additional quantity $r$ is $d^{-1}\Tr\widehat W$ for the feature learner and $D^{-1}\Tr\widehat W_K$ for the kernel learner. The feature learner and linear attention share the same learned attention, and Stage II leaves these quantities unchanged. Figure \ref{fig:replica-order-parameters} shows that the replica predictions capture the main trends of the measured order parameters, with visible deviations near the kernel interpolation peak.

\subsection{Varying task diversity}
\label{app:vary_N0}
The following experiment varies the task-diversity $N_0$ with $N=1200000$ and $\alpha=\widetilde\alpha=32$, and $N_0\in\{24,48,96,144,192\}$. Increasing $N_0$ means a more diverse task pool and fewer training prompts per task.

\begin{figure}[t]
\centering
\includegraphics[width=\textwidth]{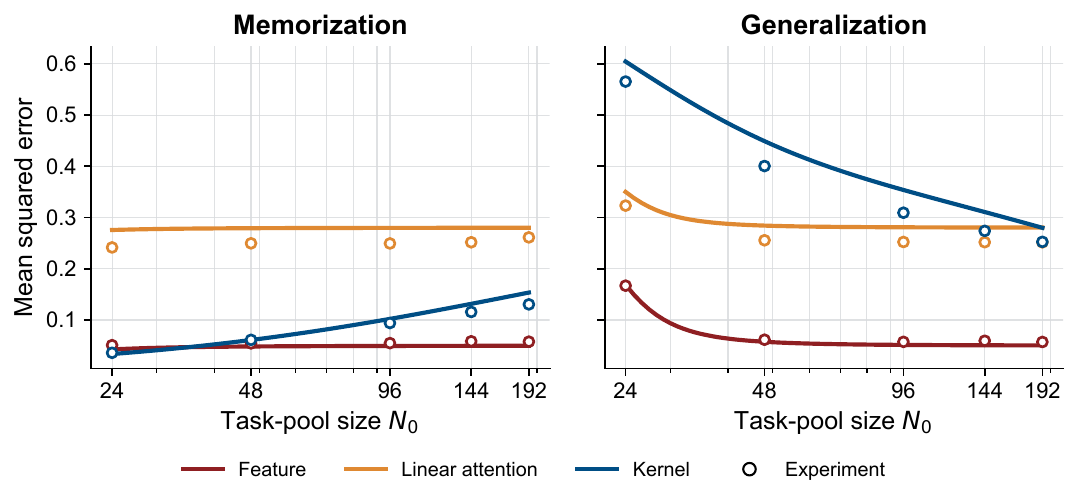}
\caption{Memorization (left) and generalization (right) errors as a function of $N_0$. We fix $d=24$, $D=323$, $N=1200000$, $L=\widetilde L=768$, $\Delta=0.01$, and $\lambda=10^{-3}$. Red, orange, and blue denote the feature learner, linear attention, and the kernel learner, respectively. Solid lines show replica predictions, and circles show numerical results. The three learners use the same training and inference prompts at each $N_0$, with $32768$ inference prompts per error estimate.}
\label{fig:kappa-merged}
\end{figure}

Figure \ref{fig:kappa-merged} shows that increasing task diversity narrows the gap between memorization and generalization for both learners. For the kernel learner, this improves performance on unseen tasks (generalization) at the cost of accuracy on the training tasks (memorization). For the feature learner, the main effect is an improvement in generalization.

The replica analysis also predicts this crossover.
For the common teacher in Eq.~\eqref{eq:common-teacher}, set
$\Delta=\lambda=0$, $\sigma_0(v)=v$, and $L=\widetilde L=\alpha d$. We first take $N\to\infty$,
then $d,N_0\to\infty$ at fixed $\kappa=N_0/d$, and finally
$\alpha\to\infty$. In Appendix \ref{app:task-diversity-comparison}, we show that the memorization errors satisfy
\begin{equation}
\begin{aligned}
\mathcal E^F_M(\infty)
&=
\frac{5\min(1,\kappa)}{3\alpha}
+o(\alpha^{-1}),
&\mathcal E^K_M(\infty)
&=
\frac{\kappa}{\alpha}
+o(\alpha^{-1}).
\end{aligned}
\label{eq:task-diversity-comparison}
\end{equation}
For $\kappa\geq1$, which covers the scan in Fig.~\ref{fig:kappa-merged}, the leading
terms therefore give
\begin{equation}
\mathcal E^K_M(\infty)<\mathcal E^F_M(\infty)
\quad\Longleftrightarrow\quad
\kappa<\frac53.
\label{eq:kernel-memory-threshold}
\end{equation}
For $d=24$, this gives $N_0\simeq40$, consistent with the crossover between $N_0=24$ and $48$ in Figure \ref{fig:kappa-merged}.

\subsection{Derivation of \texorpdfstring{Eq.~\eqref{eq:task-diversity-comparison}}{the task-diversity comparison}}
\label{app:task-diversity-comparison}

We set $\Delta=\lambda=0$, $\sigma_0(v)=v$, and $L=\widetilde L=\alpha d$.
As in Eq.~\eqref{eq:task-diversity-comparison}, we first take $N\to\infty$, then $d,N_0\to\infty$ with $N_0/d\to\kappa\in(0,\infty)$, and finally $\alpha\to\infty$.
The common teacher in Eq.~\eqref{eq:common-teacher} is
$\sigma_\star(G)=\frac{\sqrt3}{2}H_1(G)+\frac12H_2(G)$, so $\mu_1^2=3/4$.

\paragraph{Feature learner.}
With $\tau_\alpha=1/(\alpha\mu_1^2)=4/(3\alpha)$, Eq.~\eqref{eq:app-feature-large-N-overlaps} gives
\begin{equation}
\begin{aligned}
q_\infty+\tau_\alpha\bar q_\infty
&=\int\frac{s^3+\tau_\alpha s^2}{(s+\tau_\alpha)^2}\,\nu_{\kappa}(\dd s)
=\int\frac{s^2}{s+\tau_\alpha}\,\nu_{\kappa}(\dd s)
=m_\infty,\\
V_{\mathrm M}^{(\infty)}
&=\mu_1^2q_\infty+\frac{\bar q_\infty}{\alpha}
=\mu_1^2m_\infty,
\qquad
\frac{(C_{\mathrm M}^{(\infty)})^2}{V_{\mathrm M}^{(\infty)}}
=\frac{\mu_1^2m_\infty^2}{\mu_1^2m_\infty}
=m_\infty.
\end{aligned}
\label{eq:task-diversity-feature-moments}
\end{equation}
Since the Stage-II training and memorization distributions coincide, substituting this identity into Eq.~\eqref{eq:app-feature-large-N-errors} yields
\begin{equation}
\mathcal E_{\mathrm M}^{\mathrm{feature}}(\infty)
=\frac34(1-m_\infty)+\frac14(1-m_\infty^2)
=\frac54(1-m_\infty)-\frac14(1-m_\infty)^2.
\label{eq:task-diversity-feature-error}
\end{equation}
The Marchenko--Pastur spectrum satisfies
$\int s\,\nu_{\kappa}(\dd s)=1$ and $\nu_{\kappa}((0,\infty))=\min(1,\kappa)$. Therefore
\begin{equation}
\begin{aligned}
1-m_\infty
&=\int\left(s-\frac{s^2}{s+\tau_\alpha}\right)\nu_{\kappa}(\dd s)
=\tau_\alpha\int\frac{s}{s+\tau_\alpha}\,\nu_{\kappa}(\dd s),\\
\lim_{\alpha\to\infty}\frac{1-m_\infty}{\tau_\alpha}
&=\int\mathbf1_{\{s>0\}}\,\nu_{\kappa}(\dd s)
=\min(1,\kappa).
\end{aligned}
\label{eq:task-diversity-feature-spectrum-limit}
\end{equation}
As $\alpha\to\infty$, $\tau_\alpha\to0$, and the ratio $s/(s+\tau_\alpha)$ tends to 1 for every $s>0$ and remains 0 at $s=0$.
Since this ratio always lies between 0 and 1, we can take its limit inside the integral, also when $\kappa=1$.
Combining Eqs.~\eqref{eq:task-diversity-feature-error} and \eqref{eq:task-diversity-feature-spectrum-limit}, we obtain
\begin{equation}
\mathcal E_{\mathrm M}^{\mathrm{feature}}(\infty)
=\frac54\tau_\alpha\min(1,\kappa)+o(\tau_\alpha)
=\frac{5\min(1,\kappa)}{3\alpha}+o(\alpha^{-1}).
\label{eq:task-diversity-feature-leading-error}
\end{equation}

\paragraph{Kernel learner.}
Since \(\widetilde\alpha_K=\alpha_K\) and \(\Delta=0\), the two trace
terms in Eq.~\eqref{eq:app-kernel-large-N-memorization} combine to give
\begin{equation}
\mathcal E_{\mathrm M}^{\mathrm{kernel}}(\infty)=\frac{1}{\alpha_KD}
\Tr\!\left[S_K\left(S_K+\frac1{\alpha_K}I_D\right)^{-1}\right].
\label{eq:task-diversity-kernel-trace}
\end{equation}
Let
\begin{equation}
G_K:=\frac1D\Theta_K^\top\Theta_K,
\qquad
\Theta_K=
\begin{bmatrix}
\theta(a^1)&\cdots&\theta(a^{N_0})
\end{bmatrix}.
\end{equation}
The nonzero eigenvalues of \(G_K\) coincide with those of
\((N_0/D)S_K\). Since
$\frac{N_0}{L}\longrightarrow\frac{\kappa}{\alpha}$,
Eq.~\eqref{eq:task-diversity-kernel-trace} can therefore be written as
\begin{equation}
\mathcal E_{\mathrm M}^{\mathrm{kernel}}(\infty)=\frac{N_0}{L}\,
\frac1{N_0}\Tr\!\left[G_K\left(G_K+\frac{N_0}{L}I_{N_0}
\right)^{-1}\right].
\label{eq:task-diversity-kernel-Gram-error}
\end{equation}
For the common teacher,
\(c_1^2=3/4\) and \(c_2^2=1/4\). By the spectral limit
derived in Eq.~\eqref{eq:app-kernel-Gram-limit-law}, when
\(N_0/d\to\kappa\), the empirical spectral distribution of \(G_K\) converges to the law
\begin{equation}
g=\frac14+\frac34 s,
\qquad
s\sim\nu_{1/\kappa}.
\label{eq:task-diversity-kernel-Gram-law}
\end{equation}
Hence, after taking \(d,N_0\to\infty\) at fixed \(\kappa\),
\begin{equation}
\begin{aligned}
\mathcal E_{\mathrm M}^{\mathrm{kernel}}(\infty)=\frac{\kappa}{\alpha}\int\frac{g}{g+\kappa/\alpha}\,\mu_\kappa(\dd g)=\frac{\kappa}{\alpha}\left[1-\frac{\kappa}{\alpha}\mathcal R_\kappa\!\left(\frac{\kappa}{\alpha}\right)\right],
\end{aligned}
\label{eq:task-diversity-kernel-resolvent-error}
\end{equation}
where \(\mu_\kappa\) denotes the law in
Eq.~\eqref{eq:task-diversity-kernel-Gram-law} and
\begin{equation}
\mathcal R_\kappa(z)
:=
\int\frac{\mu_\kappa(\dd g)}{g+z}.
\end{equation}
Since \(g\geq1/4\) almost surely,
\begin{equation}
0\leq
\mathcal R_\kappa(z)
\leq4,
\qquad z\geq0.
\end{equation}
Therefore, for every fixed \(\kappa\),
\begin{equation}
\boxed{
\mathcal E_{\mathrm M}^{\mathrm{kernel}}(\infty)
=
\frac{\kappa}{\alpha}
+
O(\alpha^{-2})
=
\frac{\kappa}{\alpha}
+
o(\alpha^{-1}).
}
\label{eq:task-diversity-kernel-leading-error}
\end{equation}

\section{A formal statement of Theorem \ref{theo:comparison}}
\label{app:rigorous}

In this section, we give a formal version of Theorem \ref{theo:comparison}.

\paragraph{Population setting.}
For each fixed input dimension \(d\) and context length \(L\), a prompt is generated by first sampling
\begin{equation}
a,x_0,x_1,\ldots,x_L\overset{\mathrm{i.i.d.}}{\sim}
\operatorname{Unif}\!\left(\mathbb S^{d-1}(\sqrt d)\right),
\end{equation}
and then setting
\begin{equation}
y_\ell=\sigma_\star\!\left(\frac{a^\top x_\ell}{\sqrt d}\right),
\qquad
\ell=0,1,\ldots,L.
\label{eq:rigorous-population-prompt}
\end{equation}
Thus we study the problem with \(\Delta=\lambda=0\). The task vector is sampled independently for every prompt,
and the training objectives are defined as population expectations. This is the population limit, corresponding to $\gamma\to\infty$ and $\kappa\to\infty$ in Theorem \ref{theo:comparison}.

Let $G_d:=\frac{a^\top x}{\sqrt d}$, where
$a,x\overset{\mathrm{i.i.d.}}{\sim}
\operatorname{Unif}\!\left(\mathbb S^{d-1}(\sqrt d)\right)$, and let \(P_{k,d}\) denote the degree-\(k\) Gegenbauer polynomial ($k\geq0$) normalized
so that $\mathbb E[P_{j,d}(G_d)P_{k,d}(G_d)]=\delta_{jk}$.
We write
\begin{equation}
v_d:=\mathbb E[\sigma_\star(G_d)^2],
\qquad
c_{k,d}:=
\mathbb E[\sigma_\star(G_d)P_{k,d}(G_d)].
\label{eq:rigorous-finite-d-coefficients}
\end{equation}
Recall that the Gaussian Hermite coefficients are
\begin{equation}
c_k:=\mathbb E[\sigma_\star(G)H_k(G)],
\qquad
G\sim\mathcal N(0,1).
\end{equation}

We will work under the following two regularity assumptions.
\begin{assumption}
\label{ass:rigorous-teacher}
The teacher \(\sigma_\star:\mathbb R\to\mathbb R\) is a measurable function independent of \(d\) and \(L\), satisfying
\begin{equation}
\mathbb E[\sigma_\star(G)]=0,\qquad
\mathbb E[\sigma_\star(G)^2]=1,\qquad
c_1:=\mathbb E[G\sigma_\star(G)]\neq0,
\end{equation}
and for some \(C,p<\infty\):
\begin{equation}
|\sigma_\star(x)|
\le C(1+|x|^p)
\end{equation}
\end{assumption}
\begin{assumption}
\label{ass:rigorous-limit}
Assume that \(d,L,\widetilde L\to\infty\) such that, for some fixed
\(c,C,\varepsilon>0\),
\begin{equation}
c\leq\frac{L}{\widetilde L}\leq C,
\qquad
\min\{L,\widetilde L\}\geq d^\varepsilon .
\label{eq:proof-feature-context-assumption}
\end{equation}
\end{assumption}

\paragraph{Kernel learner.}
Fix a finite set of degrees
\(\mathcal K\subset\mathbb N_{\geq1}\), independent of \(d\), and let
\begin{equation}
h(x)=\bigoplus_{k\in\mathcal K}h_k(x)
\in\mathbb R^D,
\qquad
D=\sum_{k\in\mathcal K}D_{k,d},
\end{equation}
where \(h_k\) is an orthonormal basis of the degree-\(k\) spherical harmonics and $D_{k,d}=\frac{2k+d-2}{k}\binom{k+d-3}{k-1}$.  For a prompt define
\begin{equation}
H^{\mathrm K,L}:=\frac1L
h(x_0)\left(\sum_{\ell=1}^L y_\ell h(x_\ell)\right)^\top .
\end{equation}
The population-trained attention matrix is
\begin{equation}
W_{d,L}^{\mathrm K}:=\arg\min_{W\in\mathbb R^{D\times D}}\mathbb E\left[\left(\operatorname{Tr}\!\left(W(H^{\mathrm K,L})^\top\right)-y_0\right)^2\right],
\label{eq:rigorous-kernel-population-training}
\end{equation}
where the expectation is over the prompt in
Eq.~\eqref{eq:rigorous-population-prompt}.  We define its population error by
\begin{equation}
\mathcal E_{d,L,\tilde L}^{\mathrm{kernel}}:=\mathbb E
\left[\left(y_0-\operatorname{Tr}\!\left(
W_{d,L}^{\mathrm K}(H^{\mathrm K,\tilde L})^\top\right)\right)^2\right].
\end{equation}

\paragraph{Feature learner.}
For the feature learner, define
\begin{equation}
H^{\mathrm F,L}:=\frac1Lx_0\left(
\sum_{\ell=1}^L y_\ell x_\ell
\right)^\top,
\qquad
U_W^L:=\operatorname{Tr}\!\left(W(H^{\mathrm F,L})^\top\right).
\end{equation}
We use the identity readout in Stage I, \(\sigma_0(u)=u\), and define the populational attention matrix by
\begin{equation}
W_{d,L}^{\mathrm F}:=\arg\min_{W\in\mathbb R^{d\times d}}\mathbb E\left[(U^L_W-y_0)^2\right].
\label{eq:rigorous-feature-population-stage1}
\end{equation}
In Stage II, the attention matrix is frozen and the scalar readout is optimized over all square-integrable measurable functions:
\begin{equation}
\widehat\sigma_{d,L}\in\arg\min_{\phi}
\mathbb E\left[\left(\phi(U^L_{W_{d,L}^{\mathrm F}})-y_0\right)^2\right].
\label{eq:rigorous-feature-population-stage2}
\end{equation}
We choose a version of the solution given by
\begin{equation}
\widehat\sigma_{d,L}(u):=\mathbb E\!\left[y_0\,\middle|\,U_{W_{d,L}^{\mathrm F}}^L=u\right]
\end{equation}
with \(\widehat\sigma_{d,L}(u)=0\) on the set where the conditional expectation is not specified.
The resulting population test error is
\begin{equation}
\mathcal E_{d,L,\tilde L}^{\mathrm{feature}}
:=\mathbb E\left[\left(y_0-\widehat\sigma_{d,L}
\!\left(U^{\tilde L}_{W_{d,L}^{\mathrm F}}\right)
\right)^2\right].
\label{eq:rigorous-feature-population-error}
\end{equation}
We now state the rigorous version of Theorem \ref{theo:comparison}.

\begin{theorem}
\label{thm:population_scaling}
Suppose Assumptions~\ref{ass:rigorous-teacher} and \ref{ass:rigorous-limit} hold.
Define $\alpha:=\frac{L}{d}$,
$\widetilde\alpha:=\frac{\widetilde L}{d}$, together with
$\rho_\alpha:=\frac{c_1}{\sqrt{c_1^2+1/\alpha}}$,
$r_\alpha:=\rho_\alpha^2$ and
$s_{\alpha,\widetilde\alpha}:=\frac{\rho_\alpha}{\rho_{\widetilde\alpha}}=\sqrt{
\frac{c_1^2+1/\widetilde\alpha}{c_1^2+1/\alpha}}$.
For the kernel learner, define
$\eta_{k,L}:=\frac{c_k^2L}{D_{k,d}+c_k^2L}$.
Then
\begin{equation}
\mathcal E_{d,L,\widetilde L}^{\mathrm{kernel}}
=\nu_{\rm out}+\sum_{k\in\mathcal K}
\left[c_k^2(1-\eta_{k,L})^2+\frac{D_{k,d}}{\widetilde L}\eta_{k,L}^2\right]+o_{d,L,\widetilde L}(1).
\label{eq:rigorous-kernel-population-formula}
\end{equation}
For the feature learner, for \(\rho\in[-1,1]\) define
\begin{equation}
\Phi_\rho(x):=\mathbb E_Z\left[\sigma_\star\left(\rho x+\sqrt{1-\rho^2}\,Z\right)\right],
\qquad
Z\sim\mathcal N(0,1).
\label{eq:proof-feature-gaussian-readout}
\end{equation}
For every \(\ell\geq0\), define
\begin{equation}
A_\ell(\alpha,\widetilde\alpha):=\mathbb E_T
\left[\Phi_{\rho_\alpha}\left(s_{\alpha,\widetilde\alpha}T
\right)H_\ell(T)\right],
\qquad
T\sim\mathcal N(0,1).
\label{eq:rigorous-feature-mismatch-coefficients}
\end{equation}
Then, with \(c_0:=0\),
\begin{equation}
\mathcal E_{d,L,\widetilde L}^{\mathrm{feature}}=1-\sum_{k\geq1}
c_k^2r_{\widetilde\alpha}^k+\sum_{\ell\geq0}\left[A_\ell(\alpha,\widetilde\alpha)-c_\ell\rho_{\widetilde\alpha}^\ell\right]^2+o_{d,L,\widetilde L}(1).
\label{eq:rigorous-feature-population-formula}
\end{equation}
\end{theorem}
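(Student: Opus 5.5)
Both parts of Theorem~\ref{thm:population_scaling} concern population least-squares problems. I would solve each normal equation exactly at finite $d,L$ by using rotational invariance, and then pass to the limit. The limit uses Gegenbauer-to-Hermite convergence and the fact that $U_W^L$ is approximately Gaussian conditionally on the teacher score.

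\textbf{Kernel learner.} The population problem \eqref{eq:rigorous-kernel-population-training} is linear regression on $H^{\mathrm K,L}$. Its normal equations are $\mathcal W\,\E[\mathrm{vec}H\,\mathrm{vec}H^\top]=\E[y_0\,\mathrm{vec}H]$.

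First I compute the moments exactly. By \eqref{eq:app-addition-theorem} and orthogonality, $\E[y h_k(x)\mid a]=c_{k,d}D_{k,d}^{-1/2}h_k(a)$, so $\E[y_0H\mid a]=\tilde\theta(a)\tilde\theta(a)^\top$ with $\tilde\theta:=\bigoplus_k c_{k,d}D_{k,d}^{-1/2}h_k$. Since $h(x_0)$ is independent of the demonstrations and $\E[h(x_0)h(x_0)^\top]=I_D$, the second moment factorizes as $I_D\otimes M_L$. Here
\[
M_L=\tfrac{L-1}{L}\,\E_a[\tilde\theta\tilde\theta^\top\otimes\tilde\theta\tilde\theta^\top]+\tfrac1L\,\E_a B(a),
\]
with $B$ as in \eqref{eq:app-kernel-B-matrix}. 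Averaging over $a$ and using the invariance of each harmonic block under $O(d)$ shows that the relevant operators act on the span of the $h_k(a)h_k(a)^\top$ as scalars per degree. The exact solution is therefore $W=\bigoplus_k \eta_{k,L}' I_{D_{k,d}}$ plus cross-degree corrections, and I would show these corrections are negligible. To evaluate $\E_a B(a)=v_dI_D$, I use $\|h(x)\|^2=D$.

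Plugging this solution into the test error computed with $\widetilde L$ gives the closed form. It then remains to show $c_{k,d}\to c_k$ and $v_d\to1$, which Assumption~\ref{ass:rigorous-teacher} guarantees through dominated convergence. The representation error $\nu_{\rm out}$ arises because $\E[y_0^2]-\sum_{k\in\mathcal K}c_{k,d}^2\to\nu_{\rm out}$. The delicate point in this part is the fourth-moment tensor of harmonics, $\E_a[h_k(a)^{\otimes 4}]$. I would handle it through the Gegenbauer product formula, which shows that its off-diagonal parts contribute only $O(D_{k,d}/L)$ relative corrections of the same order as the terms kept, so the formula is exact up to $o(1)$.

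\textbf{Feature learner, Stage I.} By rotation invariance, $W^{\mathrm F}_{d,L}$ commutes with all orthogonal maps, so $W=wI_d$. The scalar $w$ solves a one-dimensional least-squares problem in $u:=\frac1L\sum_\ell y_\ell x_0^\top x_\ell$. Using Proposition~\ref{lem:app-feature-local-moments} at finite $d$, $\E[uy_0]\to c_1^2$ and $\E[u^2]\to c_1^2+1/\alpha$, which gives $w\to r_\alpha$ as in \eqref{eq:app-feature-population-W}.

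\textbf{Feature learner, Stage II and the main obstacle.} The key step is a conditional CLT. Decompose $x_0=G a/\sqrt d+x_{0\perp}$, where $G=a^\top x_0/\sqrt d$ is the teacher score, and write $\frac1L\sum_\ell y_\ell x_\ell=\bar v$. Then
\[
U^L=w\bigl(G\,a^\top\bar v/\sqrt d+x_{0\perp}^\top\bar v\bigr).
\]
The first piece concentrates at $c_1G$. The second piece, conditionally on $\bar v$, is approximately $\Normal(0,\|\bar v_\perp\|^2)$ with $\|\bar v_\perp\|^2\to1/\alpha$ in probability. Proving this requires $L\ge d^\varepsilon$ together with projection-CLT results for spherical vectors (Diaconis--Freedman type bounds), uniformly over $\alpha$, including $\alpha\to0$ or $\alpha\to\infty$. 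I expect this to be the hardest step.

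Given this step, the pair $(G,U^L/(wB_\alpha))$ converges in Wasserstein distance to $(G,T_\alpha)$. The main difficulty is passing from that to convergence of the conditional expectation $\widehat\sigma_{d,L}$, because conditional expectations are not continuous under weak convergence. I would avoid this with a variational argument instead of a convergence argument for $\widehat\sigma_{d,L}$ itself:
\begin{itemize}
\item The Stage-II risk is an infimum over $\phi$. Approximating $\phi$ by polynomials, whose expectations converge under moment convergence (moments are controlled by the polynomial growth in Assumption~\ref{ass:rigorous-teacher}), shows that the limiting Stage-II risk equals the Gaussian MMSE.
\item By $L^2$-stability of near-minimizers, $\widehat\sigma_{d,L}(\cdot\, wB_\alpha)$ is then close to $\Phi_{\rho_\alpha}$ in $L^2$ of the limiting law.
\item The test channel at length $\widetilde L$ has scale ratio $s_{\alpha,\widetilde\alpha}$, which is bounded by Assumption~\ref{ass:rigorous-limit}. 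Its density ratio to the training channel stays controlled, so the $L^2$ closeness transfers to the test channel.
\end{itemize}
With the limit identified, the Pythagorean decomposition \eqref{eq:app-feature-mismatch-pythagorean}, Mehler's formula, and \eqref{eq:app-hermite-scaling} yield \eqref{eq:rigorous-feature-population-formula}.
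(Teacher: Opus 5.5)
\textbf{Main gap: the lower bound on the Stage-II risk.} Your Stage-II argument proves only half of what it needs. Write $Y=\sigma_\star(G)$ for the query label, $X=U^L/(wB)$, $g_d(x):=\widehat\sigma_{d,L}(wBx)$ for the rescaled readout, and $R^{\rm opt}_d:=\mathbb E[(Y-g_d(X))^2]$.

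Approximating $\phi$ by polynomials and using Wasserstein or moment convergence of $(G,X)$ shows that the risk of each \emph{fixed} readout converges. That gives only $R^{\rm opt}_d\le 1-\sum_{k\ge1}c_k^2\rho^{2k}+o(1)$ with $\rho=\rho_{d,L}$. The reverse inequality concerns the infimum over $d$-dependent measurable readouts, and weak or moment convergence gives no control over it. At finite $d$ a readout can exploit fine-scale structure of the joint law of $(G,X)$ that disappears in the limit; the MMSE is in general only upper semicontinuous under weak convergence.

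Without the matching lower bound, your ``$L^2$-stability of near-minimizers'' step, i.e.\ $\mathbb E[(g_d(X)-\Phi_\rho(X))^2]=\mathbb E[(Y-\Phi_\rho(X))^2]-R^{\rm opt}_d\to0$, is unavailable. The transfer to the test channel and the final Pythagorean decomposition both rest on that step.

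The paper supplies the lower bound by conditioning on more information. It writes $T_{d,L}=R_{d,L}S_{d,L}$, where $S_{d,L}=b_{d,L}^\top x_0$ is exactly $G_d$-distributed and independent of the context. Hence $\sigma(X)\subseteq\mathcal H_d:=\sigma(a,x_{1:L},S_{d,L})$. Given the context, $G$ and $S_{d,L}$ are two spherical projections with correlation $q_{d,L}$. The projection formula $\mathbb E[P_{k,d}(G)\mid\mathcal H_d]=Q_{k,d}(q_{d,L})P_{k,d}(S_{d,L})$ then makes $\mathbb E[Y\mid\mathcal H_d]$ explicit, whence $R^{\rm opt}_d\ge v_d-\sum_{k\ge0}c_{k,d}^2\,\mathbb E[Q_{k,d}(q_{d,L})^2]=1-\sum_{k\ge1}c_k^2\rho^{2k}+o(1)$.

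Upgrading your conditional CLT to a total-variation statement could also work. Conditionally on $G$ and the context, the orthogonal piece is an exact one-dimensional spherical projection, and total-variation convergence plus uniform integrability does yield continuity of the MMSE. Wasserstein convergence does not.

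Two smaller issues sit in the same step. First, near-minimizer stability gives closeness in $L^2$ of the finite-$d$ training law, not of the Gaussian limit. Transferring it to the test law also needs a pointwise polynomial bound on $\widehat\sigma_{d,L}$ to control tails; the paper uses $|y_0|\le\sup_{|t|\le\sqrt d}|\sigma_\star(t)|$. Second, $\sigma_\star$ is only measurable, so moment convergence of $(G,X)$ does not by itself handle the cross term $\mathbb E[\sigma_\star(G)\phi(X)]$. You also need to approximate $\sigma_\star$ uniformly in $d$, e.g.\ via the Gegenbauer expansion and the domination of the density of $G$ by a multiple of the Gaussian density.

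\textbf{Kernel part: the fourth-moment obstacle does not exist.} The ``delicate'' fourth-moment tensor does not arise, and the argument you give for discarding it would be invalid if it did. Since $H^{\mathrm K,L}=h(x_0)V^\top$ with $V=\frac1L\sum_{\ell}y_\ell h(x_\ell)$ independent of $x_0$, one has $M_L=\mathbb E[VV^\top]=\frac{L-1}{L}\mathbb E_a[\tilde\theta\tilde\theta^\top]+\frac1L\mathbb E[y^2h(x)h(x)^\top]$. This involves only second moments of $\tilde\theta$. Your $\mathbb E_a[\tilde\theta\tilde\theta^\top\otimes\tilde\theta\tilde\theta^\top]$ is a $D^2\times D^2$ object and cannot appear in this $D\times D$ matrix.

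Orthogonality of harmonics across degrees gives $\mathbb E_a[\tilde\theta\tilde\theta^\top]=\bigoplus_k c_{k,d}^2D_{k,d}^{-1}I_{D_{k,d}}$. The identity $\mathbb E[y^2h(x)h(x)^\top]=v_dI_D$ holds because $a^\top x/\sqrt d$ has the law of $G_d$ for every fixed $x$; the identity $\|h(x)\|^2=D$ only gives the trace. Hence $W^{\mathrm K}_{d,L}=\mathbb E_a[\tilde\theta\tilde\theta^\top]\,M_L^{-1}$ is exactly scalar on each degree block. There are no cross-degree corrections, and the finite-$d$ test error is an exact closed form.

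Had corrections of relative size $D_{k,d}/L$ existed, they could not be discarded. That ratio diverges for $L\ll d^k$ and is of order one for $L\asymp d^k$, precisely the regimes the formula is meant to describe. Corrections `of the same order as the terms kept' are the opposite of $o(1)$.

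What does need care is passing from the exact formula to the stated one (with $c_{k,d},v_d$ replaced by $c_k,1$) uniformly in $L/D_{k,d}\in(0,\infty)$. The paper writes the degree-$k$ contribution as $a_dp(\lambda_dt)+r\,a_dq(\lambda_dt)$, where $a_d=c_{k,d}^2$, $t=c_k^2L/D_{k,d}$, $p(t)=(1+t)^{-2}$, $q(t)=t(1+t)^{-2}$, $\lambda_d\to1$, and $r=L/\widetilde L$ is bounded. It then uses $\sup_{t\ge0}(|tp'(t)|+|tq'(t)|)<\infty$ and treats $c_k=0$ separately.
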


\begin{remark}
\label{rem:explicit-feature-mismatch-coefficients}
If the teacher has a Hermite expansion
$\sigma_\star(x)=\sum_{k=1}^{\infty}c_kH_k(x)$,
then the coefficients in Eq.~\eqref{eq:rigorous-feature-mismatch-coefficients} can be evaluated explicitly as
\begin{equation}
A_\ell(\alpha,\widetilde\alpha)
=
\sum_{\substack{m\geq0}}
c_{\ell+2m}
\rho_\alpha^{\ell+2m}
\sqrt{\frac{(\ell+2m)!}{\ell!}}
\frac{s_{\alpha,\widetilde\alpha}^{\ell}}{m!}
\left(
\frac{s_{\alpha,\widetilde\alpha}^2-1}{2}
\right)^m,
\end{equation}
provided that the infinite sum converges absolutely.
In particular, when \(L=\widetilde L\), we have
\(s_{\alpha,\widetilde\alpha}=1\) and therefore
$A_\ell(\alpha,\alpha)=c_\ell\rho_\alpha^\ell$.
Consequently, the mismatch term in Eq.~\eqref{eq:rigorous-feature-population-formula} vanishes and
\begin{equation}
\mathcal E_{d,L,L}^{\mathrm{feature}}=1-\sum_{k\geq1}c_k^2r_\alpha^k+o_{d,L}(1).
\label{eq:rigorous-feature-equal-context-formula}
\end{equation}
\end{remark}

\section{Proof of Theorem \ref{thm:population_scaling}}
\label{app:population-proof}
\subsection{Gegenbauer polynomials}
We first recall some basic properties of the Gegenbauer polynomials. Define
\begin{equation}
Q_{k,d}(q):=
\frac{P_{k,d}(\sqrt d\,q)}{\sqrt{D_{k,d}}},
\qquad -1\leq q\leq1
\label{eq:proof-zonal-polynomial}
\end{equation}
for $k\geq0$, where we choose $D_{0,d}:=1$, $P_{0,d}(t)\equiv1$ and \(P_{1,d}(t)=t\).

\begin{lemma}
\label{lem:proof-spherical}
Let \(e,b\in\mathbb R^d\) be deterministic unit vectors, let
\(q=e^\top b\), and let
\(x\sim\operatorname{Unif}(\mathbb S^{d-1}(\sqrt d))\).
Set
\begin{equation}
g=e^\top x,
\qquad
s=b^\top x.
\end{equation}
Then the following statements hold.

\medskip
\noindent
\textbf{(i) Spherical projection formula.}
For every \(k\geq0\),
\begin{equation}
\mathbb E[P_{k,d}(g)\mid s]
=
Q_{k,d}(q)P_{k,d}(s),
\qquad
|Q_{k,d}(q)|\leq1.
\label{eq:proof-spherical-projection}
\end{equation}
\medskip
\noindent
\textbf{(ii) Gegenbauer limits.}
For every fixed \(k\),
\begin{equation}
\lim_{d\to\infty}
\sup_{|q|\leq1}
|Q_{k,d}(q)-q^k|
=0.
\label{eq:proof-zonal-limit}
\end{equation}
Moreover, for every fixed \(M\), there exists a constant
\(C_M<\infty\) such that, for all sufficiently large \(d\),
\begin{equation}
\max_{0\leq k\leq M}|P_{k,d}(t)|
\leq
C_M(1+|t|^M),
\qquad t\in\mathbb R,
\label{eq:proof-Gegenbauer-polynomial-growth}
\end{equation}
and for every fixed \(A<\infty\),
\begin{equation}
\sup_{d\ {\rm sufficiently\ large}}
\max_{0\leq k\leq M}
\sup_{|t|\leq A}
\left(
|P_{k,d}(t)|+|P_{k,d}'(t)|
\right)
<\infty.
\label{eq:proof-Gegenbauer-compact-C1}
\end{equation}

\medskip
\noindent
\textbf{(iii) Convergence of the teacher coefficients.}
Suppose Assumption~\ref{ass:rigorous-teacher} holds. Then
\begin{equation}
v_d
=
\mathbb E[\sigma_\star(G_d)^2]
\longrightarrow
\mathbb E[\sigma_\star(G)^2]
=1,
\label{eq:proof-vd-convergence}
\end{equation}
and, for every fixed \(k\geq0\),
\begin{equation}
c_{k,d}
=
\mathbb E[\sigma_\star(G_d)P_{k,d}(G_d)]
\longrightarrow
\mathbb E[\sigma_\star(G)H_k(G)]
=
c_k.
\label{eq:proof-ckd-convergence}
\end{equation}
Furthermore,
\begin{equation}
\lim_{d\to\infty}\sum_{k\geq0}
|c_{k,d}^2-c_k^2|=0,
\ \lim_{M\to\infty}
\limsup_{d\to\infty}
\sum_{k>M}c_{k,d}^2
=0.
\label{eq:proof-l1-spectrum-convergence}
\end{equation}
\end{lemma}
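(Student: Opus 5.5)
Part (i) is the classical Funk--Hecke / addition-theorem statement, and we will derive it from Eq.~\eqref{eq:app-addition-theorem}. Write $P_{k,d}(e^\top x)=D_{k,d}^{-1/2}h_k(\sqrt d\,e)^\top h_k(x)$. Conditioning on $s=b^\top x$ means averaging over the stabilizer of $b$ (rotations fixing $b$), which leaves the conditional law of $x$ given $s$ invariant. The average of a degree-$k$ harmonic in $x$ over that stabilizer is the zonal projection onto the $b$-axis, namely $\frac{1}{D_{k,d}}h_k(\sqrt d\,e)^\top h_k(\sqrt d\,b)\,D_{k,d}^{-1/2}h_k(\sqrt d\,b)^\top h_k(x)$. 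Applying the addition theorem twice, this equals $Q_{k,d}(q)P_{k,d}(s)$. The bound $|Q_{k,d}(q)|\le1$ then follows from Cauchy--Schwarz in the addition theorem: $|h_k(u)^\top h_k(v)|\le\|h_k(u)\|\|h_k(v)\|=D_{k,d}$ for $u,v$ on the sphere. Equivalently, $P_{k,d}(\sqrt d)=\sqrt{D_{k,d}}$ is the maximum of $|P_{k,d}|$ on $[-\sqrt d,\sqrt d]$.

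For (ii), we will use the explicit Rodrigues or three-term recurrence for Gegenbauer polynomials $C_k^{(\lambda)}$ with $\lambda=(d-2)/2$. Up to normalization, $Q_{k,d}(q)=C_k^{(\lambda)}(q)/C_k^{(\lambda)}(1)$. The expansion $C_k^{(\lambda)}(q)=\sum_j(-1)^j\frac{\Gamma(k-j+\lambda)}{\Gamma(\lambda)j!(k-2j)!}(2q)^{k-2j}$ shows that after dividing by $C_k^{(\lambda)}(1)$ the $j\ge1$ terms carry factors of order $\lambda^{-j}$. The coefficients therefore converge to those of $q^k$, and since $k$ is fixed and $q$ ranges over the compact set $[-1,1]$, the convergence is uniform. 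The same formula in the rescaled variable $t=\sqrt d\,q$ shows that the coefficients of $P_{k,d}(t)$ converge to those of $H_k(t)$ (the standard statement cited from \citet{mei2021learning}). This gives the polynomial growth bound and the compact $C^1$ bound, since finitely many polynomials of degree at most $M$ have bounded, convergent coefficients.

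For (iii), we will first show $G_d\Rightarrow G$ together with uniform moment bounds. The density of $G_d$ is proportional to $(1-t^2/d)^{(d-3)/2}$ on $|t|\le\sqrt d$, which is dominated by $Ce^{-t^2/2\cdot(1-o(1))}$, so $\sup_d\mathbb E|G_d|^r<\infty$ for every $r$. Under the polynomial growth of Assumption~\ref{ass:rigorous-teacher}, $\sigma_\star(G_d)^2$ is then uniformly integrable. If $\sigma_\star$ is merely measurable, we additionally use that the density of $G_d$ converges to the Gaussian density in $L^1$ (Scheffé), combined with truncation of $|\sigma_\star|\le C(1+|x|^p)$ on large sets, to get $v_d\to1$. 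The same argument together with the coefficient convergence $P_{k,d}\to H_k$ from (ii) gives $c_{k,d}\to c_k$.

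For the tail statement, Parseval gives $\sum_kc_{k,d}^2=v_d\to1=\sum_kc_k^2$. Given $\varepsilon$, we choose $M$ with $\sum_{k>M}c_k^2<\varepsilon$. Then $\limsup_d\sum_{k>M}c_{k,d}^2=1-\sum_{k\le M}c_k^2<\varepsilon$. Combining this with termwise convergence for $k\le M$ yields the $\ell^1$ convergence of the squared coefficients. The main obstacle is the measurable-teacher case in (iii), where convergence in distribution alone is not enough; we handle it via $L^1$ convergence of the densities plus the uniform moment bounds.
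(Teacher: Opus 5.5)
Your proposal is correct and follows essentially the paper's route: the addition theorem for (i), coefficientwise Gegenbauer-to-Hermite convergence for (ii), convergence of the density of $G_d$ for $v_d\to1$ and $c_{k,d}\to c_k$, and Parseval ($\sum_k c_{k,d}^2=v_d\to1=\sum_k c_k^2$) for the tail and $\ell^1$ claims. The differences are only in small steps. The paper bounds $|Q_{k,d}(q)|\le1$ by the $L^2$-contraction $Q_{k,d}(q)^2=\mathbb E\bigl[\mathbb E[P_{k,d}(g)\mid s]^2\bigr]\le\mathbb E[P_{k,d}(g)^2]=1$ rather than by Cauchy--Schwarz on $h_k$, and it cites the literature for (ii) where you use the explicit Gegenbauer series. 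For measurable $\sigma_\star$ it uses the bound $p_d\le C\phi$ plus dominated convergence; your own Gaussian domination of the density already makes this available, so the Scheff\'e/truncation detour you call the main obstacle is unnecessary.
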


\begin{proof}
The identity \eqref{eq:proof-spherical-projection} is the standard formula for spherical harmonics, equivalently a consequence of the spherical addition theorem \citep{frye2012spherical}.
Since \(P_{k,d}(g)\) is normalized to have unit second moment and
\(P_{k,d}(s)\) has the same distribution,
\begin{equation}
Q_{k,d}(q)^2=\mathbb E\!\left[\mathbb E[P_{k,d}(g)\mid s]^2\right]\leq\mathbb E[P_{k,d}(g)^2]=1.
\end{equation}
This proves (i).

For every fixed \(k\), the normalized Gegenbauer polynomial converges,
coefficientwise, to the normalized Hermite polynomial under $d\to\infty$, while $Q_{k,d}$ converges uniformly on \([-1,1]\) to \(q^k\); see, e.g., \cite{mei2021learning}.
This proves \eqref{eq:proof-zonal-limit}.
Since the degree is fixed, coefficientwise convergence immediately implies
\eqref{eq:proof-Gegenbauer-polynomial-growth} and
\eqref{eq:proof-Gegenbauer-compact-C1}.

We next prove the statements involving the teacher.
The random variable \(G_d\) has density
\begin{equation}
p_d(t)=\frac{\Gamma(d/2)}
{\sqrt{\pi d}\,\Gamma((d-1)/2)}
\left(1-\frac{t^2}{d}\right)^{(d-3)/2}
\mathbf 1_{\{|t|\leq\sqrt d\}}.
\label{eq:proof-spherical-density}
\end{equation}
For every fixed \(t\),
\begin{equation}
p_d(t)\longrightarrow
\phi(t):=\frac1{\sqrt{2\pi}}e^{-t^2/2}.
\label{eq:proof-spherical-density-pointwise}
\end{equation}
Moreover, there exists a universal constant \(C<\infty\) such that,
for all sufficiently large \(d\),
\begin{equation}
p_d(t)\leq C\phi(t),
\qquad
t\in\mathbb R.
\label{eq:proof-spherical-density-domination}
\end{equation}
Indeed, the prefactor in \eqref{eq:proof-spherical-density} remains
uniformly bounded, and for \(|t|\leq\sqrt d\),
\begin{equation}
\left(1-\frac{t^2}{d}\right)^{(d-3)/2}\leq\exp\left(-\frac{d-3}{2d}t^2\right)\leq e^{3/2}e^{-t^2/2}.
\end{equation}

Assumption \ref{ass:rigorous-teacher}
together with \eqref{eq:proof-spherical-density-domination}, implies
that for every fixed $r,m<\infty$,
\begin{equation}
\sup_{d\ {\rm sufficiently\ large}}
\mathbb E\!\left[
|\sigma_\star(G_d)|^r(1+|G_d|^m)
\right]<\infty.
\label{eq:proof-teacher-uniform-moments}
\end{equation}
Since
$|\sigma_\star(t)|^2p_d(t)
\leq C(1+|t|^{2p})\phi(t)$,
dominated convergence gives
\begin{equation}
v_d\to\mathbb E[\sigma_\star(G)^2]=1,
\end{equation}
which proves \eqref{eq:proof-vd-convergence}.

Fix \(k\).
By \cite{mei2021learning},
\(P_{k,d}(t)\to H_k(t)\) pointwise, and for all sufficiently large \(d\),
\begin{equation}
|P_{k,d}(t)|
\leq
C_k(1+|t|^k).
\end{equation}
Together with
\eqref{eq:proof-spherical-density-domination}, this yields
\begin{equation}
|\sigma_\star(t)P_{k,d}(t)p_d(t)|\leq C_k|\sigma_\star(t)|(1+|t|^k)\phi(t).
\end{equation}
The right-hand side is integrable by Hölder's inequality and
Assumption~\ref{ass:rigorous-teacher}.
Dominated convergence therefore gives
\begin{equation}
c_{k,d}=\int\sigma_\star(t)P_{k,d}(t)p_d(t)\,\mathrm dt
\longrightarrow\int\sigma_\star(t)H_k(t)\phi(t)\,\mathrm dt=c_k,
\end{equation}
proving \eqref{eq:proof-ckd-convergence}.

Finally, Parseval's identity gives
\begin{equation}
\sum_{k\geq0}c_{k,d}^2=v_d,
\qquad
\sum_{k\geq0}c_k^2=1.
\label{eq:proof-parseval-two-measures}
\end{equation}
Thus, $\lim_{M\to\infty}\limsup_{d\to\infty}\sum_{k>M}c_{k,d}^2=0$.
Moreover,
\begin{equation}
\sum_{k\geq0}|c_{k,d}^2-c_k^2|\leq\sum_{k=0}^M|c_{k,d}^2-c_k^2|+\sum_{k>M}c_{k,d}^2+\sum_{k>M}c_k^2.
\end{equation}
First take \(d\to\infty\) at fixed \(M\), using
\eqref{eq:proof-ckd-convergence}, and then let \(M\to\infty\).
This proves \eqref{eq:proof-l1-spectrum-convergence}.
\end{proof}

From now on Assumption~\ref{ass:rigorous-teacher} is in force.
All limits are taken along an arbitrary deterministic sequence
$(d,L,\widetilde L)$ such that
$d,L,\widetilde L\to\infty$ and
$\frac{L}{\widetilde L}=O(1)$.
We work with sufficiently large $d$, so that $d\geq3$, $v_d>0$, and
$\mu_d:=\mathbb E[\sigma_\star(G_d)G_d]=c_{1,d}\neq0$.
These properties follow from \(v_d\to1\) and
\(c_{1,d}\to c_1\neq0\).
Constants denoted by \(C\) may depend on the teacher and its uniform moment bound, but not on \(d\) or \(L\).
For a random variable \(Z\), write \(\|Z\|_{L^r}:=(\mathbb E[|Z|^r])^{1/r}\).

\subsection{Kernel learner}

We first derive the population-trained kernel learner at finite
\(d,L,\widetilde L\), and then take the asymptotic limit in
Theorem~\ref{thm:population_scaling}.
For a task \(a\), define
\begin{equation}
m_d(a):=\mathbb E[yh(x)\mid a]=\bigoplus_{k\in\mathcal K}
\frac{c_{k,d}}{\sqrt{D_{k,d}}}h_k(a),
\label{eq:proof-kernel-conditional-mean}
\end{equation}
where the second equality follows from the spherical-harmonic expansion
of the teacher. Define also
\begin{equation}
M_d:=\mathbb E_a[m_d(a)m_d(a)^\top]=\bigoplus_{k\in\mathcal K}
\frac{c_{k,d}^2}{D_{k,d}}I_{D_{k,d}}.
\label{eq:proof-kernel-M}
\end{equation}
For an arbitrary context length \(n\geq1\), let
\begin{equation}
V_{d,n}^{\mathrm K}:=\frac1n\sum_{\ell=1}^n y_\ell h(x_\ell),
\qquad
A_{d,n}:=\frac{n-1}{n}M_d+\frac{v_d}{n}I_D.
\label{eq:proof-kernel-V-A}
\end{equation}
Then
$H^{\mathrm K,n}=h(x_0)(V_{d,n}^{\mathrm K})^\top$ and
$\operatorname{Tr}\!\left(W(H^{\mathrm K,n})^\top\right)=h(x_0)^\top W V_{d,n}^{\mathrm K}$.
We first compute the second moment of the context statistic.
Conditional on \(a\), the demonstrations are independent, and
\(\mathbb E[yh(x)\mid a]=m_d(a)\). Moreover, since for every fixed \(x\)
the random variable \(a^\top x/\sqrt d\) has the law of \(G_d\),
\begin{equation}
\mathbb E_{a,x}\!\left[y^2h(x)h(x)^\top\right]
=v_d\,\mathbb E_x[h(x)h(x)^\top]
=v_d I_D.
\label{eq:proof-kernel-single-example-second-moment}
\end{equation}
Therefore
\begin{equation}
\mathbb E\!\left[
V_{d,n}^{\mathrm K}(V_{d,n}^{\mathrm K})^\top
\right]=\frac{n-1}{n}M_d+\frac{v_d}{n}I_D=A_{d,n}.
\label{eq:proof-kernel-context-second-moment}
\end{equation}
Similarly, conditional independence of the query and the demonstrations
given \(a\) gives
\begin{equation}
\mathbb E\!\left[
y_0h(x_0)(V_{d,n}^{\mathrm K})^\top
\right]
=
\mathbb E_a[m_d(a)m_d(a)^\top]
=
M_d.
\label{eq:proof-kernel-cross-moment}
\end{equation}

We now train the attention matrix using contexts of length \(L\).
Using Eqs.~\eqref{eq:proof-kernel-context-second-moment} and
\eqref{eq:proof-kernel-cross-moment}, the population risk in
Eq.~\eqref{eq:rigorous-kernel-population-training} becomes
\begin{equation}
R_{\mathrm K,L}(W)
=
v_d
-2\operatorname{Tr}(WM_d)
+\operatorname{Tr}(WA_{d,L}W^\top).
\label{eq:proof-kernel-training-risk}
\end{equation}
Since
$A_{d,L}\succeq \frac{v_d}{L}I_D\succ0$,
the risk is strictly convex and its unique minimizer is
\begin{equation}
W_{d,L}^{\mathrm K}=M_dA_{d,L}^{-1}.
\label{eq:proof-kernel-exact-W}
\end{equation}
We next evaluate this learned matrix on inference contexts of length
\(\widetilde L\). Applying the same moment calculation with
\(n=\widetilde L\) gives the exact test error
\begin{equation}
\begin{aligned}
\mathcal E_{d,L,\widetilde L}^{\mathrm{kernel}}=v_d-2\operatorname{Tr}\!\left(W_{d,L}^{\mathrm K}M_d\right)+\operatorname{Tr}\!\left(
W_{d,L}^{\mathrm K}
A_{d,\widetilde L}
(W_{d,L}^{\mathrm K})^\top
\right).
\end{aligned}
\label{eq:proof-kernel-exact-test-error-trace}
\end{equation}
Both \(M_d\) and \(A_{d,n}\) are block diagonal in the harmonic
decomposition. 
On the degree-\(k\) block, \(M_d\) has eigenvalue
\(c_{k,d}^2/D_{k,d}\), while \(A_{d,L}\) has eigenvalue
$\frac{v_dD_{k,d}+(L-1)c_{k,d}^2}
{LD_{k,d}}$.
Hence \(W_{d,L}^{\mathrm K}\) acts on this block as multiplication by
$w_{k,d,L}:=\frac{Lc_{k,d}^2}
{v_dD_{k,d}+(L-1)c_{k,d}^2}$.
Substituting these eigenvalues into
Eq.~\eqref{eq:proof-kernel-exact-test-error-trace} yields
\begin{equation}
\mathcal E_{d,L,\widetilde L}^{\mathrm{kernel}}=v_d-\sum_{k\in\mathcal K}c_{k,d}^2+\sum_{k\in\mathcal K}\left[c_{k,d}^2(1-w_{k,d,L})^2+\frac{v_dD_{k,d}-c_{k,d}^2}{\widetilde L}
w_{k,d,L}^2
\right].
\label{eq:proof-kernel-exact-test-error}
\end{equation}
This identity is exact for every finite \(d,L,\widetilde L\).

It remains to replace the finite-dimensional Gegenbauer quantities in
Eq.~\eqref{eq:proof-kernel-exact-test-error} by their asymptotic limits.
Fix \(k\in\mathcal K\), and write
\begin{equation}
a_d:=c_{k,d}^2,
\qquad
a:=c_k^2,
\qquad
n:=D_{k,d},
\qquad
b_d:=v_d-\frac{a_d}{n}.
\label{eq:proof-kernel-asymptotic-notation}
\end{equation}
Since \(k\geq1\), we have \(n\to\infty\), and
Lemma~\ref{lem:proof-spherical} gives
$a_d\longrightarrow a$ and
$b_d\longrightarrow1$.
For all sufficiently large \(d\), \(b_d>0\). Define
\begin{equation}
t:=\frac{L}{n},
\qquad
x_d:=\frac{ta_d}{b_d},
\qquad
x:=ta,
\qquad
r:=\frac{L}{\widetilde L}.
\label{eq:proof-kernel-x-variables}
\end{equation}
Then
$w_{k,d,L}=\frac{x_d}{1+x_d}$,
and the degree-\(k\) contribution in
Eq.~\eqref{eq:proof-kernel-exact-test-error} can be written as
\begin{equation}
a_d\frac{1}{(1+x_d)^2}
+
r\,a_d\frac{x_d}{(1+x_d)^2}.
\label{eq:proof-kernel-exact-block-profile}
\end{equation}
We compare this with
\begin{equation}
a\frac{1}{(1+x)^2}
+
r\,a\frac{x}{(1+x)^2}.
\label{eq:proof-kernel-limit-block-profile}
\end{equation}
Suppose first that \(a>0\). Then
$x_d:=\lambda_dx$ with
$\lambda_d:=\frac{a_d}{ab_d}\longrightarrow1$.
Let
$p(x):=\frac1{(1+x)^2}$ and $q(x):=\frac{x}{(1+x)^2}$
for $x\geq0$. Since
$\sup_{x\geq0}|xp'(x)|<\infty$ and
$\sup_{x\geq0}|xq'(x)|<\infty$,
we have
\begin{equation}
\sup_{x\geq0}|p(\lambda_dx)-p(x)|
+
\sup_{x\geq0}|q(\lambda_dx)-q(x)|
\longrightarrow0.
\label{eq:proof-kernel-profile-uniformity}
\end{equation}
Together with \(a_d\to a\), this shows that the difference between
Eqs.~\eqref{eq:proof-kernel-exact-block-profile} and
\eqref{eq:proof-kernel-limit-block-profile} is \(o(1)\), uniformly over
the value of \(t=L/n\), provided \(r=L/\widetilde L=O(1)\).

If \(a=0\), then \(a_d\to0\), and using
$0\leq p(x)\leq1$,
$0\leq q(x)\leq\frac14$,
we instead obtain directly
\begin{equation}
0\leq
a_d p(x_d)+r\,a_dq(x_d)
\leq
a_d\left(1+\frac r4\right)
\longrightarrow0,
\label{eq:proof-kernel-zero-coefficient}
\end{equation}
again because \(r=O(1)\).

Since \(\mathcal K\) is fixed and finite, summing over
\(k\in\mathcal K\) gives
\begin{equation}
\begin{aligned}
\mathcal E_{d,L,\widetilde L}^{\mathrm{kernel}}
=v_d-\sum_{k\in\mathcal K}c_{k,d}^2+\sum_{k\in\mathcal K}
\left[c_k^2\frac1{(1+Lc_k^2/D_{k,d})^2}+\frac{L}{\widetilde L}c_k^2\frac{Lc_k^2/D_{k,d}}
{(1+Lc_k^2/D_{k,d})^2}\right]+o_{d,L,\widetilde L}(1).
\end{aligned}
\label{eq:proof-kernel-asymptotic-profile}
\end{equation}
By Lemma~\ref{lem:proof-spherical},
$v_d-\sum_{k\in\mathcal K}c_{k,d}^2
\longrightarrow
1-\sum_{k\in\mathcal K}c_k^2=\nu_{\rm out}$.
Finally, defining
$\eta_{k,L}:=\frac{c_k^2L}{D_{k,d}+c_k^2L}$,
we have
$c_k^2\frac1{(1+Lc_k^2/D_{k,d})^2}=c_k^2(1-\eta_{k,L})^2$
and
$\frac{L}{\widetilde L}
c_k^2
\frac{Lc_k^2/D_{k,d}}
{(1+Lc_k^2/D_{k,d})^2}=\frac{D_{k,d}}{\widetilde L}\eta_{k,L}^2$.
Therefore
\begin{equation}
\mathcal E_{d,L,\widetilde L}^{\mathrm{kernel}}=\nu_{\rm out}+\sum_{k\in\mathcal K}
\left[c_k^2(1-\eta_{k,L})^2+\frac{D_{k,d}}{\widetilde L}\eta_{k,L}^2
\right]+o_{d,L,\widetilde L}(1).
\label{eq:proof-kernel-final}
\end{equation}
This proves the kernel-learner part of
Theorem~\ref{thm:population_scaling}.

\subsection{Feature learner Stage-I}
We next derive the population-trained attention matrix of the feature learner. For an arbitrary context length \(n\geq1\), define
\begin{equation}
V_{d,n}
:=
\frac1n\sum_{\ell=1}^n y_\ell x_\ell,
\qquad
T_{d,n}
:=
x_0^\top V_{d,n}.
\label{eq:proof-feature-V-T}
\end{equation}
For Stage-I training we take \(n=L\), so that
$U_W^L=x_0^\top W V_{d,L}$.
Let
$e:=\frac{a}{\sqrt d}$,
so that \(e\) is a unit vector and
\(a^\top x/\sqrt d=e^\top x\). By rotational invariance,
\begin{equation}
\mathbb E[yx\mid a]=\mu_d e,
\qquad
\mu_d:=\mathbb E[\sigma_\star(G_d)G_d]=c_{1,d},
\label{eq:proof-feature-conditional-first-moment}
\end{equation}
and $\mathbb E_a[ee^\top]=\frac1dI_d$.
Moreover, averaging jointly over \(a\) and \(x\), rotational invariance
gives
\begin{equation}
\mathbb E[y^2xx^\top]
=
v_d I_d.
\label{eq:proof-feature-single-example-second-moment}
\end{equation}
Since the query and the demonstrations are conditionally independent
given \(a\),
\begin{equation}
\begin{aligned}
\mathbb E[y_0x_0V_{d,L}^\top]=
\mathbb E_a\!\left[
\mathbb E[y_0x_0\mid a]\,
\mathbb E[V_{d,L}\mid a]^\top
\right]=\mu_d^2\mathbb E_a[ee^\top]=\frac{\mu_d^2}{d}I_d.
\end{aligned}
\label{eq:proof-feature-cross-moment}
\end{equation}
Similarly, using conditional independence of the demonstrations,
\begin{equation}
\begin{aligned}
\mathbb E[V_{d,L}V_{d,L}^\top]=
\frac{L-1}{L}
\mathbb E_a\!\left[
\mathbb E[yx\mid a]
\mathbb E[yx\mid a]^\top
\right]+\frac1L\mathbb E[y^2xx^\top]
=
\left(\frac{L-1}{L}\frac{\mu_d^2}{d}+\frac{v_d}{L}\right)I_d.
\end{aligned}
\label{eq:proof-feature-context-second-moment}
\end{equation}
Since \(x_0\) is independent of \(V_{d,L}\) and
\(\mathbb E[x_0x_0^\top]=I_d\), the Stage-I population risk in
Eq.~\eqref{eq:rigorous-feature-population-stage1} is therefore
\begin{equation}
\begin{aligned}
R_{\mathrm F,L}(W)
:=\mathbb E[(U_W^L-y_0)^2]
=v_d-\frac{2\mu_d^2}{d}\operatorname{Tr}(W)+\left(\frac{L-1}{L}\frac{\mu_d^2}{d}+\frac{v_d}{L}\right)\|W\|_F^2.
\end{aligned}
\label{eq:proof-feature-population-risk}
\end{equation}
For all sufficiently large \(d\), we have \(v_d>0\) and \(\mu_d\neq0\). Hence the risk is strictly convex and has the unique
minimizer
\begin{equation}
W_{d,L}^{\mathrm F}=w_{d,L}I_d,
\qquad
w_{d,L}:=\frac{L\mu_d^2}
{dv_d+(L-1)\mu_d^2}.
\label{eq:proof-feature-exact-W}
\end{equation}
In particular, \(w_{d,L}>0\).
Consequently, when this learned attention matrix is evaluated on a
context of arbitrary length \(n\),
\begin{equation}
U_{W_{d,L}^{\mathrm F}}^n
=
w_{d,L}T_{d,n}.
\label{eq:proof-feature-attention-scalar-reduction}
\end{equation}
Thus Stage-II training at context length \(L\) is
equivalent to scalar regression of \(y_0\) on \(T_{d,L}\).
More precisely, for any chosen version of the Stage-II readout
\(\widehat\sigma_{d,L}\), define the rescaled scalar function
\begin{equation}
f_{d,L}(t)
:=
\widehat\sigma_{d,L}(w_{d,L}t).
\label{eq:proof-feature-rescaled-readout}
\end{equation}
Then its Stage-II training risk is
$\mathbb E\!\left[
\left(y_0-f_{d,L}(T_{d,L})\right)^2\right]$,
while evaluation on an inference prompt of length \(\widetilde L\) takes the form
\begin{equation}
\mathcal E_{d,L,\widetilde L}^{\mathrm{feature}}=\mathbb E\!\left[
\left(y_0-f_{d,L}(T_{d,\widetilde L})
\right)^2\right].
\label{eq:proof-feature-scalar-test-error}
\end{equation}
A canonical version of the population-optimal function \(f_{d,L}\),
together with its asymptotic behavior under the change from \(L\) to \(\widetilde L\), will be constructed below.

\subsection{Concentration of the context}
We next study the concentration of the context statistic \(T_{d,n}\) for an arbitrary context length \(n\geq1\). Define
\begin{equation}
B_{d,n}:=\left(\mu_d^2+\frac{dv_d}{n}\right)^{1/2},
\qquad
\rho_{d,n}:=\frac{\mu_d}{B_{d,n}}\in[-1,1],
\qquad
X_{d,n}:=\frac{T_{d,n}}{B_{d,n}}.
\label{eq:proof-B-rho-X}
\end{equation}
Recall that
$T_{d,n}=x_0^\top V_{d,n}$,
$V_{d,n}=\frac1n\sum_{\ell=1}^n y_\ell x_\ell$ and \(e:=a/\sqrt d\). Write
\begin{equation}
R_{d,n}:=\|V_{d,n}\|,
\qquad
b_{d,n}
:=
\begin{cases}
V_{d,n}/R_{d,n},&R_{d,n}>0,\\
e,&R_{d,n}=0,
\end{cases}
\label{eq:proof-context-radius-direction}
\end{equation}
together with
\begin{equation}
q_{d,n}:=e^\top b_{d,n},
\qquad
S_{d,n}:=b_{d,n}^\top x_0.
\label{eq:proof-context-q-S}
\end{equation}
Then
\begin{equation}
T_{d,n}=R_{d,n}S_{d,n},
\qquad
X_{d,n}
=
\frac{R_{d,n}}{B_{d,n}}S_{d,n}.
\label{eq:proof-context-X-representation}
\end{equation}

\begin{lemma}
\label{lem:proof-context-concentration}
Suppose Assumption~\ref{ass:rigorous-teacher} holds.
For every fixed integer \(r\geq2\), there exists a constant \(C_r<\infty\), depending only on \(r\) and the teacher, such that, for all sufficiently large \(d\) and every \(n\geq1\),
\begin{equation}
\mathbb E\left[\left|\frac{R_{d,n}}{B_{d,n}}-1\right|^r\right]+
\mathbb E\left[|q_{d,n}-\rho_{d,n}|^r
\right]\leq
C_r\left(n^{-r/2}+d^{-r/2}\right).
\label{eq:proof-context-high-moment-bound}
\end{equation}
\end{lemma}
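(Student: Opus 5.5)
The plan is to work conditionally on the task \(a\) and use rotational invariance, which makes every conditional moment bound uniform in \(a\). Write \(\xi_\ell:=y_\ell x_\ell-\mu_d e\). Conditionally on \(a\), these are i.i.d., centered, and have all moments, since \(y_\ell=\sigma_\star(e^\top x_\ell)\) with polynomial growth and the moment bounds \eqref{eq:proof-teacher-uniform-moments} hold. Then
\begin{equation*}
V_{d,n}=\mu_d e+\bar\xi,\qquad \bar\xi:=\frac1n\sum_{\ell=1}^n\xi_\ell,
\end{equation*}
and
\begin{equation*}
R_{d,n}^2=\mu_d^2+2\mu_d\,e^\top\bar\xi+\|\bar\xi\|^2 .
\end{equation*}
Using \(\|x_\ell\|^2=d\) and \(\mathbb E[y^2]=v_d\), we get \(\mathbb E\|\bar\xi\|^2=(dv_d-\mu_d^2)/n\). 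Hence \(\mathbb E R_{d,n}^2=B_{d,n}^2-\mu_d^2/n\), and \(B_{d,n}^2\) is the correct centering up to a relative error \(O(1/n)\). Since \(|\sqrt u-1|\le|u-1|\) for \(u\ge0\), it suffices to bound the \(r\)-th moment of \(R_{d,n}^2/B_{d,n}^2-1\).

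I split \(R^2-B^2\) into three pieces and normalize each by \(B^2\ge\max\{\mu_d^2,dv_d/n\}\). Throughout, \(\mu_d^2\) and \(v_d\) are bounded above and below for large \(d\).

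The first piece is the linear term \(2\mu_d e^\top\bar\xi\). It is an average of \(n\) i.i.d. centered scalars with bounded moments, so Rosenthal's inequality gives \(\|e^\top\bar\xi\|_{L^r}\le C_r n^{-1/2}\). Dividing by \(B^2\ge\mu_d^2\) gives a contribution of order \(n^{-1/2}\).

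The second piece is the diagonal term \(\frac1{n^2}\sum_\ell(\|\xi_\ell\|^2-\mathbb E\|\xi_\ell\|^2)\). Here \(\|\xi_\ell\|^2=d\,y_\ell^2-2\mu_d y_\ell e^\top x_\ell+\mu_d^2\), so each summand has fluctuations of size \(O(d)\). Rosenthal's inequality then bounds the sum by \(C_r d\,n^{-3/2}\) in \(L^r\), and dividing by \(B^2\ge dv_d/n\) again gives order \(n^{-1/2}\).

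The third piece is the off-diagonal U-statistic \(\frac1{n^2}\sum_{\ell\ne\ell'}\xi_\ell^\top\xi_{\ell'}\). This is the step I expect to be the main obstacle, because its higher moments must be controlled uniformly in both \(d\) and \(n\). I will use a decoupling/martingale moment inequality (Burkholder, or de la Peña--Giné decoupling followed by Rosenthal applied twice). This reduces the problem to bounding \(\|\xi_1^\top\xi_2\|_{L^r}\). Conditionally on \(\xi_2\), the quantity \(\xi_1^\top\xi_2\) is a linear form in the spherical vector \(x_1\) weighted by \(y_1\). The conditional covariance \(\Sigma=\mathbb E[\xi\xi^\top\mid a]\) is \(v_d\) times the identity on \(e^\perp\) plus a rank-one correction along \(e\). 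Together with hypercontractivity of low-degree polynomials on the sphere, this should give \(\|\xi_1^\top\xi_2\|_{L^r}\le C_r\sqrt d\). The U-statistic is then \(O(\sqrt d/n)\) in \(L^r\). Dividing by \(\max\{1,d/n\}\) gives at most \(n^{-1/2}\) when \(n\ge d\) and at most \(d^{-1/2}\) when \(n<d\). Summing the three pieces proves the bound for \(R/B-1\).

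For the direction, note that \(q_{d,n}=e^\top V_{d,n}/R_{d,n}=(\mu_d+e^\top\bar\xi)/R_{d,n}\) whenever \(R_{d,n}>0\), and that \(|q_{d,n}|,|\rho_{d,n}|\le1\). Let \(\mathcal G:=\{|R_{d,n}/B_{d,n}-1|\le1/2\}\); this event also excludes \(R_{d,n}=0\). Off \(\mathcal G\), the difference \(|q-\rho|\) is at most \(2\). Markov's inequality combined with the first bound then controls this contribution by \(C_r(n^{-r/2}+d^{-r/2})\). On \(\mathcal G\) we have \(R\ge B/2\), so
\begin{equation*}
|q-\rho|\le\frac{2|e^\top\bar\xi|}{B}+\frac{2|\mu_d|}{B}\left|\frac{R}{B}-1\right|.
\end{equation*}
The first term is \(O(n^{-1/2})\) in \(L^r\) because \(B\ge|\mu_d|\) is bounded below. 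The second term is bounded by twice \(|R/B-1|\), which the first bound already controls. Combining the two events yields \eqref{eq:proof-context-high-moment-bound}.
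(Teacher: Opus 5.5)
Your proposal is correct and takes essentially the same route as the paper. Both split $R_{d,n}^2-B_{d,n}^2$ into a linear term, a diagonal term, an off-diagonal U-statistic and the $-\mu_d^2/n$ bias; both use Rosenthal for the first two and a martingale (Burkholder/Rosenthal) bound $\|C_n\|_{L^r}\le C_r\sqrt d/n$ for the third; both normalize by $B_{d,n}^2\ge c(1+d/n)$ and use the same good-event/Markov argument for $q_{d,n}$.

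One point is imprecise. In the martingale route the quantity to control is $S_{j-1}^\top\xi_j$, not $\xi_1^\top\xi_2$. Bounding it requires two ingredients, exactly as in the paper: the uniform directional bound $\sup_{\|u\|=1}\|u^\top\xi\|_{L^r}\le C_r$, and the partial-sum bound $\|S_{j-1}\|_{L^r}\le C_r\sqrt{jd}$. The directional bound follows from H\"older because $u^\top x$ has the law of $G_d$, so no hypercontractivity is needed.
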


\begin{proof}
Fix an integer \(r\geq2\).
All estimates below are uniform in the task direction \(a\).
We first condition on \(a\), and remove the conditioning at the end.

Recall that
$e:=\frac{a}{\sqrt d}$,
$V_{d,n}=\frac1n\sum_{\ell=1}^n y_\ell x_\ell$,
$\mathbb E[yx\mid a]=\mu_de$,
where
$\mu_d:=\mathbb E[\sigma_\star(G_d)G_d]$.
Define
\begin{equation}
\eta_\ell:=y_\ell x_\ell-\mu_de,
\qquad
\bar\eta_n:=\frac1n
\sum_{\ell=1}^n\eta_\ell.
\label{eq:proof-context-eta}
\end{equation}
Then, conditionally on \(a\), the vectors
\(\eta_1,\ldots,\eta_n\) are independent and centered, and
\begin{equation}
V_{d,n}=\mu_de+\bar\eta_n.
\label{eq:proof-context-V-decomposition}
\end{equation}
We first record the moment bounds that will be used repeatedly.

\paragraph{Step 1: uniform moment bounds.}
Since \(\|x\|=\sqrt d\),
$\|\eta_\ell\|\leq
\sqrt d\,|y_\ell|+|\mu_d|$. Assumption~\ref{ass:rigorous-teacher} and the uniform moment bound for \(G_d\) imply that, for every fixed
\(m<\infty\),
$\sup_d\mathbb E[|\sigma_\star(G_d)|^m]<\infty$.
Moreover, \(\mu_d\to c_1\), so \(\sup_d|\mu_d|<\infty\).
It follows that, for every fixed \(m<\infty\),
\begin{equation}
\mathbb E[\|\eta_\ell\|^m\mid a]\leq C_m d^{m/2}.
\label{eq:proof-context-vector-moment}
\end{equation}
We also require moments of one-dimensional projections of
\(\eta_\ell\).
Let \(u\in\mathbb R^d\) be a deterministic unit vector.
Then
$|u^\top\eta_\ell|\leq|y_\ell|\,|u^\top x_\ell|+|\mu_d|$.
By Hölder's inequality,
\begin{equation}
\mathbb E[|y_\ell|^m|u^\top x_\ell|^m\mid a]\leq\left(\mathbb E[|y_\ell|^{2m}\mid a]\right)^{1/2}\left(\mathbb E[|u^\top x_\ell|^{2m}]\right)^{1/2}.
\end{equation}
The first factor is uniformly bounded by the polynomial-growth of \(\sigma_\star\) (Assumption \ref{ass:rigorous-teacher}), while the second factor is uniformly bounded because \(u^\top x_\ell\) has the same marginal distribution as \(G_d\). Hence
\begin{equation}
\sup_{\|u\|=1}\mathbb E[|u^\top\eta_\ell|^m\mid a]\leq
C_m.
\label{eq:proof-context-directional-moment}
\end{equation}
By rotational invariance around the direction \(e\), the conditional covariance of $\eta$ is
\begin{equation}
\Gamma_d(a):=\mathbb E[\eta_1\eta_1^\top\mid a]=(t_d-\mu_d^2)ee^\top+\beta_d(I_d-ee^\top).
\label{eq:proof-context-covariance}
\end{equation}
where
$t_d=\mathbb E[\sigma_\star(G_d)^2G_d^2]$ and
$\beta_d=\frac{dv_d-t_d}{d-1}$. The same uniform moment bounds imply
$\sup_d t_d<\infty$,
$\sup_d v_d<\infty$,
and therefore
\begin{equation}
\|\Gamma_d\|_{\mathrm{op}}\leq C,
\qquad
\operatorname{Tr}\Gamma_d=dv_d-\mu_d^2
\leq Cd.
\label{eq:proof-context-covariance-bounds}
\end{equation}

Set
$U_n:=e^\top\bar\eta_n=\frac1n\sum_{\ell=1}^ne^\top\eta_\ell$.
The random variables \(e^\top\eta_\ell\) are conditionally independent, centered, and by
Eq.~\eqref{eq:proof-context-directional-moment} have uniformly bounded \(r\)-th moments.
We use the Rosenthal inequality: for independent centered
random variables \(\xi_1,\ldots,\xi_n\) and \(r\geq2\),
\begin{equation}
\left\|\sum_{\ell=1}^n\xi_\ell\right\|_{L^r}\leq
C_r\left[\left(\sum_{\ell=1}^n\mathbb E[\xi_\ell^2]\right)^{1/2}+\left(\sum_{\ell=1}^n\mathbb E[|\xi_\ell|^r]\right)^{1/r}\right].
\label{eq:proof-rosenthal-scalar}
\end{equation}
Applying this conditionally on \(a\) with
\(\xi_\ell=e^\top\eta_\ell\) gives
\begin{equation}
\|U_n\|_{L^r}\leq\frac{C_r}{n}\left(\sqrt n+n^{1/r}\right)\leq\frac{C_r}{\sqrt n},
\end{equation}
where we used \(r\geq2\) and \(n\geq1\). Thus
\begin{equation}
\|e^\top\bar\eta_n\|_{L^r}\leq
C_r n^{-1/2}.
\label{eq:proof-context-parallel-high-moment}
\end{equation}

We will also need a bound for the partial sum $S_k^\eta:=\sum_{\ell=1}^k\eta_\ell$.
The Rosenthal inequality gives
\begin{equation}
\left\|\sum_{\ell=1}^k\eta_\ell\right\|_{L^r}
\leq C_r\left[\left(\sum_{\ell=1}^k\mathbb E\|\eta_\ell\|^2\right)^{1/2}+\left(\sum_{\ell=1}^k\mathbb E\|\eta_\ell\|^r\right)^{1/r}\right].
\label{eq:proof-rosenthal-hilbert}
\end{equation}
By Eqs.~\eqref{eq:proof-context-vector-moment} and
\eqref{eq:proof-context-covariance-bounds},
$\mathbb E\|\eta_\ell\|^2=\operatorname{Tr}\Gamma_d
\leq Cd$ and $\mathbb E\|\eta_\ell\|^r\leq
C_rd^{r/2}$. Hence
\begin{equation}
\|S_k^\eta\|_{L^r}\leq C_r\left(\sqrt{kd}+k^{1/r}\sqrt d
\right)\leq C_r\sqrt{kd},
\end{equation}
because \(k^{1/r}\leq\sqrt k\) for \(r\geq2\).
Therefore
\begin{equation}
\left\|\sum_{\ell=1}^k\eta_\ell\right\|_{L^r}\leq C_r\sqrt{kd}.
\label{eq:proof-context-partial-vector-sum}
\end{equation}

\paragraph{Step 2: concentration of $\|\bar\eta_n\|^2-\mathbb E[\|\bar\eta_n\|^2\mid a]$.}
Expanding the square,
\begin{equation}
\|\bar\eta_n\|^2-\mathbb E[\|\bar\eta_n\|^2\mid a]=D_n+C_n, 
\end{equation}
where $D_n:=\frac1{n^2}\sum_{\ell=1}^n
\left(\|\eta_\ell\|^2-\mathbb E[\|\eta_\ell\|^2\mid a]\right)$ and $C_n:=\frac2{n^2}\sum_{1\leq i<j\leq n}
\eta_i^\top\eta_j$.

We first deal with $D_n$.
Define
$\xi_\ell:=\|\eta_\ell\|^2-\mathbb E[\|\eta_\ell\|^2
\mid a]$.
Then the \(\xi_\ell\) are conditionally independent and centered.
By Eq.~\eqref{eq:proof-context-vector-moment},
$\mathbb E[|\xi_\ell|^r\mid a]\leq
C_r d^r$ and
$\mathbb E[\xi_\ell^2\mid a]\leq
C d^2$.
Applying the Rosenthal inequality \eqref{eq:proof-rosenthal-scalar},
\begin{equation}
\|D_n\|_{L^r}\leq\frac{C_r}{n^2}\left[(nd^2)^{1/2}+(nd^r)^{1/r}\right]=C_r\left(\frac{d}{n^{3/2}}+\frac{d}{n^{2-1/r}}\right).
\label{eq:proof-context-diagonal-bound}
\end{equation}

Now we deal with $C_n$.
Set $M_j:=\left(\sum_{i=1}^{j-1}\eta_i\right)^\top\eta_j$ for $j=2,\ldots,n$.
Then
$C_n=\frac2{n^2}\sum_{j=2}^nM_j$.
Let
$\mathcal F_j:=\sigma(a,\eta_1,\ldots,\eta_j)$.
Because \(\eta_j\) is independent of
\(\mathcal F_{j-1}\) conditional on \(a\), and
\(\mathbb E[\eta_j\mid a]=0\),
$\mathbb E[M_j\mid\mathcal F_{j-1}]=0$.
Thus \((M_j)_{j=2}^n\) is a martingale-difference sequence.

For \(r\geq2\), the martingale Rosenthal inequality gives
\begin{equation}
\left\|
\sum_{j=2}^nM_j
\right\|_{L^r}
\leq
C_r\left[\left\|\left(\sum_{j=2}^n
\mathbb E[M_j^2\mid\mathcal F_{j-1}]\right)^{1/2}
\right\|_{L^r}+\left(\sum_{j=2}^n\mathbb E|M_j|^r\right)^{1/r}\right].
\label{eq:proof-martingale-rosenthal}
\end{equation}
We now bound the two terms on the right-hand side.
Let
$S_{j-1}^\eta:=\sum_{i=1}^{j-1}\eta_i$.
Using Eq. \eqref{eq:proof-context-covariance},
\begin{equation}
\mathbb E[M_j^2\mid\mathcal F_{j-1}]=(S_{j-1}^\eta)^\top\Gamma_dS_{j-1}^\eta\leq\|\Gamma_d\|_{\mathrm{op}}\|S_{j-1}^\eta\|^2\leq
C\|S_{j-1}^\eta\|^2.
\end{equation}
Therefore
\begin{equation}
\begin{aligned}
\left\|\left(\sum_{j=2}^n\mathbb E[M_j^2\mid\mathcal F_{j-1}]
\right)^{1/2}\right\|_{L^r}^2&=
\left\|\sum_{j=2}^n
\mathbb E[M_j^2\mid\mathcal F_{j-1}]\right\|_{L^{r/2}}
\leq
C\left\|\sum_{j=2}^n\|S_{j-1}^\eta\|^2\right\|_{L^{r/2}}
\\&\leq
C\sum_{j=2}^n\left\|\|S_{j-1}^\eta\|^2\right\|_{L^{r/2}}=C\sum_{j=2}^n\|S_{j-1}^\eta\|_{L^r}^2.
\end{aligned}
\end{equation}
Using
Eq.~\eqref{eq:proof-context-partial-vector-sum},
$\|S_{j-1}^\eta\|_{L^r}^2\leq
C_r(j-1)d$.
Hence
\begin{equation}
\left\|\left(\sum_{j=2}^n\mathbb E[M_j^2\mid\mathcal F_{j-1}]\right)^{1/2}\right\|_{L^r}^2\leq
C_rd\sum_{j=2}^n(j-1)\leq
C_rdn^2,
\end{equation}
and therefore
\begin{equation}
\left\|\left(\sum_{j=2}^n\mathbb E[M_j^2\mid\mathcal F_{j-1}]\right)^{1/2}\right\|_{L^r}\leq
C_rn\sqrt d.
\label{eq:proof-context-martingale-variance}
\end{equation}

For the second term in Eq.~\eqref{eq:proof-martingale-rosenthal}, condition on \(\mathcal F_{j-1}\).
If \(S_{j-1}^\eta\neq0\), then
Eq.~\eqref{eq:proof-context-directional-moment} gives
\begin{equation}
\mathbb E[|M_j|^r\mid\mathcal F_{j-1}]=\|S_{j-1}^\eta\|^r
\mathbb E\left[\left|\frac{(S_{j-1}^\eta)^\top}
{\|S_{j-1}^\eta\|}\eta_j\right|^r\middle|\mathcal F_{j-1}\right]\leq
C_r\|S_{j-1}^\eta\|^r.
\end{equation}
The same inequality is trivial when
\(S_{j-1}^\eta=0\).
Taking expectation,
\begin{equation}
\|M_j\|_{L^r}\leq
C_r\|S_{j-1}^\eta\|_{L^r}\leq
C_r\sqrt{jd}.
\end{equation}
Consequently,
\begin{equation}
\left(\sum_{j=2}^n\mathbb E|M_j|^r\right)^{1/r}\leq
C_r\sqrt d\left(\sum_{j=2}^nj^{r/2}\right)^{1/r}\leq
C_r\sqrt d\,
n^{1/2+1/r}\leq
C_rn\sqrt d,
\end{equation}
because \(1/2+1/r\leq1\) for \(r\geq2\).
Combining this estimate with
Eq.~\eqref{eq:proof-context-martingale-variance} in
Eq.~\eqref{eq:proof-martingale-rosenthal} gives
$\left\|\sum_{j=2}^nM_j\right\|_{L^r}\leq
C_rn\sqrt d$.
Hence
\begin{equation}
\|C_n\|_{L^r}
\leq
C_r\frac{\sqrt d}{n}.
\label{eq:proof-context-offdiagonal-bound}
\end{equation}
Combining Eqs.~\eqref{eq:proof-context-diagonal-bound} and
\eqref{eq:proof-context-offdiagonal-bound}, we obtain
\begin{equation}
\left\|\|\bar\eta_n\|^2-\mathbb E[\|\bar\eta_n\|^2\mid a]
\right\|_{L^r}\leq
C_r\left(\frac{\sqrt d}{n}+\frac{d}{n^{3/2}}+\frac{d}{n^{2-1/r}}\right).
\label{eq:proof-context-noise-norm-high-moment}
\end{equation}

\paragraph{Step 3: concentration of the context radius.}
Using Eq.~\eqref{eq:proof-context-V-decomposition},
\begin{equation}
R_{d,n}^2=\|V_{d,n}\|^2=\mu_d^2+2\mu_de^\top\bar\eta_n+\|\bar\eta_n\|^2.
\label{eq:proof-context-radius-expansion}
\end{equation}
Furthermore,
\begin{equation}
\mathbb E[\|\bar\eta_n\|^2\mid a]=\frac1n\operatorname{Tr}\Gamma_d=
\frac{dv_d-\mu_d^2}{n}.
\end{equation}
Since $B_{d,n}^2=\mu_d^2+\frac{dv_d}{n}$,
we have the identity
\begin{equation}
R_{d,n}^2-B_{d,n}^2=2\mu_de^\top\bar\eta_n+\left(\|\bar\eta_n\|^2-\mathbb E[\|\bar\eta_n\|^2\mid a]\right)-\frac{\mu_d^2}{n}.
\label{eq:proof-context-radius-difference}
\end{equation}
Since \(v_d\to1\) and
\(\mu_d\to c_1\neq0\), there exists \(c>0\) such that, for all
sufficiently large \(d\) and all \(n\geq1\),
\begin{equation}
B_{d,n}^2=\mu_d^2+\frac{dv_d}{n}
\geq
c\left(1+\frac dn\right).
\label{eq:proof-B-lower-bound}
\end{equation}
Using Eqs.~\eqref{eq:proof-context-parallel-high-moment},
\eqref{eq:proof-context-noise-norm-high-moment}, and
\eqref{eq:proof-context-radius-difference}, we get
\begin{equation}
\left\|\frac{R_{d,n}^2}{B_{d,n}^2}-1
\right\|_{L^r}\leq C_r\Bigg[\frac{n^{-1/2}}{1+d/n}+\frac{\sqrt d/n}{1+d/n}+\frac{d/n^{3/2}}{1+d/n}+\frac{d/n^{2-1/r}}{1+d/n}+\frac{n^{-1}}{1+d/n}
\Bigg].
\end{equation}
We bound the five terms separately:
\begin{equation}
\begin{aligned}
&\frac{n^{-1/2}}{1+d/n}\leq
n^{-1/2},\\    
&\frac{\sqrt d/n}{1+d/n}=\frac{\sqrt d}{n+d}\leq
d^{-1/2},\\
&\frac{d/n^{3/2}}{1+d/n}=\frac{d}{\sqrt n(n+d)}\leq
n^{-1/2},\\
&\frac{d/n^{2-1/r}}{1+d/n}=\frac{d}{n+d}\,
n^{-1+1/r}\leq
n^{-1+1/r}\leq
n^{-1/2},\\
&\frac{n^{-1}}{1+d/n}\leq
n^{-1}\leq
n^{-1/2},
\end{aligned}
\end{equation}
where we uses \(r\geq2\).
Therefore
\begin{equation}
\left\|\frac{R_{d,n}^2}{B_{d,n}^2}-1\right\|_{L^r}\leq
C_r\left(n^{-1/2}+d^{-1/2}\right).
\label{eq:proof-context-squared-radius-high-moment}
\end{equation}
Since \(R_{d,n}/B_{d,n}\geq0\) and
$|\sqrt u-1|=\frac{|u-1|}{\sqrt u+1}\leq|u-1|$ for $u\geq0$,
Eq.~\eqref{eq:proof-context-squared-radius-high-moment} implies
\begin{equation}
\left\|\frac{R_{d,n}}{B_{d,n}}-1\right\|_{L^r}\leq
C_r\left(n^{-1/2}+d^{-1/2}\right).
\label{eq:proof-context-radius-final}
\end{equation}

\paragraph{Step 4: concentration of the context direction.}

Set $A_{d,n}:=\frac{R_{d,n}}{B_{d,n}}$.
Consider the event
$\mathcal G:=\{A_{d,n}\geq1/2\}$.
On \(\mathcal G\), we have \(R_{d,n}>0\), and hence
\begin{equation}
q_{d,n}=\frac{e^\top V_{d,n}}{R_{d,n}}=\frac{\mu_d+e^\top\bar\eta_n}
{A_{d,n}B_{d,n}}.
\end{equation}
Since $\rho_{d,n}=\frac{\mu_d}{B_{d,n}}$,
we obtain
\begin{equation}
q_{d,n}-\rho_{d,n}=\frac{e^\top\bar\eta_n}
{A_{d,n}B_{d,n}}+\frac{\mu_d}{B_{d,n}}\left(\frac1{A_{d,n}}-1\right).
\end{equation}
On \(\mathcal G\),
\begin{equation}
A_{d,n}^{-1}\leq2,\qquad\left|A_{d,n}^{-1}-1
\right|=\frac{|A_{d,n}-1|}{A_{d,n}}\leq2|A_{d,n}-1|.
\end{equation}
Moreover, Eq.~\eqref{eq:proof-B-lower-bound} gives
\(B_{d,n}\geq c>0\), while \(\mu_d\) remains uniformly bounded.
Thus
\begin{equation}
|q_{d,n}-\rho_{d,n}|\mathbf1_{\mathcal G}\leq
C\left(|e^\top\bar\eta_n|+|A_{d,n}-1|\right).
\label{eq:proof-context-q-good-event}
\end{equation}
By Eqs.~\eqref{eq:proof-context-parallel-high-moment} and
\eqref{eq:proof-context-radius-final},
\begin{equation}
\left\|(q_{d,n}-\rho_{d,n})\mathbf1_{\mathcal G}\right\|_{L^r}\leq
C_r\left(n^{-1/2}+d^{-1/2}\right).
\label{eq:proof-context-q-good-bound}
\end{equation}
On the complementary event \(\mathcal G^c\), both
\(|q_{d,n}|\leq1\) and \(|\rho_{d,n}|\leq1\).
The latter follows from \(\mu_d^2\leq v_d\) by Cauchy--Schwarz.
Hence
$|q_{d,n}-\rho_{d,n}|\leq2$.
Furthermore,
$\mathcal G^c=\{A_{d,n}<1/2\}\subseteq\{|A_{d,n}-1|\geq1/2\}$.
Therefore, by Markov's inequality,
\begin{equation}
\left\|(q_{d,n}-\rho_{d,n})\mathbf1_{\mathcal G^c}
\right\|_{L^r}\leq2\,\mathbb P(\mathcal G^c)^{1/r}
\leq4\|A_{d,n}-1\|_{L^r}\leq
C_r\left(n^{-1/2}+d^{-1/2}\right),
\end{equation}
where we used Eq.~\eqref{eq:proof-context-radius-final} in the last inequality. Combining this with Eq.~\eqref{eq:proof-context-q-good-bound} gives
\begin{equation}
\|q_{d,n}-\rho_{d,n}\|_{L^r}\leq
C_r\left(n^{-1/2}+d^{-1/2}\right).
\label{eq:proof-context-q-final}
\end{equation}
Together with Eq.~\eqref{eq:proof-context-radius-final}, this proves
\begin{equation}
\left\|\frac{R_{d,n}}{B_{d,n}}-1\right\|_{L^r}+\left\|q_{d,n}-\rho_{d,n}
\right\|_{L^r}\leq
C_r\left(\frac1{\sqrt n}+\frac1{\sqrt d}\right).
\label{eq:proof-relative-concentration}
\end{equation}
Raising both sides to the \(r\)-th power and using
$(x+y)^r\leq2^{r-1}(x^r+y^r)$
gives Eq.~\eqref{eq:proof-context-high-moment-bound}.
\end{proof}

Let
$\mathcal C_{d,n}:=\sigma(a,x_1,\ldots,x_n)$ be the sigma-algebra generated by the task and the demonstrations.
The variables \(R_{d,n},b_{d,n},q_{d,n}\) are
\(\mathcal C_{d,n}\)-measurable, whereas \(x_0\) is independent of
\(\mathcal C_{d,n}\). Consequently, conditionally on
\(\mathcal C_{d,n}\), the random variable \(S_{d,n}\) has the same law
as \(G_d\). In particular,
\begin{equation}
S_{d,n}\ \text{is independent of }\mathcal C_{d,n},
\qquad
\mathbb E[S_{d,n}^2\mid\mathcal C_{d,n}]=1.
\label{eq:proof-context-S-independence}
\end{equation}
Since $X_{d,n}=A_{d,n}S_{d,n}$, we have
\begin{equation}
\mathbb E[(X_{d,n}-S_{d,n})^2]=\mathbb E[(A_{d,n}-1)^2S_{d,n}^2]=\mathbb E[(A_{d,n}-1)^2]\mathbb E[S_{d,n}^2]=\mathbb E[(A_{d,n}-1)^2].
\end{equation}
Taking \(r=2\) in
Eq.~\eqref{eq:proof-context-radius-final} proves
\begin{equation}
\mathbb E\left[(X_{d,n}-S_{d,n})^2\right]
=\mathbb E\left[\left(\frac{R_{d,n}}{B_{d,n}}-1\right)^2\right]\longrightarrow0.
\label{eq:proof-normalized-score}
\end{equation}

Moreover, Lemma \ref{lem:proof-context-concentration} has the following important consequence. Let $\delta_d:=d^{-\beta}$ with $0<\beta<\frac14\min\{\varepsilon,1\}$. When \(n\geq d^\varepsilon\), Lemma
\ref{lem:proof-context-concentration} and Markov's inequality imply that, for every \(M<\infty\),
\begin{equation}
\mathbb P(|A_{d,n}-1|>\delta_d)\leq
C_Md^{-M}
\end{equation}
after choosing a sufficiently large $r$ in Eq.~\eqref{eq:proof-context-high-moment-bound}.

\subsection{Feature learner Stage-II}
\label{app:feature-stage2}

We now analyze the population-optimal scalar readout learned with context length \(L\) and evaluated with a possibly different context length \(\widetilde L\).
Throughout this subsection, Assumptions
\ref{ass:rigorous-teacher} and \ref{ass:rigorous-limit} are in force.

Recall from Eq.~\eqref{eq:proof-feature-attention-scalar-reduction} that $U_{W_{d,L}^{\mathrm F}}^n=w_{d,L}T_{d,n}$ with $w_{d,L}>0$ for every context length \(n\). Recall also the definitions
\begin{equation}
B_{d,n}:=\left(\mu_d^2+\frac{dv_d}{n}\right)^{1/2},
\qquad
\rho_{d,n}:=\frac{\mu_d}{B_{d,n}},
\qquad
X_{d,n}:=\frac{T_{d,n}}{B_{d,n}}.
\label{eq:proof-stage2-normalization}
\end{equation}
Write $A_{d,n}:=\frac{R_{d,n}}{B_{d,n}}$, so that, by Eq.~\eqref{eq:proof-context-X-representation}, $X_{d,n}=A_{d,n}S_{d,n}$. Moreover, \(S_{d,n}\) is independent of the demonstration and has the same law as \(G_d\).

By the definition of \(\widehat\sigma_{d,L}\), we can define
$g_{d,L}(X_{d,L})=\mathbb E[y_0\mid X_{d,L}]$.
Hence, writing
$s_{d,L,\widetilde L}:=\frac{B_{d,\widetilde L}}{B_{d,L}}$,
the test error is
\begin{equation}
\mathcal E_{d,L,\widetilde L}^{\mathrm{feature}}=\mathbb E\left[\left(y_0-g_{d,L}\left(s_{d,L,\widetilde L}X_{d,\widetilde L}\right)\right)^2\right].
\label{eq:proof-feature-stage2-test-error}
\end{equation}
Moreover, since
$|y_0|=\left|\sigma_\star(G_d)\right|\leq\sup_{|t|\leq\sqrt d}|\sigma_\star(t)|$,
the conditional expectation satisfies
\begin{equation}
|g_{d,L}(x)|\leq
C(1+d^{p/2}),\qquad x\in\mathbb R,
\label{eq:proof-stage2-regression-bound}
\end{equation}
where we use Assumption~\ref{ass:rigorous-teacher}.

\subsubsection{Asymptotical parameters}
We first relate the quantities in Eq.~\eqref{eq:proof-stage2-normalization} to the limiting parameters appearing in the theorem.

\begin{lemma}
\label{lem:proof-stage2-parameters}
Let $\alpha=\frac Ld$ and $\widetilde\alpha=\frac{\widetilde L}{d}$.
Then, along every sequence satisfying Assumption \ref{ass:rigorous-limit},
\begin{equation}
\rho_{d,L}-\rho_\alpha\longrightarrow0,
\qquad
\rho_{d,\widetilde L}-\rho_{\widetilde\alpha}\longrightarrow0,
\qquad
s_{d,L,\widetilde L}
-
s_{\alpha,\widetilde\alpha}
\longrightarrow0,
\label{eq:proof-stage2-parameter-convergence}
\end{equation}
where $\rho_\beta=\frac{c_1}{\sqrt{c_1^2+1/\beta}}$ and $s_{\alpha,\widetilde\alpha}=\sqrt{\frac{c_1^2+1/\widetilde\alpha}{c_1^2+1/\alpha}}$.
\end{lemma}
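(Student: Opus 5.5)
The plan is a direct computation from the explicit definitions, based on continuity of $\beta\mapsto\rho_\beta$ and the convergence $\mu_d\to c_1$, $v_d\to 1$ from Lemma~\ref{lem:proof-spherical}. Write $\rho_{d,n}=\mu_d/\sqrt{\mu_d^2+dv_d/n}$. With $n=L$ this is $\rho_{d,L}=\mu_d/\sqrt{\mu_d^2+v_d/\alpha}$, where $\alpha=L/d$ may vary with $d$ (it can tend to $0$ or $\infty$ under Assumption~\ref{ass:rigorous-limit}). So the convergence cannot be reduced to convergence at a fixed $\alpha$; it has to be uniform over $\alpha\in(0,\infty)$.

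I will therefore introduce $x:=1/\alpha\in(0,\infty)$ and the function
\[
F(\mu,v,x):=\frac{\mu}{\sqrt{\mu^2+vx}} .
\]
It then suffices to show
\[
\sup_{x\ge0}|F(\mu_d,v_d,x)-F(c_1,1,x)|\to0 .
\]
Since $\mu_d$ and $c_1$ have the same sign for large $d$, both values lie in $[-1,1]$. Both are also monotone in $x$ and tend to $0$ as $x\to\infty$.

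The uniform bound will come from splitting the $x$-range.
\begin{itemize}
\item For $x\le X$, $F$ is jointly continuous on the compact set $\{|\mu-c_1|\le |c_1|/2,\ |v-1|\le 1/2,\ x\in[0,X]\}$. Indeed $\mu^2+vx\ge c_1^2/4>0$ there, so $F$ is uniformly continuous, and the supremum over $[0,X]$ tends to $0$.
\item For $x>X$, both terms are bounded by $|\mu|/\sqrt{vx}\le C/\sqrt X$, which can be made small by choosing $X$ large.
\end{itemize}
This is the standard $\varepsilon/2$ argument. An alternative is a rescaling argument as in the kernel proof, writing $vx=\lambda_d\cdot x$ with $\lambda_d\to1$. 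Applying the same argument with $x=1/\widetilde\alpha$ gives the claim for $\rho_{d,\widetilde L}$.

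For the ratio, I note $s_{d,L,\widetilde L}=B_{d,\widetilde L}/B_{d,L}=\rho_{d,L}/\rho_{d,\widetilde L}$ and similarly $s_{\alpha,\widetilde\alpha}=\rho_\alpha/\rho_{\widetilde\alpha}$. Taking a quotient of convergent sequences is delicate here because $\rho_{\widetilde\alpha}$ may tend to $0$ when $\widetilde\alpha\to0$. So I will instead write
\[
s^2=\frac{\mu^2+v/\widetilde\alpha}{\mu^2+v/\alpha}=\frac{\mu^2\alpha\widetilde\alpha+v\alpha}{\mu^2\alpha\widetilde\alpha+v\widetilde\alpha}.
\]
Set $r:=L/\widetilde L=\alpha/\widetilde\alpha\in[c,C]$ by Assumption~\ref{ass:rigorous-limit}. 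Dividing through by $\widetilde\alpha$ gives
\[
s^2=\frac{\mu^2\alpha+vr}{\mu^2\alpha+v}=:\Psi(\mu,v,\alpha,r),
\]
which lies between $1$ and $r$ (or between $r$ and $1$). The task is then to show that $\Psi(\mu_d,v_d,\alpha,r)-\Psi(c_1,1,\alpha,r)\to0$ uniformly over $\alpha>0$ and $r\in[c,C]$.

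This follows from the identity
\[
\Psi-1=\frac{v(r-1)}{\mu^2\alpha+v}.
\]
Its numerator converges uniformly in $r\in[c,C]$. The map $\alpha\mapsto v/(\mu^2\alpha+v)$ is handled by the same compact-versus-tail split as before: on the tail $\alpha$ large, both versions are at most $C/\alpha$; on compact $\alpha$-ranges, joint continuity applies. Finally, taking square roots preserves uniform convergence because the values stay in $[\min(1,c),\max(1,C)]$, which is bounded away from $0$.

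I expect the main obstacle to be precisely this uniformity in $\alpha,\widetilde\alpha$, which are not fixed. The bounded ratio $L/\widetilde L$ is exactly what keeps $s$ uniformly bounded above and below, so the argument relies on it.
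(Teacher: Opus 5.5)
Your argument is correct. The identities $\rho_{d,L}=\mu_d/\sqrt{\mu_d^2+v_d/\alpha}$ and $s^2=(\mu^2\alpha+vr)/(\mu^2\alpha+v)$, and the convex-combination bound, all check out. Where you differ from the paper is in how uniformity over $\alpha\in(0,\infty)$ is obtained.

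The paper first divides out $v_d$, collapsing your two finite-$d$ parameters into the single ratio $b_d:=\mu_d^2/v_d\to c_1^2>0$. This gives $\rho_{d,n}^2=\alpha_nb_d/(1+\alpha_nb_d)$ and $s_{d,L,\widetilde L}^2=(\alpha b_d+r)/(\alpha b_d+1)$. It then uses $\sup_{\alpha\ge0}\alpha/(1+\alpha b)^2=1/(4b)$ and $\alpha|1-r|/(\alpha b+1)^2\le|1-r|/(4b)$, so by the mean value theorem both squared quantities are Lipschitz in $b$, uniformly in $\alpha$. That yields a two-line proof with an explicit rate $O(|b_d-c_1^2|)$ for $\rho^2$ and $s^2$. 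It does, however, leave implicit the square-root step back to the signed $\rho$ and to $s$.

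Your compact-versus-tail split is purely qualitative (no rate) and somewhat longer. In exchange, it needs no derivative computation, works directly with the signed function $F(\mu,v,x)$, and handles the square root for $s$ explicitly through the lower bound $\min(1,c)$. Both routes rest on the same two inputs: $(\mu_d,v_d)\to(c_1,1)$ with $c_1\neq0$, and $L/\widetilde L\in[c,C]$.
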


\begin{proof}
Set $b_d:=\frac{\mu_d^2}{v_d}$.
Lemma~\ref{lem:proof-spherical} gives
\(b_d\to c_1^2>0\), and \(\mu_d\) has the same sign as \(c_1\)
for all sufficiently large \(d\).  Hence, we can write $\rho_{d,n}^2=\frac{\alpha_n b_d}{1+\alpha_n b_d}$, where
$\alpha_n:=\frac nd$.
For \(b\) bounded away from zero,
\begin{equation}
\sup_{\alpha\geq0}\left|\frac{\partial}{\partial b}
\frac{\alpha b}{1+\alpha b}\right|=\sup_{\alpha\geq0}\frac{\alpha}{(1+\alpha b)^2}=\frac1{4b}.
\end{equation}
Thus the convergence of \(\rho_{d,n}\) to \(\rho_{\alpha_n}\) is uniform over \(\alpha_n>0\).

Similarly, $s_{d,L,\widetilde L}^2=\frac{b_d+1/\widetilde\alpha}
{b_d+1/\alpha}$.
Writing \(r:=L/\widetilde L=\alpha/\widetilde\alpha\),
we have
$s_{d,L,\widetilde L}^2=\frac{\alpha b_d+r}{\alpha b_d+1}$. For \(F_r(b):=(\alpha b+r)/(\alpha b+1)\),
\begin{equation}
\left|\frac{\partial F_r}{\partial b}\right|=\frac{\alpha|1-r|}{(\alpha b+1)^2}
\leq\frac{|1-r|}{4b}.
\end{equation}
Assumption~\ref{ass:rigorous-limit} keeps \(r\) in a fixed compact subset of \((0,\infty)\), proving the last convergence.
\end{proof}

\subsubsection{A spherical-to-Gaussian approximation}

The following lemma is the main technical input for Stage II, showing that $A_dS_d$ can be approximated by a Gaussian variable $Z$.
It will be used both for the training and the inference distributions. The lemma is a weighted and randomly rescaled version of the classical Gaussian limit for high-dimensional spheres, combined with the Gegenbauer-to-Hermite limit \citep{mei2021learning}. We give a direct proof because we additionally require dimension-dependent test functions of uniform polynomial growth and the random rescaling \(A_d\).

\begin{lemma}
\label{lem:proof-stage2-weighted-gaussian}
Let \(n=n(d)\geq d^\varepsilon\), and write
$A_d:=A_{d,n}$, $S_d:=S_{d,n}$. Let \(f_d:\mathbb R\to\mathbb R\) be deterministic functions satisfying, for some fixed \(m,C<\infty\),
\begin{equation}
|f_d(x)|\leq C(1+|x|^m),
\qquad d\geq1,\ x\in\mathbb R.
\label{eq:proof-stage2-polynomial-family}
\end{equation}
For \(k\geq0\), define
$J_{k,d}^{f_d}(a):=\mathbb E_{S\sim G_d}
\left[P_{k,d}(S)f_d(aS)\right]$ and $J_k^{f_d}:=\mathbb E_{Z\sim\mathcal N(0,1)}\left[H_k(Z)f_d(Z)\right]$.
Then, for every fixed \(k\geq0\),
\begin{equation}
\mathbb E\left[
\left|
J_{k,d}^{f_d}(A_d)-J_k^{f_d}
\right|
\right]
\longrightarrow0.
\label{eq:proof-stage2-weighted-gaussian-L1}
\end{equation}
\end{lemma}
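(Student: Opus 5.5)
The argument splits the error into two pieces: one from the random rescaling \(A_d\), and one from the spherical-to-Gaussian transfer at scale one. Write
\[
J_{k,d}^{f_d}(A_d)-J_k^{f_d}=\bigl[J_{k,d}^{f_d}(A_d)-J_{k,d}^{f_d}(1)\bigr]+\bigl[J_{k,d}^{f_d}(1)-J_k^{f_d}\bigr].
\]
The second bracket is deterministic. We will show it tends to zero uniformly over the class of functions of polynomial growth \(C(1+|x|^m)\). The first bracket is random through \(A_d\). We will control it on the good event \(\{|A_d-1|\le\delta_d\}\), whose complement has probability \(O(d^{-M})\) by the consequence of Lemma~\ref{lem:proof-context-concentration} recorded after \eqref{eq:proof-normalized-score}.

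\textbf{Deterministic piece.} Write \(J_{k,d}^{f_d}(1)=\int P_{k,d}(t)f_d(t)p_d(t)\,\mathrm dt\) and \(J_k^{f_d}=\int H_k(t)f_d(t)\phi(t)\,\mathrm dt\). Since \(|f_d|\le C(1+|t|^m)\),
\[
|J_{k,d}^{f_d}(1)-J_k^{f_d}|\le C\int(1+|t|^m)\,\bigl|P_{k,d}(t)p_d(t)-H_k(t)\phi(t)\bigr|\,\mathrm dt.
\]
The right-hand side does not depend on \(f_d\). It tends to zero by dominated convergence:
\begin{itemize}
\item \(P_{k,d}\to H_k\) pointwise, and \(p_d\to\phi\) pointwise by \eqref{eq:proof-spherical-density-pointwise};
\item the integrand is dominated by \(C(1+|t|^{m+k})\phi(t)\), using \eqref{eq:proof-Gegenbauer-polynomial-growth} and \eqref{eq:proof-spherical-density-domination}.
\end{itemize}
This uniformity in \(f_d\) is the key point, because \(f_d\) is allowed to vary with \(d\) and need not converge.

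\textbf{Random piece.} Since \(f_d\) is not assumed continuous, we cannot perturb its argument directly. Instead we move the rescaling onto the density by the change of variables \(u=as\):
\[
J_{k,d}^{f_d}(a)=\int f_d(u)\,P_{k,d}(u/a)\,p_d(u/a)\,a^{-1}\,\mathrm du .
\]
For \(|a-1|\le\delta_d\to0\), this gives
\[
|J_{k,d}^{f_d}(a)-J_{k,d}^{f_d}(1)|\le C\int(1+|u|^m)\,\bigl|a^{-1}P_{k,d}(u/a)p_d(u/a)-P_{k,d}(u)p_d(u)\bigr|\,\mathrm du .
\]
The support \(|u/a|\le\sqrt d\) moves slightly with \(a\), so we truncate to \(|u|\le K\) and treat the two regions separately.
\begin{itemize}
\item For \(|u|\le K\), the integrand converges uniformly. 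This uses the \(C^1\) bound \eqref{eq:proof-Gegenbauer-compact-C1} for \(P_{k,d}\) and the explicit smooth form of \(p_d\) in \eqref{eq:proof-spherical-density}, whose log-derivative is bounded on compacts uniformly in \(d\).
\item For \(|u|>K\), both terms are bounded by \(C(1+|u|^{m+k})e^{-u^2/4}\) uniformly in \(a\in[1/2,2]\), using the domination \eqref{eq:proof-spherical-density-domination}. The tail integral is then small once \(K\) is large.
\end{itemize}
Hence
\[
\sup_{|a-1|\le\delta_d}\bigl|J_{k,d}^{f_d}(a)-J_{k,d}^{f_d}(1)\bigr|\to0 .
\]

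\textbf{Bad event.} On \(\{|A_d-1|>\delta_d\}\), use the crude bound
\[
|J_{k,d}^{f_d}(a)|\le C(1+a^m)\,\mathbb E\bigl[|P_{k,d}(S)|(1+|S|^m)\bigr]\le C(1+a^m),
\]
and likewise \(|J_k^{f_d}|\le C\). By Cauchy--Schwarz, the contribution of this event is at most \(C\|1+A_d^m\|_{L^2}\,\mathbb P(|A_d-1|>\delta_d)^{1/2}\). The moment \(\|A_d^m\|_{L^2}\) is bounded by \eqref{eq:proof-context-high-moment-bound}, and the probability is \(O(d^{-M})\), so this contribution vanishes. Combining the three estimates gives \eqref{eq:proof-stage2-weighted-gaussian-L1}.

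The main obstacle is the random piece. The difficulty is obtaining convergence uniformly in the \(d\)-dependent, possibly discontinuous \(f_d\). The change of variables is what makes this work: it places all the regularity requirements on \(P_{k,d}\,p_d\) and none on \(f_d\). The delicate part is the behaviour near the boundary \(|u|\approx\sqrt d\). There the truncation at \(K\) and the uniform Gaussian domination must be used, rather than any pointwise estimate.
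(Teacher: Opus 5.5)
Your proposal is correct and follows essentially the same route as the paper. You move the rescaling onto the density via $u=as$, prove a weighted $L^1$ density bound that is uniform in $f_d$ on the good event $\{|A_d-1|\le\delta_d\}$, and dispose of the bad event with the polynomial-growth bound and H\"older/Cauchy--Schwarz. The intermediate comparison through $J_{k,d}^{f_d}(1)$, with a fixed truncation $K$ and the explicit log-derivative bound on $p_d$, is just a more explicit version of the paper's direct comparison with $H_k\phi$ on $|x|\le d^\tau$. One harmless slip: for $a$ up to $2$ the tail weight should be $e^{-u^2/8}$ rather than $e^{-u^2/4}$.
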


\begin{proof}
Fix \(k\geq0\). Choose $0<\beta<\frac14\min\{\varepsilon,1\}$, $\delta_d:=d^{-\beta}$, and let
$\mathcal G_d:=\{|A_d-1|\leq\delta_d\}$.
By Lemma~\ref{lem:proof-context-concentration}, for every \(M<\infty\), after choosing a sufficiently high moment in
Eq.~\eqref{eq:proof-context-high-moment-bound},
\begin{equation}
\mathbb P(\mathcal G_d^c)
\leq C_Md^{-M}.
\label{eq:proof-stage2-good-A}
\end{equation}

We first work on the good event $\mathcal G_d$. For a deterministic
\(a>0\), since \(S\sim G_d\) has density \(p_d\), a change of variables \(x=aS\) gives
\begin{equation}
J_{k,d}^{f_d}(a)=\int_{\mathbb R}f_d(x)P_{k,d}(x/a)\frac1a p_d(x/a)\,\mathrm dx.
\label{eq:proof-stage2-change-variable}
\end{equation}
We claim that
\begin{equation}
\sup_{|a-1|\leq\delta_d}\int_{\mathbb R}
(1+|x|^m)\left|P_{k,d}(x/a)\frac1a p_d(x/a)-H_k(x)\phi(x)\right|\,\mathrm dx\longrightarrow0.
\label{eq:proof-stage2-weighted-density}
\end{equation}
Indeed, choose \(0<\tau<\min\{1/4,\beta/2\}\).
Uniformly for \(|a-1|\leq\delta_d\) and
\(|x|\leq d^\tau\), the spherical density satisfies
\begin{equation}
\frac1a p_d(x/a)=\phi(x)(1+o(1)).
\end{equation}
For fixed \(k\), coefficientwise convergence of the Gegenbauer
polynomials, together with \(a\to1\), gives $P_{k,d}(x/a)-H_k(x)\to0$
uniformly on compact sets.  The polynomial bound Eq.~\eqref{eq:proof-Gegenbauer-polynomial-growth}, the domination \(p_d(t)\leq C\phi(t)\), and Gaussian tails then proves Eq.~\eqref{eq:proof-stage2-weighted-density}.

Using the polynomial bound on \(f_d\), we therefore have
\begin{equation}
\begin{aligned}
\sup_{|a-1|\leq\delta_d}\left|J_{k,d}^{f_d}(a)-J_k^{f_d}\right|\leq
C\sup_{|a-1|\leq\delta_d}\int_{\mathbb R}(1+|x|^m)\left|P_{k,d}(x/a)\frac1a p_d(x/a)-H_k(x)\phi(x)
\right|\,\mathrm dx\longrightarrow0.
\end{aligned}
\label{eq:proof-stage2-J-uniform-good}
\end{equation}
Consequently,
\begin{equation}
\mathbb E\left[
\left|
J_{k,d}^{f_d}(A_d)-J_k^{f_d}
\right|
\mathbf1_{\mathcal G_d}
\right]
\longrightarrow0.
\label{eq:proof-stage2-J-good}
\end{equation}

It remains to control the bad event. The polynomial growth of \(P_{k,d}\) and \(f_d\), together with the uniform moment bounds for \(G_d\), gives
\begin{equation}
\left|J_{k,d}^{f_d}(a)\right|\leq
C_k(1+|a|^m),\qquad a\geq0,
\label{eq:proof-stage2-J-growth}
\end{equation}
and similarly $|J_k^{f_d}|\leq C_k$.
Moreover, Lemma~\ref{lem:proof-context-concentration} implies that
\(A_d\) has uniformly bounded moments of every fixed order.
Hence, for any fixed \(r>1\), Hölder's inequality yields
\begin{equation}
\begin{aligned}
\mathbb E\left[\left|J_{k,d}^{f_d}(A_d)-J_k^{f_d}\right|\mathbf1_{\mathcal G_d^c}\right]\leq
C_k\left(\mathbb E[(1+A_d^m)^r]\right)^{1/r}\mathbb P(\mathcal G_d^c)^{1-1/r}+C_k\mathbb P(\mathcal G_d^c)
\longrightarrow0,
\end{aligned}
\label{eq:proof-stage2-J-bad}
\end{equation}
using Eq.~\eqref{eq:proof-stage2-good-A}.
Combining Eqs.~\eqref{eq:proof-stage2-J-good}
and \eqref{eq:proof-stage2-J-bad}
proves Eq.~\eqref{eq:proof-stage2-weighted-gaussian-L1}.
\end{proof}

\subsubsection{The Stage-II regression on the training distribution}
\label{app:stage-II-regression}
For the rest of this subsection, write
\[
X_d:=X_{d,L},
\qquad
A_d:=A_{d,L},
\qquad
S_d:=S_{d,L},
\qquad
q_d:=q_{d,L},
\qquad
\rho_d:=\rho_{d,L}.
\]
We first identify \(g_{d,L}\) under its training distribution using a matched lower and upper bound.
Let $\mathcal H_d:=\sigma(\mathcal C_{d,L},S_d)$.
Since \(X_d=A_dS_d\), \(\sigma(X_d)\subseteq\mathcal H_d\). Define the richer regression function $\bar g_d:=\mathbb E[Y_d\mid\mathcal H_d]$, where $Y_d:=\sigma_*(G_d)$. Conditionally on the demonstration context, \((G_d,S_d)\) is a two-dimensional spherical projection with correlation \(q_d\).  Lemma~\ref{lem:proof-spherical} therefore gives
\begin{equation}
\bar g_d=\sum_{k\geq0}c_{k,d}Q_{k,d}(q_d)P_{k,d}(S_d)
\qquad\text{in }L^2.
\label{eq:proof-stage2-rich-regression}
\end{equation}
Consequently,
\begin{equation}
R_d^{\rm rich}:=\mathbb E[(Y_d-\bar g_d)^2]=v_d-\sum_{k\geq0}c_{k,d}^2
\mathbb E[Q_{k,d}(q_d)^2].
\label{eq:proof-stage2-rich-risk}
\end{equation}
For each fixed \(k\), Lemma~\ref{lem:proof-context-concentration} gives \(q_d-\rho_d\to0\) in probability, while Lemma~\ref{lem:proof-spherical} gives
$\sup_{|q|\leq1}|Q_{k,d}(q)-q^k|\to0$. Since \(|Q_{k,d}|\leq1\),
\begin{equation}
\mathbb E[Q_{k,d}(q_d)^2]-\rho_d^{2k}
\longrightarrow0.
\end{equation}
Using the uniform tail bound Eq.~\eqref{eq:proof-l1-spectrum-convergence}, we obtain
\begin{equation}
R_d^{\rm rich}=1-\sum_{k\geq1}c_k^2\rho_d^{2k}+o(1).
\label{eq:proof-stage2-rich-limit}
\end{equation}
We next introduce the Gaussian channel
\begin{equation}
h_d(x):=\Phi_{\rho_d}(x)=\mathbb E_Z\left[\sigma_\star\left(\rho_dx+\sqrt{1-\rho_d^2}\,Z\right)\right].
\label{eq:proof-stage2-gaussian-candidate}
\end{equation}
Assumption~\ref{ass:rigorous-teacher} implies the uniform polynomial bound
\begin{equation}
|h_d(x)|\leq C(1+|x|^p).
\label{eq:proof-stage2-h-growth}
\end{equation}
We claim that
\begin{equation}
\mathbb E[(Y_d-h_d(X_d))^2]=1-\sum_{k\geq1}
c_k^2\rho_d^{2k}+o(1).
\label{eq:proof-stage2-candidate-risk}
\end{equation}
Indeed, Lemma~\ref{lem:proof-stage2-weighted-gaussian} gives
\begin{equation}
\mathbb E[h_d(X_d)^2]=\mathbb E[h_d(Z)^2]+o(1)=\sum_{k\geq1}
c_k^2\rho_d^{2k}+o(1),
\label{eq:proof-stage2-candidate-second}
\end{equation}
where the last equality follows from Mehler's identity.
For the cross term, condition on the demonstration context and use the spherical projection formula in Lemma \ref{lem:proof-spherical}:
\begin{equation}
\begin{aligned}
\mathbb E[Y_dh_d(X_d)]=\mathbb E\left[\sum_{k\geq0}
c_{k,d}Q_{k,d}(q_d)J_{k,d}(A_d)\right],
\end{aligned}
\label{eq:proof-stage2-candidate-cross}
\end{equation}
where $J_{k,d}(a):=\mathbb E_{S\sim G_d}\left[P_{k,d}(S)h_d(aS)\right]$.
For \(M\geq0\), define the tail
\begin{equation}
R_{M,d}^{\rm tr}:=\mathbb E\left[\sum_{k>M}c_{k,d}Q_{k,d}(q_d)J_{k,d}(A_d)\right].
\end{equation}
Since \(|Q_{k,d}(q)|\leq1\), Cauchy--Schwarz gives
\begin{align}
|R_{M,d}^{\rm tr}|\leq\left(\sum_{k>M}c_{k,d}^2
\right)^{1/2}\mathbb E\left[\left(\sum_{k>M}J_{k,d}(A_d)^2
\right)^{1/2}\right]\leq\left(\sum_{k>M}c_{k,d}^2\right)^{1/2}
\left(\mathbb E\sum_{k>M}J_{k,d}(A_d)^2\right)^{1/2}.
\label{eq:proof-stage2-training-tail-CS}
\end{align}
For every fixed \(a\), Parseval's inequality for the orthonormal
Gegenbauer basis gives
\begin{equation}
\sum_{k\geq0}J_{k,d}(a)^2\leq\mathbb E_{S\sim G_d}
\left[h_d(aS)^2\right].
\label{eq:proof-stage2-training-tail-parseval}
\end{equation}
Since \(A_d\) is measurable with respect to the demonstration context
and \(S_d\) is independent of that context with the same law as \(G_d\),
we obtain
\begin{equation}
\mathbb E\sum_{k>M}J_{k,d}(A_d)^2\leq\mathbb E[h_d(A_dS_d)^2]=\mathbb E[h_d(X_d)^2].
\end{equation}
By Eq.~\eqref{eq:proof-stage2-candidate-second},
the right-hand side is uniformly bounded for all sufficiently large
\(d\). Consequently,
\begin{equation}
\lim_{M\to\infty}\limsup_{d\to\infty}|R_{M,d}^{\rm tr}|=0,
\label{eq:proof-stage2-training-tail-vanishes}
\end{equation}
where we used
Eq.~\eqref{eq:proof-l1-spectrum-convergence}.

We may therefore first truncate the sum at a fixed \(M\), pass to the
limit \(d\to\infty\) term by term, and then send \(M\to\infty\).
For every fixed \(k\),
Lemma~\ref{lem:proof-stage2-weighted-gaussian} gives
$J_{k,d}(A_d)-\mathbb E[H_k(Z)h_d(Z)]\longrightarrow0$ in $L^1$
while $Q_{k,d}(q_d)-\rho_d^k\longrightarrow0$ in $L^1$
and \(c_{k,d}\to c_k\).
Moreover, by Mehler's identity, $\mathbb E[H_k(Z)h_d(Z)]=c_k\rho_d^k$.
Hence, for every fixed \(M\),
\begin{equation}
\mathbb E\left[\sum_{k=0}^{M}c_{k,d}Q_{k,d}(q_d)J_{k,d}(A_d)\right]=
\sum_{k=0}^{M}c_k^2\rho_d^{2k}+o(1).
\label{eq:finite-M-convergence}
\end{equation}
Finally, $\sup_d\sum_{k>M}c_k^2\rho_d^{2k}\leq\sum_{k>M}c_k^2
\longrightarrow0$.
Combining Eq. \eqref{eq:finite-M-convergence} with
Eq.~\eqref{eq:proof-stage2-training-tail-vanishes} gives
\begin{equation}
\mathbb E[Y_dh_d(X_d)]=\sum_{k\geq0}c_k^2\rho_d^{2k}
+o(1).
\label{eq:proof-stage2-candidate-cross-limit}
\end{equation}
Combining
Eqs.~\eqref{eq:proof-stage2-candidate-second} and
\eqref{eq:proof-stage2-candidate-cross-limit} proves
Eq.~\eqref{eq:proof-stage2-candidate-risk}.

Let $R_d^{\rm opt}:=\mathbb E[(Y_d-g_{d,L}(X_d))^2]$.
Because \(\sigma(X_d)\subseteq\mathcal H_d\), while \(h_d(X_d)\) is a predictor measurable with respect to \(X_d\),
\begin{equation}
R_d^{\rm rich}\leq
R_d^{\rm opt}\leq\mathbb E[(Y_d-h_d(X_d))^2].
\end{equation}
Equations~\eqref{eq:proof-stage2-rich-limit} and
\eqref{eq:proof-stage2-candidate-risk} therefore imply
\begin{equation}
R_d^{\rm opt}=1-\sum_{k\geq1}c_k^2\rho_d^{2k}+o(1).
\label{eq:proof-stage2-optimal-risk}
\end{equation}
Finally, since \(g_{d,L}(X_d)\) is the orthogonal projection of \(Y_d\)
onto \(L^2(\sigma(X_d))\),
\begin{equation}
\mathbb E[(Y_d-h_d(X_d))^2]=R_d^{\rm opt}+\mathbb E[(g_{d,L}(X_d)-h_d(X_d))^2].
\end{equation}
Hence
\begin{equation}
\mathbb E[(g_{d,L}(X_{d,L})-\Phi_{\rho_{d,L}}(X_{d,L}))^2]
\longrightarrow0.
\label{eq:proof-stage2-training-regression-convergence}
\end{equation}

\subsubsection{Transfer from training to inference}

The preceding convergence holds under the law of \(X_{d,L}\). We next transfer it to the statistics encountered at inference.
Let $P_d:=\operatorname{Law}(X_{d,L})$ and $Q_d:=\operatorname{Law}\left(s_{d,L,\widetilde L}
X_{d,\widetilde L}
\right)$. Assumption~\ref{ass:rigorous-limit} and
Lemma~\ref{lem:proof-stage2-parameters} imply
\begin{equation}
0<s_-\leq
s_{d,L,\widetilde L}
\leq s_+<\infty.
\label{eq:proof-stage2-s-bounded}
\end{equation}

\begin{lemma}
\label{lem:proof-stage2-transfer}
Suppose \(f_d:\mathbb R\to\mathbb R\) satisfies
$\mathbb E_{P_d}[f_d^2]\to0$, and, for every fixed \(m<\infty\), $\sup_d\mathbb E_{P_d}[|f_d|^m]<\infty$.
Suppose moreover that on the supports of \(P_d\) and \(Q_d\), $|f_d(x)|\leq d^{C_f}$ for some \(C_f<\infty\).
Then
\begin{equation}
\mathbb E_{Q_d}[f_d^2]\longrightarrow0.
\label{eq:proof-stage2-transfer-conclusion}
\end{equation}
\end{lemma}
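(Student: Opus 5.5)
The plan is to compare the two laws through their densities. We reduce both $P_d$ and $Q_d$, up to superpolynomially small events, to near-Gaussian laws. We then control $\mathbb E_{Q_d}[f_d^2]$ by $P_d$-moments of $f_d$ using a Hölder inequality weighted by the likelihood ratio. The key simplification is that both laws are randomly rescaled spherical marginals. We have $X_{d,L}=A_{d,L}S_{d,L}$ and $s_{d,L,\widetilde L}X_{d,\widetilde L}=s_{d,L,\widetilde L}A_{d,\widetilde L}S_{d,\widetilde L}$, where $S$ has law $G_d$ (density $p_d$) independent of the rescaling factor. By Lemma~\ref{lem:proof-context-concentration} and the remark after it (with $L,\widetilde L\geq d^\varepsilon$), the events $\mathcal G=\{|A_{d,L}-1|\le\delta_d\}$ and $\widetilde{\mathcal G}=\{|A_{d,\widetilde L}-1|\le\delta_d\}$ have complements of probability $O(d^{-M})$ for every $M$. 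On $\widetilde{\mathcal G}^c$ we use $|f_d|\le d^{C_f}$ on the support of $Q_d$, so this part contributes at most $d^{2C_f}\,\mathbb P(\widetilde{\mathcal G}^c)\to0$. The same bound disposes of the region $|x|>d^{\tau}$ for small $\tau>0$, since $p_d$ is dominated by $C\phi$ and so that region has superpolynomially small mass.

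Next we fix notation for the good-event parts. Let $p^\star_d$ be the density of the law of $X_{d,L}$ restricted to $\mathcal G$, and $q^\star_d$ the density of the law of $sA_{d,\widetilde L}S_{d,\widetilde L}$ restricted to $\widetilde{\mathcal G}$, with $s=s_{d,L,\widetilde L}$. These are mixtures over $a$ with $|a-1|\le\delta_d$ of $a^{-1}p_d(x/a)$, respectively over $a'=sa$ of $a'^{-1}p_d(x/a')$. By the density estimate used in the proof of Lemma~\ref{lem:proof-stage2-weighted-gaussian}, uniformly on $|x|\le d^\tau$ we have
$$p^\star_d(x)=\phi(x)(1+o(1)),\qquad q^\star_d(x)\le C\,s^{-1}\phi(x/s)(1+o(1)).$$
Here the first uses $\mathbb P(\mathcal G)\to1$, and $s\in[s_-,s_+]$ by Eq.~\eqref{eq:proof-stage2-s-bounded}. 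Hence on $|x|\le d^\tau$ the likelihood ratio satisfies
$$\frac{q^\star_d}{p^\star_d}(x)\le C\exp\!\Big(\tfrac{x^2}{2}\big(1-s^{-2}\big)\Big).$$

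The main estimate is then a Hölder split. Write
$$\mathbb E_{Q_d}\big[f_d^2\mathbf 1_{|x|\le d^\tau}\big]\le \int f_d^2\,p^\star_d\,\frac{q^\star_d}{p^\star_d}\mathbf 1_{|x|\le d^\tau}+o(1),$$
and split the integral at a fixed $K$. On $|x|\le K$ the ratio is bounded by $C_K$, giving at most $C_K\,\mathbb E_{P_d}[f_d^2]\to0$. On $K<|x|\le d^\tau$, apply Hölder with exponents $m$ and $m'=m/(m-1)$:
$$\le\big(\mathbb E_{P_d}|f_d|^{2m}\big)^{1/m}\Big(\int_{|x|>K}\phi(x)\Big[C e^{x^2(1-s^{-2})/2}\Big]^{m'}dx\Big)^{1/m'}.$$
The first factor is uniformly bounded by hypothesis. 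The second integral converges as soon as $m'(1-s_+^{-2})<1$, which holds once $m$ is large, and it tends to $0$ as $K\to\infty$ uniformly in $d$. If $s\le1$ the ratio is bounded everywhere and this step is trivial. Letting $d\to\infty$ and then $K\to\infty$ gives Eq.~\eqref{eq:proof-stage2-transfer-conclusion}.

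The step I expect to be the main obstacle is making the density comparison uniform. We need $q^\star_d/p^\star_d\le C e^{x^2(1-s^{-2})/2}$ simultaneously for all $|x|\le d^\tau$ and all mixing scales in the good event. Near $|x|\sim d^\tau$ the $o(1)$ corrections in $(1-t^2/d)^{(d-3)/2}$ matter, and the perturbation $a\in[1-\delta_d,1+\delta_d]$ shifts the exponent by $O(\delta_d x^2)$. I would handle this by choosing $\tau<\beta/2$, as in Lemma~\ref{lem:proof-stage2-weighted-gaussian}, so that $\delta_d x^2=o(1)$ and $x^4/d=o(1)$ on that range. This makes both corrections multiplicatively $1+o(1)$; outside the range, the crude bound $|f_d|\le d^{C_f}$ together with superpolynomial tails suffices.
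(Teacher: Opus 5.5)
Your proposal is correct and follows essentially the same route as the paper. Both arguments discard the superpolynomially small bad events and the region $|x|>d^\tau$ via the bound $|f_d|\le d^{C_f}$. Both compare the densities to Gaussians uniformly on $|x|\le d^\tau$ (with $\tau<\beta/2$), and both control the central part by H\"older's inequality against the likelihood ratio $q/p\lesssim e^{x^2(1-s^{-2})/2}$.

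The only difference is where the smallness comes from. You split at a fixed $K$: on $|x|\le K$ you use the bounded ratio together with $\mathbb E_{P_d}[f_d^2]\to0$, and beyond $K$ you use the uniformly small tail of the ratio's $m'$-th moment. The paper instead bounds the ratio's $r$-th moment uniformly for $r$ near one. It then obtains smallness by interpolating $\|f_d\|_{L^{2r'}(P_d)}\to0$ from the $L^2$ convergence and the bounded higher moments.
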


\begin{proof}
Choose $0<\beta<\frac14\min\{\varepsilon,1\}$ and $
0<\tau<\min\{1/4,\beta/2\}$.
Let $\mathcal G_{d,n}=\{|A_{d,n}-1|\leq d^{-\beta}\}$.
Lemma~\ref{lem:proof-context-concentration} gives
\begin{equation}
\mathbb P(\mathcal G_{d,n}^c)\leq
C_Md^{-M}
\label{eq:bound-Gc}
\end{equation}
for every \(M<\infty\), uniformly for
\(n\in\{L,\widetilde L\}\).

Conditioned on the good event $\mathcal G_{d,n}$, the argument used in Lemma~\ref{lem:proof-stage2-weighted-gaussian} gives, uniformly on
\(|x|\leq d^\tau\),
\begin{equation}
p_d^{\rm g}(x)=\phi(x)(1+o(1)),
\qquad
q_d^{\rm g}(x)=\phi_{s_d}(x)(1+o(1)),
\end{equation}
where \(s_d=s_{d,L,\widetilde L}\).
Because \(s_d\in[s_-,s_+]\), one may choose \(r>1\), sufficiently close to one, such that
\begin{equation}
\sup_d\int_{|x|\leq d^\tau}
\left(\frac{q_d^{\rm g}(x)}{p_d^{\rm g}(x)}\right)^rp_d^{\rm g}(x)\,\mathrm dx
<\infty.
\label{eq:proof-stage2-likelihood-ratio}
\end{equation}
Let \(r'=r/(r-1)\), and choose \(m>2r'\). By Hölder's inequality,
\begin{equation}
\mathbb E_{P_d}|f_d|^{2r'}
\leq
\left(
\mathbb E_{P_d}|f_d|^2
\right)^{\frac{m-2r'}{m-2}}
\left(
\mathbb E_{P_d}|f_d|^m
\right)^{\frac{2r'-2}{m-2}}.
\label{eq:proof-stage2-transfer-interpolation}
\end{equation}
The first factor tends to zero by assumption, whereas the second factor
is uniformly bounded. Hence
\begin{equation}
\|f_d\|_{L^{2r'}(P_d)}
\longrightarrow0.
\label{eq:proof-stage2-transfer-higher-norm}
\end{equation}
Since \(\mathbb P(\mathcal G_{d,L})\to1\), the same convergence holds under the training law conditioned on the good event:
\begin{equation}
\begin{aligned}
\mathbb E_{P_d^{\rm g}}|f_d|^{2r'}=\frac{\mathbb E\!\left[|f_d(X_{d,L})|^{2r'}\mathbf 1_{\mathcal G_{d,L}}\right]}{\mathbb P(\mathcal G_{d,L})}\leq
\frac{\mathbb E_{P_d}|f_d|^{2r'}}{\mathbb P(\mathcal G_{d,L})}\longrightarrow0.
\end{aligned}
\label{eq:proof-stage2-transfer-good-higher-norm}
\end{equation}
Let $C_d:=\{|x|\leq d^\tau\}$. On this region,
\begin{equation}
\mathbb E_{Q_d^{\rm g}}\left[f_d^2\mathbf 1_{C_d}\right]=\int_{C_d}
f_d(x)^2\frac{q_d^{\rm g}(x)}{p_d^{\rm g}(x)}
p_d^{\rm g}(x)\,\mathrm dx.
\end{equation}
Applying Hölder's inequality:
\begin{align}
\mathbb E_{Q_d^{\rm g}}\left[f_d^2\mathbf 1_{C_d}\right]\leq\left[\int_{C_d}\left(\frac{q_d^{\rm g}(x)}{p_d^{\rm g}(x)}\right)^rp_d^{\rm g}(x)\,\mathrm dx\right]^{1/r}\times\left[\int_{C_d}|f_d(x)|^{2r'}p_d^{\rm g}(x)\,\mathrm dx\right]^{1/r'}.
\label{eq:proof-stage2-transfer-central-holder}
\end{align}
The first factor is uniformly bounded by Eq.~\eqref{eq:proof-stage2-likelihood-ratio}, while the second factor is bounded by
$\left[\mathbb E_{P_d^{\rm g}}|f_d|^{2r'}
\right]^{1/r'}\longrightarrow0$
according to Eq.~\eqref{eq:proof-stage2-transfer-good-higher-norm}. Therefore
\begin{equation}
\mathbb E_{Q_d^{\rm g}}
\left[
f_d^2\mathbf 1_{\{|x|\leq d^\tau\}}
\right]
\longrightarrow0.
\label{eq:proof-stage2-transfer-central}
\end{equation}

It remains to control the complement. On \(\mathcal G_{d,\widetilde L}\), we have $|A_{d,\widetilde L}-1|\leq d^{-\beta}$, and \(s_d\) is bounded above and below by positive constants. Hence,
for all sufficiently large \(d\),
$|s_dX_{d,\widetilde L}|>d^\tau\Longrightarrow|S_{d,\widetilde L}|>c_0d^\tau$
for some fixed \(c_0>0\). Since \(S_{d,\widetilde L}\) has the same distribution as \(G_d\), and \(p_d(t)\leq C\phi(t)\),
\begin{equation}
Q_d^{\rm g}(|x|>d^\tau)\leq
C\exp(-c_{tail}d^{2\tau})
\label{eq:proof-stage2-transfer-tail-probability}
\end{equation}
for some \(c_{tail}>0\). Using \(|f_d(x)|\leq d^{C_f}\) on the support of \(Q_d\), we therefore obtain
\begin{equation}
\mathbb E_{Q_d^{\rm g}}\left[f_d^2\mathbf 1_{\{|x|>d^\tau\}}\right]\leq
d^{2C_f}Q_d^{\rm g}(|x|>d^\tau)\longrightarrow0.
\label{eq:proof-stage2-transfer-tail}
\end{equation}
Finally, on the bad event,
\begin{equation}
\mathbb E
\left[
f_d(s_dX_{d,\widetilde L})^2
\mathbf 1_{\mathcal G_{d,\widetilde L}^c}
\right]
\leq
d^{2C_f}
\mathbb P(\mathcal G_{d,\widetilde L}^c).
\end{equation}
The right-hand side tends to zero after choosing \(M>2C_f\) and using \eqref{eq:bound-Gc}. Combining this estimate with Eqs.~\eqref{eq:proof-stage2-transfer-central} and \eqref{eq:proof-stage2-transfer-tail} proves
\begin{equation}
\mathbb E_{Q_d}[f_d^2]\longrightarrow0.
\end{equation}
\end{proof}

We apply Lemma~\ref{lem:proof-stage2-transfer} to
\begin{equation}
f_d(x):=g_{d,L}(x)-\Phi_{\rho_{d,L}}(x).
\end{equation}
Equation~\eqref{eq:proof-stage2-training-regression-convergence}
gives the required \(L^2(P_d)\) convergence.
Jensen's inequality gives
\begin{equation}
\mathbb E_{P_d}|g_{d,L}(X_{d,L})|^m\leq\mathbb E|Y_d|^m\leq C_m,
\end{equation}
while Eq.~\eqref{eq:proof-stage2-h-growth} and the moment bounds for \(X_{d,L}\) give the corresponding estimate for
\(\Phi_{\rho_{d,L}}(X_{d,L})\).
Finally, Eq.~\eqref{eq:proof-stage2-regression-bound} and the polynomial growth of \(\Phi_\rho\) give the required polynomial pointwise bound. Hence, all the conditions in Lemma~\ref{lem:proof-stage2-transfer} are satisfied and we have
\begin{equation}
\mathbb E\left[\left(g_{d,L}(s_{d,L,\widetilde L}X_{d,\widetilde L})-
\Phi_{\rho_{d,L}}(s_{d,L,\widetilde L}X_{d,\widetilde L})\right)^2\right]\longrightarrow0.
\label{eq:proof-stage2-inference-transfer}
\end{equation}

\subsubsection{Reduction of the inference error to a Gaussian channel}
Define $F_d(t):=\Phi_{\rho_{d,L}}\left(s_{d,L,\widetilde L}t\right)$. Equation~\eqref{eq:proof-stage2-inference-transfer}, together with Cauchy--Schwarz and the uniform moment bounds, gives
\begin{equation}
\begin{aligned}
&\left|\mathcal E_{d,L,\widetilde L}^{\mathrm{feature}}-\mathbb E\!\left[\left(Y_d-F_d(X_{d,\widetilde L})\right)^2\right]\right|
\\
&\qquad\le\left\|g_{d,L}(s_{d,L,\widetilde L}X_{d,\widetilde L})-F_d(X_{d,\widetilde L})\right\|_{L^2}\times\left(2\|Y_d\|_{L^2}+\left\|g_{d,L}(s_{d,L,\widetilde L}X_{d,\widetilde L})
\right\|_{L^2}+\left\|F_d(X_{d,\widetilde L})\right\|_{L^2}\right)=o(1).
\end{aligned}
\label{eq:proof-stage2-error-Fd}
\end{equation}
Write
$A_d^{\rm te}:=A_{d,\widetilde L}$,
$S_d^{\rm te}:=S_{d,\widetilde L}$,
$q_d^{\rm te}:=q_{d,\widetilde L}$ and $\rho_d^{\rm te}:=\rho_{d,\widetilde L}$. Lemma~\ref{lem:proof-stage2-weighted-gaussian} applied with \(f_d=F_d^2\) gives
\begin{equation}
\mathbb E[F_d(X_{d,\widetilde L})^2]=\mathbb E[F_d(Z)^2]+o(1).
\label{eq:proof-stage2-test-second}
\end{equation}
For the cross term, the spherical projection formula in Lemma \ref{lem:proof-spherical} gives
\begin{equation}
\mathbb E[Y_dF_d(X_{d,\widetilde L})]=\mathbb E\left[\sum_{k\geq0}
c_{k,d}Q_{k,d}(q_d^{\rm te})J_{k,d}^{\rm te}(A_d^{\rm te})\right],
\label{eq:proof-stage2-test-cross-expansion}
\end{equation}
where $J_{k,d}^{\rm te}(a):=\mathbb E_{S\sim G_d}[P_{k,d}(S)F_d(aS)]$.
For every fixed \(k\),
Lemma~\ref{lem:proof-stage2-weighted-gaussian} gives
\begin{equation}
J_{k,d}^{\rm te}(A_d^{\rm te})-\mathbb E[H_k(Z)F_d(Z)]
\longrightarrow0,
\end{equation}
while $Q_{k,d}(q_d^{\rm te})-(\rho_d^{\rm te})^k
\longrightarrow0$ by Lemma \ref{lem:proof-spherical}.
After truncating the Gegenbauer expansion (as in Appendix \ref{app:stage-II-regression}) and using Eq.~\eqref{eq:proof-l1-spectrum-convergence}, we obtain
\begin{equation}
\mathbb E[Y_dF_d(X_{d,\widetilde L})]=\sum_{k\geq0}c_k(\rho_d^{\rm te})^k\mathbb E[H_k(Z)F_d(Z)]+o(1).
\label{eq:proof-stage2-test-cross-limit}
\end{equation}
Let
$T_d^{\rm te}:=\rho_d^{\rm te}G+\sqrt{1-(\rho_d^{\rm te})^2}\,Z$ with
$G,Z\overset{\rm iid}{\sim}\mathcal N(0,1)$.
By Mehler's identity,
\begin{equation}
\mathbb E[\sigma_\star(G)F_d(T_d^{\rm te})]=\sum_{k\geq0}c_k(\rho_d^{\rm te})^k\mathbb E[H_k(Z)F_d(Z)].
\label{eq:proof-stage2-test-cross}
\end{equation}
Since \(T_d^{\rm te}\sim\mathcal N(0,1)\),
Eqs.~\eqref{eq:proof-stage2-error-Fd}, \eqref{eq:proof-stage2-test-second} and \eqref{eq:proof-stage2-test-cross-limit}, \eqref{eq:proof-stage2-test-cross} therefore imply
\begin{equation}
\mathcal E_{d,L,\widetilde L}^{\mathrm{feature}}=\mathbb E_{G,Z}\left[\left(\sigma_\star(G)-\Phi_{\rho_{d,L}}\left(s_{d,L,\widetilde L}T_d^{\rm te}\right)\right)^2\right]+o(1).
\label{eq:proof-stage2-finite-gaussian-channel}
\end{equation}

\subsubsection{Continuity of the Gaussian-channel error}
\label{app:feature-stage2-continuity}

It remains to justify replacing the finite-dimensional parameters
$\rho_{d,L},\rho_{d,\widetilde L},s_{d,L,\widetilde L}$ by their limiting counterparts
$\rho_\alpha,\rho_{\widetilde\alpha},s_{\alpha,\widetilde\alpha}$. Note that \(\sigma_\star\) might not be continuous. We therefore establish continuity of the Gaussian channel.

We first record a simple approximation lemma. 
\begin{lemma}
\label{lem:proof-gaussian-perturbation}
Let \(f:\mathbb R\to\mathbb R\) be measurable and suppose that, for some \(C,p<\infty\),
\begin{equation}
|f(x)|\leq C(1+|x|^p),
\qquad x\in\mathbb R.
\label{eq:proof-gaussian-perturbation-growth}
\end{equation}
Let \(X_n,X\) be centered Gaussian random variables such that $X_n\longrightarrow X$ in $L^2$ and suppose that
\(\operatorname{Var}(X)>0\).
Then
\begin{equation}
\mathbb E\left[|f(X_n)-f(X)|^2\right]
\longrightarrow0.
\label{eq:proof-gaussian-perturbation-conclusion}
\end{equation}
\end{lemma}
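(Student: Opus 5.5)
The plan is to reduce the statement to a standard density/approximation argument in \(L^2(\gamma)\), where \(\gamma\) denotes the law of \(X\). Write \(X\sim\mathcal N(0,\sigma^2)\) with \(\sigma^2>0\) and \(X_n\sim\mathcal N(0,\sigma_n^2)\). Since \(X_n\to X\) in \(L^2\), we have \(\sigma_n\to\sigma\), so for \(n\) large \(\sigma_n\in[\sigma/2,2\sigma]\). Only the marginal laws matter for \(\mathbb E|f(X_n)|^2\) and \(\mathbb E|f(X)|^2\), but the cross term \(\mathbb E[f(X_n)f(X)]\) depends on the coupling. I would therefore split the quantity through a continuous approximant rather than expanding it directly.

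First I fix \(\varepsilon>0\) and choose a bounded continuous (even compactly supported) function \(g\) such that
\[
\int |f-g|^2(1+|x|^{2p})\,\phi_{2\sigma}(x)\,\mathrm dx<\varepsilon ,
\]
where \(\phi_{2\sigma}\) is the \(\mathcal N(0,4\sigma^2)\) density. Such a \(g\) exists because \(C_c\) is dense in \(L^2\) of any finite Borel measure on \(\mathbb R\), and the growth condition makes \(f\in L^2(\phi_{2\sigma}\,\mathrm dx)\). The key observation is that every \(\mathcal N(0,\tau^2)\) density with \(\tau\in[\sigma/2,2\sigma]\) satisfies
\[
\phi_\tau\le (2\sigma/\tau)\,\phi_{2\sigma}\le 4\,\phi_{2\sigma} .
\]
Hence \(\mathbb E|f(X_n)-g(X_n)|^2\le 4\varepsilon\) uniformly in large \(n\), and \(\mathbb E|f(X)-g(X)|^2\le 4\varepsilon\). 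This domination of densities is the step where \(\operatorname{Var}(X)>0\) is used essentially; without it the densities could degenerate. I expect this to be the main point, and it is the reason the argument works even for discontinuous \(f\).

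For the continuous bounded piece, \(X_n\to X\) in probability and \(g\) is bounded and continuous, so \(g(X_n)\to g(X)\) in probability. Bounded convergence then gives \(\mathbb E|g(X_n)-g(X)|^2\to0\). Finally,
\[
\|f(X_n)-f(X)\|_{L^2}\le\|f(X_n)-g(X_n)\|_{L^2}+\|g(X_n)-g(X)\|_{L^2}+\|g(X)-f(X)\|_{L^2},
\]
so
\[
\limsup_n\|f(X_n)-f(X)\|_{L^2}\le 4\sqrt{\varepsilon}.
\]
Letting \(\varepsilon\to0\) concludes the proof. The only routine checks are the density comparison constant and the measurability of \(f(X_n)\); no use of the joint Gaussianity beyond the marginals and the \(L^2\) (hence in-probability) convergence is needed.
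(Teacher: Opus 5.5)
Your proof is correct and has the same skeleton as the paper's: approximate $f$ by a continuous compactly supported $g$, control $\|f-g\|_{L^2}$ uniformly under the laws of $X_n$ and $X$ by comparing Gaussian densities, use bounded convergence in probability for $g$, and conclude with the three-term triangle inequality.

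The one real difference is how you compare densities. The paper uses $\mathcal N(0,v)$ itself as the reference measure. Since $\phi_w/\phi_v$ is unbounded on $\mathbb R$ for $w>v$ (variance parametrization), it has to truncate to $[-R,R]$, bound the ratio there by a constant $C_R$, and use the polynomial growth of $f$ to make the contribution from $|x|>R$ small.

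You instead take the widest admissible Gaussian as reference. In your standard-deviation parametrization, the global bound $\phi_\tau\le(2\sigma/\tau)\phi_{2\sigma}\le 4\phi_{2\sigma}$ for $\tau\in[\sigma/2,2\sigma]$ removes any need for truncation or tail estimates. In fact, the weight $1+|x|^{2p}$ in your choice of $g$ is superfluous, and your argument only uses $f\in L^2(\mathcal N(0,4\sigma^2))$, not the growth bound itself.

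This also simplifies the bookkeeping. In the paper, $C_R\ge\tfrac{1}{\sqrt2}e^{R^2/(4v)}$ grows with $R=R(\eta)$. So the closing bound $C_R''\eta$ only tends to zero if $g$ is chosen to accuracy $\eta/C_R$ rather than $\eta$; this is a harmless fix, since $g$ is chosen after $R$. Your constant $4$ does not depend on $\varepsilon$, so this issue never arises.
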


\begin{proof}
Write $v_n:=\operatorname{Var}(X_n)$ and $v:=\operatorname{Var}(X)>0$.
Then \(v_n\to v\).  Hence, for all sufficiently large \(n\),
$v_n\in[v/2,2v]$. Let \(\phi_w\) denote the density of
\(\mathcal N(0,w)\).

Fix \(\eta>0\).  We may choose \(R<\infty\) such that
\begin{equation}
\sup_{w\in[v/2,2v]}\int_{|x|>R}(1+|x|^{2p})\phi_w(x)\,\mathrm dx<\eta.
\label{eq:proof-gaussian-perturbation-tail}
\end{equation}
Since continuous compactly supported functions are dense in
\(L^2(\mathcal N(0,v))\), we may choose a continous function $g$ with support contained in \((-R,R)\), such that
\begin{equation}
\int_{-R}^R|f(x)-g(x)|^2\phi_v(x)\,\mathrm dx<\eta.
\label{eq:proof-gaussian-perturbation-approx}
\end{equation}
On the compact interval \([-R,R]\), the density ratios
\(\phi_w/\phi_v\) are uniformly bounded for
\(w\in[v/2,2v]\).  Therefore
\begin{equation}
\sup_{n\ {\rm sufficiently\ large}}\int_{-R}^R
|f(x)-g(x)|^2\phi_{v_n}(x)\,\mathrm dx\leq C_R\eta.
\label{eq:proof-gaussian-perturbation-local-transfer}
\end{equation}
Combining
Eqs.~\eqref{eq:proof-gaussian-perturbation-tail} and
\eqref{eq:proof-gaussian-perturbation-local-transfer}, and using the
polynomial growth of \(f\), gives
\begin{equation}
\limsup_{n\to\infty}
\mathbb E|f(X_n)-g(X_n)|^2
\leq C_R'\eta.
\label{eq:proof-gaussian-perturbation-fg}
\end{equation}
The same argument gives
\begin{equation}
\mathbb E|f(X)-g(X)|^2
\leq C_R'\eta.
\label{eq:proof-gaussian-perturbation-fg-limit}
\end{equation}

On the other hand, \(X_n\to X\) in probability  Since \(g\) is bounded and continuous, $g(X_n)\longrightarrow g(X)$ in probability, and boundedness of \(g\) implies
\begin{equation}
\mathbb E|g(X_n)-g(X)|^2\longrightarrow0.
\label{eq:proof-gaussian-perturbation-g}
\end{equation}
Consequently,
\begin{align}
\limsup_{n\to\infty}\mathbb E|f(X_n)-f(X)|^2
&\leq
3\limsup_{n\to\infty}\mathbb E|f(X_n)-g(X_n)|^2+
3\limsup_{n\to\infty}\mathbb E|g(X_n)-g(X)|^2+3\mathbb E|g(X)-f(X)|^2
\nonumber\\
&\leq
C_R''\eta.
\end{align}
Since \(\eta>0\) is arbitrary, this proves
Eq.~\eqref{eq:proof-gaussian-perturbation-conclusion}.
\end{proof}

We now apply the preceding lemma to the Gaussian channel that appears
in the Stage-II test error.  Recall
$\Phi_\rho(x):=\mathbb E_W\left[
\sigma_\star\left(\rho x+\sqrt{1-\rho^2}\,W\right)\right]$, where $W\sim\mathcal N(0,1)$.
For $(\rho,u,s)\in[-1,1]^2\times(0,\infty)$,
define
\begin{equation}
T_u:=uG+\sqrt{1-u^2}\,Z,
\qquad
G,Z\overset{\rm iid}{\sim}\mathcal N(0,1),
\label{eq:proof-continuity-Tu}
\end{equation}
and define the corresponding error
\begin{equation}
\mathscr E(\rho,u,s):=\mathbb E_{G,Z}\left[
\left(\sigma_\star(G)-\Phi_\rho(sT_u)\right)^2\right].
\label{eq:proof-continuity-channel-error}
\end{equation}

\begin{lemma}
\label{lem:proof-continuity-channel-error}
The map
$(\rho,u,s)\longmapsto\mathscr E(\rho,u,s)$
is continuous on $[-1,1]^2\times(0,\infty)$.
In particular, it is uniformly continuous on $[-1,1]^2\times[s_-,s_+],0<s_-<s_+<\infty$.
\end{lemma}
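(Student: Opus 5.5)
The plan is to write $\mathscr E(\rho,u,s)=\mathbb E[\sigma_\star(G)^2]-2\,\mathbb E[\sigma_\star(G)\Phi_\rho(sT_u)]+\mathbb E[\Phi_\rho(sT_u)^2]$ and prove that the last two terms are continuous. The first term equals $1$. Since $T_u\sim\mathcal N(0,1)$ for every $u$, we have $sT_u\sim\mathcal N(0,s^2)$. The quadratic term is therefore $\int\Phi_\rho(x)^2\phi_{s^2}(x)\,\mathrm dx$, and the cross term is $\mathbb E[\sigma_\star(G)\Phi_\rho(sT_u)]$, which depends on $(u,s)$ only through the joint Gaussian law of $(G,sT_u)$.

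\textbf{Step 1: continuity of $\rho\mapsto\Phi_\rho$ in weighted $L^2$.} First observe that $\Phi_\rho(x)=\mathbb E[\sigma_\star(\rho x+\sqrt{1-\rho^2}W)]$, and that $|\Phi_\rho(x)|\le C(1+|x|^p)$ uniformly in $\rho\in[-1,1]$ by Assumption~\ref{ass:rigorous-teacher}. Fix a compact range $s\in[s_-,s_+]$ and take $\rho_n\to\rho$. I want to show $\Phi_{\rho_n}\to\Phi_\rho$ in $L^2(\mathcal N(0,v))$ uniformly in $v\in[s_-^2,s_+^2]$. For this, write $\Phi_{\rho_n}(x)-\Phi_\rho(x)=\mathbb E_W[\sigma_\star(X_n)-\sigma_\star(X)]$ with $X_n=\rho_nx+\sqrt{1-\rho_n^2}W$ and $X=\rho x+\sqrt{1-\rho^2}W$. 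By Jensen, $\mathbb E_x|\Phi_{\rho_n}-\Phi_\rho|^2\le\mathbb E_{x,W}|\sigma_\star(X_n)-\sigma_\star(X)|^2$. Here $X_n,X$ are centered jointly Gaussian with $X_n\to X$ in $L^2$, and $\operatorname{Var}(X)=\rho^2v+1-\rho^2\ge\min(v,1)>0$. Lemma~\ref{lem:proof-gaussian-perturbation} then gives the convergence. The same lemma also handles varying $v$: replace $x$ by $s_nT$ with $T$ a fixed standard Gaussian.

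\textbf{Step 2: joint continuity.} Take $(\rho_n,u_n,s_n)\to(\rho,u,s)$ with $s>0$, and realize everything on one probability space: $G,Z,W$ fixed, $T_{u_n}=u_nG+\sqrt{1-u_n^2}Z$. Split
\[
\Phi_{\rho_n}(s_nT_{u_n})-\Phi_\rho(sT_u)=\bigl[\Phi_{\rho_n}(s_nT_{u_n})-\Phi_\rho(s_nT_{u_n})\bigr]+\bigl[\Phi_\rho(s_nT_{u_n})-\Phi_\rho(sT_u)\bigr].
\]
The first bracket tends to $0$ in $L^2$ by Step 1, since its law only involves $\mathcal N(0,s_n^2)$ with $s_n^2$ in a compact set. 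The second bracket tends to $0$ in $L^2$ by Lemma~\ref{lem:proof-gaussian-perturbation} applied to $f=\Phi_\rho$, which has polynomial growth, with $X_n=s_nT_{u_n}\to sT_u$ in $L^2$ and $\operatorname{Var}(sT_u)=s^2>0$. Hence $\Phi_{\rho_n}(s_nT_{u_n})\to\Phi_\rho(sT_u)$ in $L^2$. Since $\sigma_\star(G)\in L^2$, Cauchy--Schwarz gives convergence of both the cross term and the quadratic term, so $\mathscr E$ is continuous. Uniform continuity on the compact set $[-1,1]^2\times[s_-,s_+]$ then follows from Heine--Cantor.

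The main obstacle is Step 1 at the endpoints $|\rho|=1$. There the smoothing by $W$ disappears and $\sigma_\star$ may be discontinuous, so pointwise convergence of $\Phi_{\rho_n}$ can fail. This is exactly why I route the argument through the $L^2$ perturbation lemma rather than through pointwise limits: that lemma needs only nondegeneracy of the limiting variance, which holds since $v>0$. I also need to check that the polynomial moment bounds are uniform in $n$ so that the Cauchy--Schwarz step is justified. This follows from the Gaussian moments and the bound $s_n\le s_+$.
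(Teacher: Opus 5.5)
Your proof is correct and uses the same ingredients as the paper: Jensen's inequality conditional on the outer Gaussians, Lemma~\ref{lem:proof-gaussian-perturbation} to handle a possibly discontinuous $\sigma_\star$ (including the endpoints $|\rho|=1$), and then Cauchy--Schwarz and compactness. The paper avoids your two-step split by coupling all three parameters at once, $X_n=\rho_ns_nT_{u_n}+\sqrt{1-\rho_n^2}\,W$ with $\mathbb E[\sigma_\star(X_n)\mid G,Z]=\Phi_{\rho_n}(s_nT_{u_n})$, so a single application of the lemma to $f=\sigma_\star$ suffices; your ``varying $v$'' remark in Step~1 implicitly needs a triangle inequality through the common limit $\rho sT+\sqrt{1-\rho^2}\,W$, which that coupling makes unnecessary.
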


\begin{proof}
Let $(\rho_n,u_n,s_n)\longrightarrow
(\rho,u,s)$ with $s>0$. Introduce an additional independent
\(W\sim\mathcal N(0,1)\), and define
\begin{equation}
X_n:=\rho_ns_n\left(u_nG+\sqrt{1-u_n^2}\,Z\right)+\sqrt{1-\rho_n^2}\,W,
\label{eq:proof-continuity-Xn}
\end{equation}
and
\begin{equation}
X:=\rho s\left(uG+\sqrt{1-u^2}\,Z\right)+\sqrt{1-\rho^2}\,W.
\label{eq:proof-continuity-X}
\end{equation}
All coefficients in
Eq.~\eqref{eq:proof-continuity-Xn}
converge to the corresponding coefficients in
Eq.~\eqref{eq:proof-continuity-X}.  Hence
\begin{equation}
X_n\longrightarrow X
\qquad\text{in }L^2.
\label{eq:proof-continuity-X-L2}
\end{equation}
Moreover, $\operatorname{Var}(X)=1-\rho^2+\rho^2s^2>0$ because \(s>0\).

Applying Lemma~\ref{lem:proof-gaussian-perturbation}
with \(f=\sigma_\star\) gives
\begin{equation}
\mathbb E\left[|\sigma_\star(X_n)-\sigma_\star(X)|^2\right]
\longrightarrow0.
\label{eq:proof-continuity-sigma-X}
\end{equation}
Conditioning on \(G,Z\), we have
\begin{equation}
\mathbb E[\sigma_\star(X_n)\mid G,Z]=\Phi_{\rho_n}(s_nT_{u_n}),
\qquad
\mathbb E[\sigma_\star(X)\mid G,Z]=\Phi_\rho(sT_u).
\label{eq:proof-continuity-conditional}
\end{equation}
Therefore, Jensen's inequality and Eq.~\eqref{eq:proof-continuity-sigma-X} imply
\begin{equation}
\left\|\Phi_{\rho_n}(s_nT_{u_n})-\Phi_\rho(sT_u)
\right\|_{L^2}\longrightarrow0.
\label{eq:proof-continuity-Phi-L2}
\end{equation}
The polynomial growth of \(\sigma_\star\) (Assumption \ref{ass:rigorous-teacher}), together with the boundedness of the variances of \(X_n\), also gives
\begin{equation}
\sup_n\left\|\Phi_{\rho_n}(s_nT_{u_n})\right\|_{L^2}<\infty,
\label{eq:proof-continuity-uniform-Phi}
\end{equation}
where we again use Jensen's inequality.
Hence
\begin{align}
&\left|\mathscr E(\rho_n,u_n,s_n)-\mathscr E(\rho,u,s)\right|\nonumber\\
&\qquad\leq
\left\|\Phi_{\rho_n}(s_nT_{u_n})-\Phi_\rho(sT_u)
\right\|_{L^2}\Big(2\|\sigma_\star(G)\|_{L^2}+\|\Phi_{\rho_n}(s_nT_{u_n})\|_{L^2}+\|\Phi_\rho(sT_u)\|_{L^2}
\Big)\longrightarrow0.
\label{eq:proof-continuity-error}
\end{align}
This proves continuity.

Uniform continuity on
\([-1,1]^2\times[s_-,s_+]\)
then follows from compactness.
\end{proof}

We finally specialize this continuitt to our problem.  Recall that
$s_{d,L,\widetilde L}=\frac{B_{d,\widetilde L}}{B_{d,L}}$ and $s_{\alpha,\widetilde\alpha}=\sqrt{\frac{c_1^2+1/\widetilde\alpha}{c_1^2+1/\alpha}}$. Writing $r:=\frac{L}{\widetilde L}=\frac{\alpha}{\widetilde\alpha}$, we have
\begin{equation}
s_{d,L,\widetilde L}^2=\frac{\mu_d^2+r\,dv_d/L}{\mu_d^2+dv_d/L},
\qquad
s_{\alpha,\widetilde\alpha}^2=\frac{c_1^2+r/\alpha}{c_1^2+1/\alpha}.
\label{eq:proof-continuity-s-bounds}
\end{equation}
Each ratio in Eq.~\eqref{eq:proof-continuity-s-bounds}
is a convex combination of \(1\) and \(r\).  Therefore,
Assumption~\ref{ass:rigorous-limit} implies that there exist constants \(0<s_-<s_+<\infty\) such that, for all sufficiently large \(d\),
$s_- \leq s_{d,L,\widetilde L},
s_{\alpha,\widetilde\alpha}\leq s_+$.

By Lemma~\ref{lem:proof-stage2-parameters},
\begin{equation}
\left|\rho_{d,L}-\rho_\alpha\right|+\left|\rho_{d,\widetilde L}-\rho_{\widetilde\alpha}\right|+
\left|s_{d,L,\widetilde L}-s_{\alpha,\widetilde\alpha}\right|
\longrightarrow0.
\label{eq:proof-continuity-parameter-distance}
\end{equation}
Both parameter triples therefore lie in the compact set
$[-1,1]^2\times[s_-,s_+]$ for all sufficiently large \(d\).  Uniform continuity from Lemma~\ref{lem:proof-continuity-channel-error} and
Eq.~\eqref{eq:proof-continuity-parameter-distance} yield
\begin{equation}
\begin{aligned}
\mathbb E_{G,Z}\left[\left(\sigma_\star(G)-\Phi_{\rho_{d,L}}\left(s_{d,L,\widetilde L}T_d^{\rm te}\right)\right)^2\right]=
\mathbb E_{G,Z}
\left[\left(\sigma_\star(G)-\Phi_{\rho_\alpha}\left(s_{\alpha,\widetilde\alpha}
T_{\widetilde\alpha}\right)\right)^2\right]+o(1),
\end{aligned}
\label{eq:proof-stage2-parameter-replacement}
\end{equation}
where
$T_d^{\rm te}=\rho_{d,\widetilde L}G+\sqrt{1-\rho_{d,\widetilde L}^2}\,Z$ and  $T_{\widetilde\alpha}=\rho_{\widetilde\alpha}G+\sqrt{1-\rho_{\widetilde\alpha}^2}\,Z$.

\subsubsection{Final proof of Theorem \ref{thm:population_scaling}}
Combining
Eqs.~\eqref{eq:proof-stage2-finite-gaussian-channel} and
\eqref{eq:proof-stage2-parameter-replacement} gives
\begin{equation}
\mathcal E_{d,L,\widetilde L}^{\mathrm{feature}}=\mathbb E_{G,Z}
\left[\left(\sigma_\star(G)-\Phi_{\rho_\alpha}
\left(s_{\alpha,\widetilde\alpha}T_{\widetilde\alpha}\right)\right)^2
\right]+o_{d,L,\widetilde L}(1).
\label{eq:proof-feature-final-channel}
\end{equation}
Define $F_{\alpha,\widetilde\alpha}(t):=\Phi_{\rho_\alpha}
\left(s_{\alpha,\widetilde\alpha}t\right)$.
Under assumption \ref{ass:rigorous-teacher}, \(F_{\alpha,\widetilde\alpha}\in L^2(\mathcal N(0,1))\).
Let
\begin{equation}
A_\ell(\alpha,\widetilde\alpha):=\mathbb E_T[F_{\alpha,\widetilde\alpha}(T)H_\ell(T)],
\qquad
T\sim\mathcal N(0,1).
\end{equation}
Then $F_{\alpha,\widetilde\alpha}(T)=\sum_{\ell\geq0}
A_\ell(\alpha,\widetilde\alpha)H_\ell(T)$ in $L^2$.

On the other hand, the Bayes-optimal predictor is
\begin{equation}
m_{\widetilde\alpha}(T_{\widetilde\alpha}):=\mathbb E[
\sigma_\star(G)\mid
T_{\widetilde\alpha}]=\sum_{\ell\geq0}c_\ell\rho_{\widetilde\alpha}^{\ell}H_\ell(T_{\widetilde\alpha}),
\end{equation}
where \(c_0=0\). The orthogonal-projection identity gives
\begin{equation}
\begin{aligned}
\mathbb E\left[\left(\sigma_\star(G)-F_{\alpha,\widetilde\alpha}(T_{\widetilde\alpha})
\right)^2\right]=\mathbb E\left[\left(\sigma_\star(G)-m_{\widetilde\alpha}(T_{\widetilde\alpha})\right)^2
\right]+\mathbb E\left[\left(m_{\widetilde\alpha}(T_{\widetilde\alpha})-F_{\alpha,\widetilde\alpha}(T_{\widetilde\alpha})\right)^2\right].
\end{aligned}
\end{equation}
By Mehler's identity and Parseval's identity,
\begin{equation}
\mathbb E\left[\left(\sigma_\star(G)-m_{\widetilde\alpha}(T_{\widetilde\alpha})\right)^2\right]=1-\sum_{k\geq1}
c_k^2r_{\widetilde\alpha}^k,
\end{equation}
and
\begin{equation}
\mathbb E\left[\left(m_{\widetilde\alpha}(T_{\widetilde\alpha})-F_{\alpha,\widetilde\alpha}(T_{\widetilde\alpha})\right)^2\right]=
\sum_{\ell\geq0}\left[A_\ell(\alpha,\widetilde\alpha)-c_\ell\rho_{\widetilde\alpha}^{\ell}\right]^2.
\end{equation}
Substituting these identities into
Eq.~\eqref{eq:proof-feature-final-channel} yields
\begin{equation}
\mathcal E_{d,L,\widetilde L}^{\mathrm{feature}}=1-
\sum_{k\geq1}c_k^2r_{\widetilde\alpha}^k+\sum_{\ell\geq0}
\left[A_\ell(\alpha,\widetilde\alpha)-c_\ell\rho_{\widetilde\alpha}^{\ell}\right]^2+o_{d,L,\widetilde L}(1).
\label{eq:proof-feature-final}
\end{equation}
This proves the feature-learner part of
Theorem~\ref{thm:population_scaling}.

\section{Declaration of LLM usage} \label{app:LLMusage}
We used LLMs in the following stages of the preparation of this work: editing (e.g., grammar, spelling, word choice), drafting sections of the paper, assisting in the writing of proofs, facilitating or running experiments, visualizing results for submission. All LLM-generated material used in this manuscript was manually verified by the authors to ensure that it performs as intended, in particular with regard to the derivation, proof and the numerical experiments.
\end{document}